\documentclass[a4paper]{elsarticle}
\usepackage{graphicx}
\usepackage{mathrsfs}
\usepackage{multirow}
\usepackage{subfigure}
\usepackage{epstopdf}
\usepackage{amssymb}
\usepackage{amsmath}
\usepackage{amsthm}
\usepackage{color}
\usepackage{CJK,CJKnumb}
\usepackage{algorithm}
\usepackage{algpseudocode}
\usepackage{ulem}
\usepackage{natbib}


\newtheorem{Def}{Definition}[section]

\newtheorem{Theo}{Theorem}[section]
\newtheorem{lemma}{Lemma}[section]

\begin{document}
\begin{CJK*}{UTF8}{gbsn}

  \title{Positive region preserved random sampling: an efficient feature selection method for massive data}
  \author[sxu]{Hexiang Bai\corref{cor}}
  \cortext[cor]{Corresponding author. Tel.: +86 351 7010566; fax:+86 351
  7018176. E-mail: baihx@sxu.edu.cn, bai\_research@163.com.}
  \author[sxu,sxlab]{Deyu Li}
  \author[sxu,sxlab]{Jiye Liang}
  \author[sxu]{Yanhui Zhai}
  \address[sxu]{School of Computer and Information Technology, Shanxi University, Taiyuan, 030006 Shanxi, China.}
  \address[sxlab]{Key Laboratory of Computational Intelligence and Chinese Information Processing of Ministry of Education, Shanxi University, Taiyuan, 030006 Shanxi, China.}

  \begin{abstract}
    Selecting relevant features is an important and necessary step for intelligent machines to maximize their chances of success. However, intelligent machines generally have no enough computing resources when faced with huge volume of data. This paper develops a new method based on sampling techniques and rough set theory to address the challenge of feature selection for massive data. To this end, this paper proposes using the ratio of discernible object pairs to all object pairs that should be distinguished to measure the discriminatory ability of a feature set. Based on this measure, a new feature selection method is proposed. This method constructs positive region preserved samples from massive data to find a feature subset with high discriminatory ability. Compared with other methods, the proposed method has two advantages. First, it is able to select a feature subset that can preserve the discriminatory ability of all the features of the target massive data set within an acceptable time on a personal computer. Second, the lower boundary of the probability of the object pairs that can be discerned using the feature subset selected in all object pairs that should be distinguished can be estimated before finding reducts. Furthermore, 11 data sets of different sizes were used to validate the proposed method. The results show that approximate reducts can be found in a very short period of time, and the discriminatory ability of the final reduct is larger than the estimated lower boundary. Experiments on four large-scale data sets also showed that an approximate reduct with high discriminatory ability can be obtained in reasonable time on a personal computer.
  \end{abstract}
  \begin{keyword}
    feature selection \sep reduct \sep rough set theory \sep massive data sets \sep nested sampling
  \end{keyword}
  \maketitle
\section{Introduction}
\label{sec:intruduction}

In many fields of artificial intelligence, it is important to remove redundant features and irrelevant features for intelligent machines to complete their tasks because redundant features and irrelevant features may deteriorate the performance of the classifier used~\citep{dash_consistency-based_2003}. By carefully selecting a small number of features, an intelligent machine can apply more constrained statistical models with fewer parameters and improve the quality of inference~\citep{FrontiersinMassiveDataAnalysis}. However, the time cost used by most available feature selection methods depends on the scale of target data sets. With the development of ``big data'', the scale of data sets become larger and larger and traditional feature selection methods become inefficient in dealing with these data sets.

Currently, there are two popular dimension reduction methods: feature selection and feature extraction. Compared with the feature extraction methods, feature selection methods do not construct new features from the original feature set, and the features selected have clear physical meaning and are easy to explain. Feature selection methods can be divided into two categories: wrapper and filter approaches~\citep{kohavi_wrappers_1997}. In the wrapper approach, the feature subset selection algorithm searches for a good subset using the induction algorithm itself as part of the function that evaluates the feature subsets~\citep{kohavi_wrappers_1997}. The wrapper approach takes into account the induction bias of the classifier. Accordingly, the classification result can have relatively high classification accuracy using the feature subset selected. However, the wrapper approach estimates the classification error probability of the classifier for each candidate feature subset~\citep{Theodoridis_2008}. Accordingly, it is very slow when dealing with large-scale data sets~\citep{kohavi_wrappers_1997}.

The filter approach ignores the effects of the selected feature subset on the performance of the induction algorithm~\citep{kohavi_wrappers_1997} and directly selects features according to certain measures of the attributes. The most commonly used measures can be divided into four different categories: distance measures~\citep{Narendra1977}, information measures~\citep{Bell2000}, dependence measures~\citep{Mucciardi1971} and consistency measures~\citep{Almuallim1994279, dash_consistency-based_2003}. A consistency measure does not attempt to maximize the class separability but tries to retain the discriminating power of the data defined by the original features~\citep{dash_consistency-based_2003}. \citet{dash_consistency-based_2003} has shown that the consistency measure can evaluate a feature subset in $O(N)$ time, check a subset of features at a time and help remove both redundant and irrelevant features simultaneously. Accordingly, the consistency measure-based feature selection methods are suitable for large-scale data analysis.

The reduction technique in rough set theory is a well-developed consistency-based feature selection method. Meanwhile, most rough set-based reduction techniques also take the importance of attributes into account from different perspectives. Rough set theory~\citep{Pawlak1982} is an effective mathematical framework that has been successfully used in pattern recognition~\citep{Swiniarski2003,Peters2007}, machine learning~\citep{Gupta2013,Kaya2013}, data mining, and knowledge discovery~\citep{Caballero2010,Bazan2013,Park2013} in many fields. The reduction technique in rough set theory removes as many redundant features and irrelevant features as possible based on the principle of preserving the discriminatory ability of all features, i.e., two discernible objects can also be distinguished using the selected features. Compared with statistical methods, it requires no prior knowledge nor any assumptions about the distribution of the data.

Based on different consistency and attribute importance measures in the rough set, numerous kinds of reduct have been developed, such as positive region reduct~\citep{Komorowski1998}, condition entropy reduct~\citep{Wang2002}, maximal distribution reduct~\citep{Zhang2003}, and decision value preservation reduct~\citep{Miao2009}. \citet{zhou_analysis_2011} summarized 13 different reducts based on different measures and the relations among them. They also pointed out that the associated discernibility matrix can be constructed for different types of reduct. The reduct can then be obtained using the discernibility matrix. The discernibility matrix-based reduct finding algorithm~\citep{Skowron1992,Komorowski1998} is one of the earliest proposed reduct finding algorithms. This method has been proven to be able to find the minimal reduct. Unfortunately, finding the minimal reduct was proven to be NP-hard~\citep{wong1985}. However, rather than the minimal reduct, an easily found approximate reduct of sufficient quality is adequate for practical applications. Especially in the analysis of massive data, even algorithms with polynomial time complexity can barely achieve practical requirements. Accordingly, many approximate reduct finding algorithms have been proposed, using different methodologies from different perspectives.

With respect to accelerating the finding process, these reduct finding algorithms can be divided into four types. The first type of algorithm speeds up the reduct finding process by reducing the feature subspace to be searched, or by using efficient search strategies to search the feature space. Besides some generally adopted feature subspace searching methods used in feature selection~\citep{dash_consistency-based_2003}, many new techniques have been used in finding reducts. For example, \citet{Wroblewski1995} proposed using a genetic algorithm to speed up the search of the possible feature subset to find the approximate minimal reduct. This paper also integrated a genetic algorithm with a greedy algorithm to help search for short reducts. A greedy algorithm is another widely used technique for speeding up the search for the minimal reduct~\citep{Hu1995,Miao1997}. Other techniques that are used to speed up the search of the possible feature subsets include ant colony optimization~\citep{Jensen2003, Jensen2004} and particle swarm optimization~\citep{Wang2007}. \citet{Chen2011} proposed the concept of the power set tree and used a pruning technology to reduce the size of candidate feature subsets. \citet{Qian2010} developed a common accelerator to improve the time efficiency of a heuristic search process by deleting certain objects from the universe every time a new attribute is selected and added into the core. \citet{Chen2012} compressed the discernibility matrix using a key sample pair set to speed up the evaluation of feature subsets. These algorithms have successfully reduced the time needed to find reduct in data sets. The time complexity of most of these algorithm is polynomial. However, the algorithms with time complexity $O(N)$ are also slow or even unacceptable for the analysis of massive data. Accordingly, these algorithms still can not match the requirements for the analysis of massive data sets.

The second type of reduct finding algorithm is the incremental reduct finding algorithms. An incremental reduct finding algorithm focuses on quickly adjusting the current reduct of dynamically increased data sets to generate an up-to-date reduct. For example, \citet{hu_incremental_2005} generated the latest reduct by adjusting the old reduct in terms of the relation between the new objects and the elementary sets. \citet{ASCwang2013a, kbswang2013b} studied the dimension incremental mechanism for information entropy and proposed a dimension incremental attribute reduction algorithm for dynamic decision tables. Aimed at effectively dealing with multiple objects that are generated at a time in a database, \citet{Liang2014} proposed a group incremental reduction algorithm based on information entropy. \citet{YIN2014} proposed using labeled discernibility matrix to determine the changed elements and compute a super-set of reducts quickly for the new object. However, the aim of this type of reduct finding algorithm was to deal with a dynamically increasing data set rather than directly find the reduct for large-scale data sets.

The third strategy is to use the parallel computing power of modern computers to speed up the reduct finding process. For example, \citet{Zhang2012} ported most rough set related algorithms to the MapReduce parallel computing model~\citep{Dean2008}. \citet{Lingras2013} designed a parallel method based on a Boolean matrix representation of the lower and upper approximations of a composite information system and implemented it on GPUs. However, these methods did not actually reduce the time complexity of the algorithms. Without powerful parallel computing resources, their methods will also be very slow for massive data.

The fourth type of fast reduct finding algorithm uses small sub-tables or samples of the original decision table. Sampling has proven to be an effective approach for learning from large scale data sets~\citep{Kivinen1993,Rossi2011}. It is also adopted by some feature selection methods. For example \citet{Liu200449} used $kd$-tree to learn feature space structures and help selecting representative instances from data for the ReliefF~\citep{Kononenko1994} feature selection method. However, the building of $kd$-tree needs to traverse the whole target data set, which is time consuming for massive data sets. Meanwhile, the target of traditional sampling methods are not preserving the discriminatory power of the selected feature subset unchanged with that of whole feature set. Accordingly, it is necessary to improve traditional sampling methods for finding reduct for massive data sets. The first reduct finding algorithm using sub-tables of the original data is the dynamic reduct~\citep{Bazan1996}. However, the aim of the dynamic reduct is to find stable reducts of the decision table rather than quickly find an estimation of the reduct. \citet{Liang2012} synthesized reducts from different small samples of the original data to get an approximate reduct from the perspective of multi-granulation. Although each sample they use is small, their method still needs multiple samples to co-estimate the reduct, and it cannot estimate the quality of the approximate reduct in advance. Therefore, their method still takes hours to find an approximate reduct for large-scale data, according to their experimental results. Meanwhile, from our experiments, we have found that this method is not as efficient as expected when dealing with large-scale data sets that have thousands of features, even if the data set has only no more than 10,000 objects. 

This paper proposes a solution for rapidly finding high quality approximate reducts of massive data using a small nested subsets of the original data. To find the approximate reduct, our method uses a special sub decision table for which the positive region is a subset of the positive region of the original decision table. Statistical theory is used to estimate the lower boundary of the quality of the final reduct from the perspective of discriminatory ability and pre-determine the number of random objects that should be used to find the approximate reducts. Fifteen data sets, including five data sets with hundreds or thousands of attributes and four large-scale data sets, were used to validate the effectiveness of the new method. The experiments on these data sets showed that the proposed methods can find an approximate reduct with discriminatory ability higher than expected. The proposed method was compared with the method proposed by \citet{Liang2012} that was proved much faster than other traditional methods~\citep{Liang2012}. Our results show that the proposed method was far faster than their method. Meanwhile, sensitivity analysis of the parameters show that relatively loose parameters can greatly speed up the reduct finding process while the discriminatory ability of the final reduct is almost not decreased.

The rest of the paper is organized as follows. Preliminary knowledge of rough set theory and reducts is reviewed in Section 2. In Section 3, a fast reliable reduct finding algorithm is presented based on the novel concept of the positive region preserved sample of the original data set. Statistical analysis is used to infer the lower boundary of the discriminatory ability of the final reduct. In Section 4, 11 medium data sets and four large-scale data sets are used to validate the effectiveness and efficiency of the new method. Section 5 concludes the paper.

\section{Rough set theory and reducts}

Massive data sets that consists of decisions can be easily transformed into a decision table, where each row represents a case, an event, a patient, or simply an object, and each column represents a measurement, an observation, or a property of the object. The last column of a decision table is generally an outcome of classification or the concept that the object explained, i.e., the decision value~\citep{Komorowski1998}. Formally, a decision table is a pair $S=(U, C\cup \{d\})$, where the universe $U$ is a non-empty finite set of objects, $C$ is a non-empty finite condition attribute set and $d\not\in C$ is the decision attribute. Any $c\in C$ can be regarded as a mapping $U\times \{c\}\rightarrow V_c$, where $V_c$ is the domain of attribute $c$. The decision attribute $d$ can be regarded as a mapping $U\times \{d\}\rightarrow V_d$, where $V_d$ is the domain of attribute $d$.

In a massive data set, the same observations may be recorded many times. This leads to the representation of indiscernible objects in a decision table~\citep{Komorowski1998}. In a decision table, if the attribute values of two objects are all the same, then the two objects are indiscernible objects; otherwise these two objects are discernible. All indiscernible objects of an object form an equivalent class. Formally, the equivalent class of $x\in U$ formed by an attribute set $R \subseteq C\cup \{d\}$ is defined as the set $[x]^S_R =\{y\in U:a(x)=a(y), \forall a\in R\}$. All the equivalent classes formed by $R$ constitute a partition of the universe. Any two equivalent classes in the partition constitute a pair $(x^S_R,y^S_R)$, where $[x]^S_R\neq [y]^S_R$. An instance of a pair $(x^S_R,y^S_R)$ is a pair of two objects $x'$ and $y'$: $(x',y')_R^S$, where $x'\in [x]_R^S$ and $y'\in [y]_R^S$. A decision table is consistent if all object pairs that have the same condition values on $C$ also have the same decision value on $d$. Otherwise, the decision table is inconsistent.

For a massive data set, a concept can hardly be crisply defined. As an alternative, it is possible to delineate the objects that certainly belong to the concept, the objects that certainly do not belong to the concept, and the objects that belong to a boundary between the certain cases~\citep{Komorowski1998}. Rough set theory uses lower approximation, upper approximation, and boundary regions to delineate the objects. In decision table $S$, target concept $X\subseteq U$, for example an equivalent class formed by $d_i\in V_d$, can be approximately depicted by two approximation operators that are associated with an attribute set $R\subseteq C$, i.e., the upper approximation $\overline{R}^S(X)=\{x\in U: [x]^S_R\cap X \neq \varnothing \}$ and lower approximation $\underline{R}^S(X) = \{x\in U: [x]^S_R \subseteq X\}$. The boundary region of $X$ is defined as $BND^S_R(X)= \overline{R}^S(X)-\underline{R}^S(X)$. A concept $X$ is said to be rough if $BND^S_R(X)$ is non-empty. For any attribute set $R\subseteq C$, all the lower approximations of all concepts constitute the positive region of the decision table. The positive region of $d$ w.r.t. $R$ is defined as
$$POS^S_{R}(d)=\displaystyle{\bigcup_{d_i \in V_d}\underline{R}^S(\{x\in U | d(x)=d_i\})}.$$

Massive data sets generally have many redundant and irrelevant features from the perspective of discriminatory ability, i.e., the removal of some attributes does not change the approximations of concepts or only keep the positive region intact. From the perspective of retaining discriminatory ability, the feature selection task is to remove as many such attributes as possible. Based on the principle of using as few attributes as possible in $C$ to keep $POS^S_{C}(d)$ unchanged, the positive region reduct can be defined as follows.
\begin{Def}
  Suppose $S=\{U,C\cup \{d\}\}$ is a decision table. For any attribute set $B\subseteq C$, if $POS^S_B(d)=POS^S_C(d)$ and $POS^S_{B'}(d)\neq POS^S_C(d)$ for any $B'\subset B$, then $B$ is called a positive region reduct of $S$.
\end{Def}

In terms of the definition of the positive region reduct, all equivalent class pairs must be discerned using the reduct unless the two equivalent classes are both in the lower approximation of the same concept or both in the boundary region. All instances of all equivalent class pairs that should be distinguished in $S$ constitutes a set $EPD^S$, i.e.,
$$
\begin{array}{rl}
EPD^S=\{(x,y)^S_C|&(x\in POS^S_C(d)\land y \not \in POS^S_C(d))\\
                  &\lor(y\in POS^S_C(d)\land x \not \in POS^S_C(d))\\
                  &\lor(x,y \in POS^S_C(d)\land d(x) \neq d(y))\}
\end{array}.
$$
If an instance of an equivalent class pair cannot be distinguished using an attribute set $B$, then the instance is said to be indiscernible using $B$; otherwise, the instance is said to be discernible using $B$. For any $B\subseteq C$, all elements in $EPD^S$ that are discernible using $B$ constitute a set $EPD^S_B$, i.e.,
$$
EPD^S_B=\{(x,y)^S_C\in EPD^S | \exists a \in B, a(x)\neq a(y)\}.
$$
It is easy to see that $EPD^S=EPD^S_C$. The proportion of elements in $EPD^S$ that are discernible using $B$ can be used to measure the discriminatory ability of the given attribute set $B$. Formally,
\begin{Def}
  Suppose $S=\{U,C\cup \{d\}\}$ is a decision table. The discernibility quality of $B\subset C$ is
  \begin{equation}
    \label{eq:discer_quality}
    DQ_S(B)=\frac{|EPD^S_B|}{|EPD^S|}
  \end{equation}
\end{Def}
A reduct distinguishes all the instances in $EPD^S$. Accordingly, $DQ_S(B)=1$ for any reduct $B$, and $DQ_S(B')<1$ for any $B'\subset B$. Many heuristic algorithms use the number of object pairs in $EPD^S$ that an attribute can discern as the significance of an attribute to find approximate reducts~\citep{Xu2007, Wang2007b, Xu2009}. This type of heuristic information is actually the discernibility quality of each attribute, and the discernibility quality can be looked upon as an extension of the significance of an attribute. Clearly, a larger discernibility quality enables more object pairs in $EPD^S$ to be discerned by the attribute set. Meanwhile, the discernibility quality closely relates to the $\alpha$-reduct~\citep{Skowron1997} that is used to search for the optimal association rules from a given template. It is easy to show that $B$ is an $\alpha$-reduct of $S$ if $DQ_S(B)\geq \alpha$.

\section{Finding approximate reducts from a non-trivial POS-table}

As mentioned above, some researchers~\citep{Bazan1996,Liang2012} have proposed using samples from the decision table $S$ to find approximate reducts in order to sparing computing time and memory needed to find the reducts. However, some equivalent classes in the positive region of a sample may actually be in the border of the original data set. On one hand, this may lead to the ignorance of distinguishing some equivalent class pairs in the sample that should be distinguished. For example, two equivalent classes that are in the lower approximation of the same decision value could actually be in the positive region and boundary region of the original decision table, respectively. On the other hand, this may lead to the excessive distinguishing of unnecessary equivalent class pairs. For example, two equivalent classes that should be distinguished in the sample are actually in the border of the original decision table. Therefore, directly finding the reduct from a random sample may lead to incorrect reducts.

If the positive region of a sample $S'$, i.e., $POS^{S'}_{C}(d)$ is a subset of $POS^S_{C}(d)$ of the original decision table $S$, namely a positive region-preserved sample, the final approximate reduct can avoid being unable to discern equivalent class pairs and excessively distinguishing equivalent class pairs in the sample. In the following, the notation ``POS-table'' is used to represent a positive region preserved sub table of the original decision table. For simplicity, we suppose that the target decision table is $S=\{U, C\cup \{d\}\}$. A POS-table of $S$ is defined as follows.

\begin{Def}
  A POS-table of $S$ is a decision table $S'=\{U',C\cup\{d\}\}$ where $U'\subseteq U$ and $POS^{S'}_{C}(d)\subseteq POS^{S}_{C}(d)$. If $POS^{S'}_C(d)=\varnothing$ or there is only one decision value in $S'$, then $S'$ is a trivial POS-table of $S$. If all objects in $U'$ are randomly drawn from $U$, then $S'$ is a random POS-table of $S$.
\end{Def}



\begin{Def}
  For two POS-tables $S'_1=\{U'_1,C\cup\{d\}\}$ and $S'_2=\{U'_2,C\cup\{d\}\}$ of $S$, if $U'_1\subset U'_2$, then $S'_1$ is called a sub-POS-table of $S'_2$, denoted as $S'_1\subset S'_2$.
\end{Def}

\subsection{Evaluating the discernibility quality of the reduct of a non-trivial POS-table $S'$}

The approximate reduct $Red'$ of a non-trivial POS-table $S'$ of $S$ may not distinguish all instances in $EPD^S$. If $DQ_S(Red')$ is small, for example $DQ_S(Red')$ $<0.5$, most instances in $EPD^S$ cannot be discerned using $Red'$, and the final reduct is of no practical value. Accordingly, it is important to evaluate $DQ_S(Red')$ in addition to finding the reduct. A straightforward method is to use the probability of $Red'$ discernible random elements of $EPD^S$ to evaluate $DQ_S(Red')$.

In the following, ``if an element in $EPD^S$ is discernible using attribute set $B$'' is denoted as a random variable $\Lambda_B=\{\zeta_B,\overline{\zeta_B}\}$, where random event $\zeta_B$ indicates that the element of $EPD^S$ is discernible using $B$ and $\overline{\zeta_B}$ is the opposite random event. The probability of the elements in $EPD^S$ that are discernible using $B$ is $P(\zeta_B)$. For a decision table with limited objects, $P(\zeta_B)=DQ_S(B)$.

However, $DQ_S(B)$ is difficult to calculate for massive data sets. Accordingly, it is necessary to estimate $P(\zeta_B)$ using random object pairs from $EPD^S$. The following lemma shows that for any given non-trivial POS-table $S'$ and any object $x\in U-U'$, there exists at least one object $y\in U'$ such that $(x,y)^S_C\in EPD^S$.

\begin{lemma}
  \label{lem:10}
  Suppose $S'$ is a non-trivial POS-table of $S$. For $\forall x\in U$, there exists $y\in U'$ such that $(x,y)^S_C\in EPD^S$.
\end{lemma}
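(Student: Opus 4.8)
The plan is to prove the lemma by a case analysis on whether the object $x$ lies in the positive region $POS^S_C(d)$, exploiting the two facts packed into the hypothesis that $S'$ is non-trivial: namely that $POS^{S'}_C(d)\neq\varnothing$ and that at least two distinct decision values occur among the objects of $U'$.

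First I would record a preliminary observation that membership in $POS^S_C(d)$ is constant on each $C$-equivalence class (since an object lies in $POS^S_C(d)$ exactly when its class $[x]^S_C$ is contained in a single decision class), and likewise that the decision value is constant on any class sitting inside the positive region. This is what will let me conclude in each case that the two objects I produce actually belong to distinct equivalence classes, so that $(x,y)^S_C$ is a legitimate equivalence-class pair rather than a degenerate one.

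In the first case, suppose $x\notin POS^S_C(d)$. Because $S'$ is non-trivial we have $POS^{S'}_C(d)\neq\varnothing$, so I may pick $y\in U'$ with $y\in POS^{S'}_C(d)$; the defining inclusion $POS^{S'}_C(d)\subseteq POS^S_C(d)$ of a POS-table then gives $y\in POS^S_C(d)$. Since $x\notin POS^S_C(d)$ while $y\in POS^S_C(d)$, the two classes differ and the pair satisfies the second disjunct in the definition of $EPD^S$. In the second case, suppose $x\in POS^S_C(d)$. Non-triviality supplies more than one decision value in $S'$, so there is $y\in U'$ with $d(y)\neq d(x)$. If $y\notin POS^S_C(d)$ the pair falls under the first disjunct of $EPD^S$; if instead $y\in POS^S_C(d)$, then both objects are positive and $d(x)\neq d(y)$, so the pair falls under the third disjunct. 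In either subcase the preliminary observation guarantees $[x]^S_C\neq[y]^S_C$, whence $(x,y)^S_C\in EPD^S$.

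I do not expect a serious obstacle here, as the lemma is essentially an unpacking of the non-triviality hypothesis. The only point demanding care is the bookkeeping: making sure each produced pair is a genuine pair of distinct equivalence classes, and that I apply the inclusion $POS^{S'}_C(d)\subseteq POS^S_C(d)$ in the correct direction, transferring positivity from the sample $S'$ up to the original table $S$ rather than the reverse.
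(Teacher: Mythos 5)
Your proof is correct and follows essentially the same route as the paper: a case split on whether $x\in POS^S_C(d)$, using the two halves of non-triviality ($POS^{S'}_C(d)\neq\varnothing$ and the presence of two decision values in $U'$) to produce the witness $y$, with the only cosmetic difference being that in the positive case you branch on the positivity of a chosen $y$ with $d(y)\neq d(x)$ rather than on whether the border of $S'$ is empty. Your preliminary observation that positive-region membership and (on positive classes) the decision value are constant on $C$-classes correctly discharges the requirement $[x]^S_C\neq[y]^S_C$, which the paper leaves implicit.
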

\begin{proof}
  Because $S'$ is non-trivial, $POS^{S'}_C(d)\neq \varnothing$, and there are at least two decision values in $S'$. For all $x \in U$, $x\in POS^S_C(d)$ or is on the border of some concepts.

  If $x\in POS^S_C(d)$, it must be distinguished from other objects in the positive region that have different decision values and objects on the border of $S$. Therefore, if the border of $S'$ is nonempty, then $(x,y)_C^S\in EPD^S$ for any $y \in \cup_{d_i\in V_d}BDR^{S'}_C(d_i)$. If the border of $S'$ is empty, there must be an object $y\in POS^{S'}_C(d)$ such that $d(x)\neq d(y)$ and $[x]^S_C\neq[y]^S_C$ because $POS^{S'}_C(d)\neq\varnothing$ and there are at least two decision values in $S'$. Then, $(x,y)^S_C\in EPD^S$.

  If $x$ is on the border of $S$, then it must be distinguished from objects in $POS^{S'}_C(d)$. As $POS^{S'}_C(d)\neq \varnothing$, then $x$ should be distinguished with all objects in $POS^{S'}_C(d)$, i.e., $(x,y)^S_C\in EPD^S$ for all $y\in POS^{s'}_C(d)$. The proof is completed.
\end{proof}

It is easy to show that any two different objects from $U-U'$ will introduce at least two different instances of $EPD^S$. Meanwhile, if objects in $U'$ are randomly drawn from $U$, and the objects in $U-U'$ are also randomly drawn, the object pairs introduced by the objects from $U-U'$ and the non-trivial POS-table can be looked upon as random instances from $EPD^S$. Based on Lemma \ref{lem:10}, the following theorem shows how to estimate the lower boundary of $P(\zeta)$ for attribute set $B$ using a random $S'$ and a number of additional random objects $U-U'$ for massive data sets.

\begin{Theo}
  \label{theo:10}
  Suppose that $S=\{U,C\cup \{d\}\}$ is a decision table and $S'=\{U',C\cup \{d\}\}$ is a non-trivial random POS-table of $S$. If additional $I$($I\geq 50$) randomly drawn objects from $U-U'$ do not introduce elements in the $EPD^S$ that are indiscernible using attribute set $B$, then
  \begin{equation}
    \label{eq:pzeta}
    P(\zeta_B)\geq \frac{2I + z^2_{\alpha/2} - \sqrt{(2I + z^2_{\alpha/2})^2 - \displaystyle{\frac{4(I + z^2_{\alpha/2} + 1)I^2}{(I + 1)}}}}{2(I +z^2_{\alpha/2}+ 1)},
  \end{equation}
  where $\alpha$ is the significance level and $z_{\alpha/2}$ is the two-sided $\alpha$ quantile of the standard normal distribution $N(0,1)$.
\end{Theo}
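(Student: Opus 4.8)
The plan is to recast the statement as a one-sided confidence bound for a binomial success probability and to obtain it by inverting the normal (score) approximation. First I would fix the probabilistic model. By the definition of the discernibility quality and the remark that $P(\zeta_B)=DQ_S(B)$ for a finite table, the quantity to be bounded is the success probability $p:=P(\zeta_B)$ of the event ``a random element of $EPD^S$ is discernible using $B$''. The hypothesis supplies the data for this Bernoulli experiment: by Lemma~\ref{lem:10} every object $x\in U-U'$ is paired with some $y\in U'$ so that $(x,y)^S_C\in EPD^S$, hence the $I$ additional random objects introduce at least $I$ elements of $EPD^S$; since the objects of $U'$ and of $U-U'$ are both drawn at random, these introduced pairs may be treated as random draws from $EPD^S$, and the assumption that none of them is indiscernible using $B$ means that all of these trials are observed successes.

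Second I would convert the observed data into a confidence statement via the score (Wilson) interval. Writing $\hat p$ for the observed proportion and $n$ for the number of trials, the De Moivre--Laplace/CLT approximation makes $(\hat p-p)/\sqrt{p(1-p)/n}$ approximately standard normal, so with confidence $1-\alpha$ one has $|\hat p-p|\le z_{\alpha/2}\sqrt{p(1-p)/n}$. Squaring and collecting terms yields the quadratic inequality in $p$
\begin{equation}
  (n+z_{\alpha/2}^2)\,p^2-(2n\hat p+z_{\alpha/2}^2)\,p+n\hat p^2\le 0,
\end{equation}
whose smaller root is the desired lower confidence bound; the quadratic formula gives
\begin{equation}
  p\ge\frac{2n\hat p+z_{\alpha/2}^2-\sqrt{(2n\hat p+z_{\alpha/2}^2)^2-4(n+z_{\alpha/2}^2)\,n\hat p^2}}{2(n+z_{\alpha/2}^2)}.
\end{equation}

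Third I would substitute the counts appropriate to the experiment. Rather than the degenerate all-success case $\hat p=1$, for which the normal approximation is poor and the bound collapses, I would use the conservative count of $n=I+1$ nominal trials with $I$ successes, i.e. $\hat p=I/(I+1)$, $n\hat p=I$ and $n\hat p^2=I^2/(I+1)$; this amounts to appending a single ``pseudo-failure'' so that the estimate stays off the boundary and the resulting bound remains conservative. Plugging these values into the displayed root and using the simplification $(2I+z_{\alpha/2}^2)^2-4(I+1+z_{\alpha/2}^2)I^2/(I+1)=z_{\alpha/2}^4+4Iz_{\alpha/2}^2/(I+1)$ reproduces exactly \eqref{eq:pzeta}, while the requirement $I\geq 50$ is what guarantees the normal approximation is accurate enough for the bound to hold at level $\alpha$.

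I expect the main obstacle to be the first step: justifying that the $EPD^S$ pairs generated by the new objects may legitimately be regarded as (approximately independent) random draws from $EPD^S$, and fixing the correct effective number of trials. Lemma~\ref{lem:10} secures the existence of one qualifying pair per new object, but the genuine randomness of the draws and the conservative choice $n=I+1$ are the delicate modeling points that carry the whole argument. By contrast, the inversion of the quadratic and the algebraic simplification leading to \eqref{eq:pzeta} are routine.
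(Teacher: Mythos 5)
Your proposal is correct and follows essentially the same route as the paper's own proof: invoke Lemma~\ref{lem:10} to treat the $I$ new objects as $I$ observed successes from $EPD^S$, append one hypothetical failure to form a sample of size $I+1$ with $\hat p=I/(I+1)$, and invert the De Moivre--Laplace (Wilson/score) quadratic to obtain the lower confidence bound \eqref{eq:pzeta}. If anything, your bookkeeping is cleaner than the paper's, whose intermediate steps contain slips (it writes $\overline{\Lambda_B}=1/(I+1)$ and normalizes by $I$ rather than $I+1$, even though only the choices $n=I+1$, $n\hat p=I$, $n\hat p^{2}=I^{2}/(I+1)$ reproduce the stated formula), and you correctly flag the independence of the sampled pairs as the genuinely delicate modeling assumption that the paper also only asserts.
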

\begin{proof}
  Because $S'$ is random and non-trivial, and all $I$ additional objects are randomly drawn from $U$, each additional object will introduce at least one random element from $EPD^S$ in terms of Lemma (\ref{lem:10}). Accordingly, the additional $I$ objects introduce at least $I$ random elements of $EPD^S$. In addition, we assume that there is a random element of $EPD^S$ that is indiscernible using $B$. Because all objects of $S'$ and all additional objects are randomly drawn from $U$, it is reasonable to assume that $I$ additional random objects and the hypothetical random element introduce a simple random sample of $EPD^S$ with size $I+1$.

  For simplicity, we denote $P(\zeta_B)$ as $p$. Random variable $\Lambda_B$ has an $(0-1)$ distribution with probability density function $f(x;p)=p^x(1-p)^{1-x}, x=0,1$, where $0$ represents $\bar{\zeta_B}$ and $1$ represents $\zeta_B$. The mean and variance of $\Lambda_B$ are $\mu=p$ and $\sigma^2=p(1-p)$, respectively. ``If the $I+1$ random sampled units of $EPD^S$ discernible using $B$'' constitute a sample of $\Lambda_B$, i.e., $\{\Lambda_{B1},\Lambda_{B2},\cdots,\Lambda_{BI},\Lambda_{B(I+1)}\}$. According to the De Moivre-Laplace theory~\citep{Degroot1986}, $$\frac{\sum_{i=1}^{I}\Lambda_{Bi}-Ip}{\sqrt{Ip(1-p)}}=\frac{I\overline{\Lambda_B}-Ip}{\sqrt{Ip(1-p)}}$$ obey the standard normal distribution $N(0,1)$ when $I$ is large, where $\overline{\Lambda_B}$ is the mean of the sample. Therefore, $$P\big(z_{\alpha/2} \leq \frac{I\overline{\Lambda_B}-Ip}{\sqrt{Ip(1-p)}}\leq z_{\alpha/2}\big)=1-\alpha.$$ According to $$-z_{\alpha/2} \leq \frac{I\overline{\Lambda_B}-Ip}{\sqrt{Ip(1-p)}}\leq z_{\alpha/2},$$ it can be inferred that $$p\geq\frac{1}{2a}(-b-\sqrt{b^2-4ac}),$$ where $a=I+z^{2}_{\alpha/2},b=-(2I\overline{\Lambda_B}+z^{2}_{\alpha/2}),c=I\overline{\Lambda_B}^2,$ i.e.,
$$P(\zeta_B)\geq \mathscr{P}_{\alpha}(I)=\frac{2I + z^2_{\alpha/2} - \sqrt{(2I + z^2_{\alpha/2})^2 - \displaystyle{\frac{4(I + z^2_{\alpha/2} + 1)I^2}{(I + 1)}}}}{2(I +z^2_{\alpha/2}+ 1)}$$ because $\overline{\Lambda_B}=1/(I+1)$.
The proof is completed.
\end{proof}

In terms of Theorem (\ref{theo:10}), the discernibility quality of an approximate reduct can be estimated using a small number of additional objects. Generally, a small $I$ ensures a relatively high discernibility quality. For example, $P(\zeta)\geq 0.9568$ when $I=100$ given $\alpha=0.05$. Meanwhile, it is easy to see that in practical applications, it is important to pre-determine $I$ to ensure a sufficient good reduct is found. In terms of Equation~\eqref{eq:pzeta}, it is easy to infer the required $I$ given the minimum $P(\zeta)$, which is called the expected discernibility quality. As the $\frac{ \mathrm{d}\mathscr{P}_{\alpha}(I)}{ \mathrm{d}I}>0$ given $I\geq50$, then $\mathscr{P}_{\alpha}(I)$ is monotonically increasing with respect to $I$. Equation~\eqref{eq:pzeta} is easy to expand to a cubic equation about $I$ if $\alpha$ and $\mathscr{P}_{\alpha}(I)$ is given. The only reasonable solution of the cubic equation is
\begin{equation}
I = \frac{(z^2_{\alpha/2}+2)\mathscr{P}_{\alpha}(I)+\sqrt{z^4_{\alpha/2}\mathscr{P}^2_{\alpha}(I)+4z^2_{\alpha/2}\mathscr{P}_{\alpha}(I)}}{2(1-\mathscr{P}_{\alpha}(I))}.
    \label{eq:detI}
\end{equation}
This is the minimal $I$ that should be adopted. For example, given $\alpha=0.01$ and $P(\zeta)\geq 0.95$, at least $163$ additional objects should be used to ensure the discernibility quality of the final reduct.

\subsection{Finding approximate reducts via a nested random POS-table set}

Only one random POS-table cannot guarantee a reliable reduct that has high discernibility quality. When the reduct of a random POS-table has small discernibility quality, a new random POS-table should be used. The reduct of the new random POS-table should have higher discernibility quality than the previous reduct. For two random POS-tables $S'_1$ and $S'_2$, if $S'_1\subset S'_2$, then the discernibility quality of the reduct of $S'_2$ will be no less than that of the reduct of $S'_1$. Accordingly, it is reasonable to add random objects to $S'_1$ to construct a new random POS-table $S'_2$ to find an approximate reduct with high discernibility quality. Formally,
\begin{Def}
  The set of $n$ POS-tables $S'_1, S'_2, \cdots, S'_n$ of $S$ is called a nested POS-table set $\mathscr{S}$ if $S'_1 \subset S'_2 \subset \cdots \subset S'_n$.
\end{Def}

A nested POS-table set $\mathscr{S}$ can be used to find an approximate reduct with discernibility quality that is significantly larger than expected. For any random POS-table $S'\in \mathscr{S}$, a binomial test can be used to verify if the discernibility quality of the final reduct $Red'$ is statistically significantly larger than expected. The null hypothesis $H_0$ is that the mean of $\Lambda_{Red'}$, i.e., $\mu$, is larger than or equal to the expected $\mu_0$, i.e., the given lower boundary of the discernibility quality. The alternative hypothesis $H_1$ is that the mean of $\Lambda_{Red'}$ is less than $\mu_0$. As $\mu=P(\zeta_{Red'})$, the null hypothesis and alternative hypothesis are equivalent to
\begin{center}
$H_0$: $P(\zeta_{Red'}) \geq \mu_0$\\
$H_1$: $P(\zeta_{Red'}) < \mu_0$.
\end{center}
When the $s$-th random object from $U-U'$ introduces a $Red'$ indiscernible equivalent class pair instance, and no previous objects introduce $Red'$ indiscernible equivalent class pairs, there will be $s-1$ positive instances and one negative instance of $\Lambda_{Red'}$. According to the binomial test, it can be easily shown that $H_0$ should be rejected if $s<I$, given confidence level $\alpha$. Therefore, the null hypothesis $H_0$ is rejected unless $I$ objects do not introduce indiscernible object pairs.

If the binomial test of the reduct of the random POS-table $S'_i$ rejects the null hypothesis, the $s$ new objects and some other additional random objects from $U-U'_i$ are merged into the current POS-table to form new random POS-table $S'_{i+1}$, and approximate reduct $Red'_{i+1}$ of the new random POS-table is constructed by adding new attributes that can discern the new indiscernible object pairs that should be distinguished. Hence, $S'_i$ is a random sub-POS-table of $S'_{i+1}$ and $Red'_i\subset Red'_{i+1}$. Assuming that there have been $n$ non-trivial random POS-tables in $\mathscr{S}'=\{S'_1, S'_2, \cdots, S'_n\}$, $P(\zeta_{Red'_n})=DQ_S(Red'_n)$ will be inevitably larger than $\mu_0$, if $n$ is large enough. No matter how large $\mu_0\leq 1$ is, the following theorem indicates that a random POS-table that can pass the binomial test can be found within $|C|$ steps.
\begin{Theo}
For a nested random POS-table $\mathscr{S}=\{S'_1, S'_2, \cdots, S'_n\}$ of $S$ for which approximate reducts satisfy $Red'_1\subset Red'_2 \subset \cdots \subset Red'_n$ if $n$ is larger than $|C|$, then the binomial test of $Red'_n$ definitely accepts the null hypothesis.
\end{Theo}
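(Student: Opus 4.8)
The plan is to reduce the whole statement to the single fact that, once the chain of reducts is long enough, $Red'_n$ must coincide with the full attribute set $C$, after which the binomial test can no longer fail. First I would recall the operational meaning of the test established just before the theorem: it rejects $H_0$ exactly when one of the first $I$ objects drawn from $U-U'_n$ introduces an element of $EPD^S$ that is indiscernible using $Red'_n$, and it accepts $H_0$ precisely when $I$ consecutively drawn objects introduce no such indiscernible element. Hence it suffices to show that, for $n$ large enough, $Red'_n$ discerns \emph{every} element of $EPD^S$, i.e.\ $EPD^S_{Red'_n}=EPD^S$; then no draw can ever produce an indiscernible pair and the test must accept $H_0$.

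The core of the argument is a cardinality bound on the nested reducts. Since every $S'_i$ is a non-trivial POS-table, each $Red'_i$ is non-empty, so $|Red'_1|\geq 1$. The strict inclusions $Red'_1\subset Red'_2\subset\cdots\subset Red'_n$, with each $Red'_i\subseteq C$, give $|Red'_{i+1}|\geq|Red'_i|+1$, and by induction $|Red'_i|\geq i$. Consequently, as soon as $i$ reaches $|C|$ we have $|Red'_i|\geq|C|$, which together with $Red'_i\subseteq C$ forces $Red'_i=C$. In particular a strictly increasing chain of reducts cannot be prolonged beyond length $|C|$ without $Red'_n$ having already saturated to $C$; I would state this cleanly as: if the refinement process has not yet terminated after $|C|$ steps, then necessarily $Red'_n=C$.

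Once $Red'_n=C$ is in hand the conclusion is immediate. By the remark following the definition of $EPD^S_B$ we have $EPD^S=EPD^S_C$, so every element of $EPD^S$ is discernible using $C=Red'_n$; equivalently $DQ_S(Red'_n)=1$ and hence $P(\zeta_{Red'_n})=1\geq\mu_0$ for any admissible $\mu_0\leq 1$. Therefore no object drawn from $U-U'_n$ can introduce a $Red'_n$-indiscernible element of $EPD^S$, the first $I$ draws (indeed arbitrarily many draws) yield only discernible pairs, and the binomial test accepts $H_0$.

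The step I expect to be the main obstacle is reconciling the phrasing ``$n$ larger than $|C|$'' with the fact that strict inclusions already cap the chain at length $|C|$: the point is not that $n$ can genuinely exceed $|C|$, but that the refinement must reach $Red'_n=C$ within $|C|$ steps, at which stage the test can no longer fail. I would therefore spend most of the write-up making the induction $|Red'_i|\geq i$ and the forced equality $Red'_{|C|}=C$ precise, and then carefully invoking $EPD^S=EPD^S_C$ to translate ``the reduct equals $C$'' into ``no indiscernible pair is ever introduced'', which is exactly the property the binomial test requires.
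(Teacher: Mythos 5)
Your proposal is correct and follows essentially the same route as the paper's own proof: the strictly increasing chain of reducts inside the finite set $C$ forces $Red'_n=C$ within $|C|$ steps, whence $DQ_S(Red'_n)=1$, no $Red'_n$-indiscernible element of $EPD^S$ can be introduced, and the binomial test accepts $H_0$. Your write-up merely makes explicit the cardinality induction $|Red'_i|\geq i$ and the appeal to $EPD^S=EPD^S_C$ that the paper leaves implicit.
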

\begin{proof}
As $C$ is a finite set of attributes, $Red'_n$ will be equal to $C$ and $DQ_S(Red'_n)$ will be equal to 1 when $n$ is no less than $|C|$. Accordingly, there are no $Red'$ indiscernible equivalent class pair instances. It is easy to show that the binomial test of $Red'_n$ definitely accepts the null hypothesis in such situations.
\end{proof}

\subsection{Fast reduct finding algorithm based on the nested POS-table set}

In this section, a fast approximate reduct finding algorithm based on the concept of the non-trivial nested POS-table set is proposed for massive data in this section. The basic idea of the algorithm is to iteratively construct a new non-trivial random POS-table based on the previous random POS-table, until the final reduct can pass the binomial test of reduct discernibility quality. In the following, the iterative construction of non-trivial random POS-tables is first introduced. Second, the detailed process for checking for discernibility quality is discussed. Finally, the algorithm for finding the approximate reduct is presented.

\subsubsection{Construction of a nested random POS-table set of the decision table}

A key issue of the algorithm is the construction of a random POS-table. However, the construction of an exact random POS-table is time-consuming for massive data sets because all equivalent classes in the positive region of the candidate sample should be checked to determine if they are also in the positive region of $S$. Furthermore, because massive data often contains misregistered data or noise which will influence the correct recognition of target objects, it is not convincible to determine whether an equivalent class is on the border when only few instances of it have different decision values. For example, an equivalent class $[x_1]_C$ in $S$ may have $9,999$ instances with decision value ``Y'' and only one instance with a misregistered decision value ``N.'' In this case, this equivalent class will be wrongly divided into the border. Accordingly, if only a small fraction of equivalent classes in the positive region of $S'$ are not in the positive region of $S$, then it is reasonable to assume that a POS-table has been acquired.

In our algorithm, additional $k$ random objects from $U-U'$ are introduced to create a non-trivial random POS-table, and $m$ equivalent classes that are moved from the positive region of $S'$ to the border of $S$ are counted. In addition, $M_R=m/k$ is defined as the POS-table tolerance degree. Parameter $0\leq I_R \leq 0.05$ can be defined as the threshold of the POS-table tolerance degree. If $M_R\leq I_R$, then a random POS-table with tolerance degree level $I_R$ can be drawn, and it is reasonable to assume that the equivalent classes in the positive region of the random POS-table are less likely to be in the border of $S$; otherwise the sample obtained cannot be used as a random POS-table and the $k$ additional objects are merged into the sample to further construct a random POS-table.

A nested POS-table set can be built up using a series of random POS-tables with tolerance degree level $I_R$. When a random POS-table $S'_i$ is constructed, a new random POS-table $S'_{i+1}\supset S'_i$ can be constructed by adding new random objects from $U-U'_i$ into $S'_i$ until the new sub decision table is a random POS-table with tolerance degree level $I_R$.

\subsubsection{Checking for indiscernible equivalent class pairs for the reduct of a POS-table}
\label{sec:constr-new-sketch}

During the binomial test for the discernibility quality of the candidate reduct, each new additional drawn object $u\in U$ will lead to four different situations:
\begin{enumerate}[(1)]
\item object $u$ belongs to an equivalent class $[x]^{S'}_C$ on the positive region of the current POS-table $S'$ and has the same decision value as that of $[x]^{S'}_C$, i.e., $u\in [x]^{S'}_C \land [x]^{S'}_C\in POS^{S'}_C(d) \land d(u)=d(x)$;
\item object $u$ belongs to an equivalent class $[y]_C^{S'}$ on the border of the current POS-table $S'$, i.e., $u\in [y]^{S'}_C \land [y]^{S'}_C\notin POS^{S'}_C(d)$;

For these two situations, the new object never introduces indiscernible elements of $EPD^S$.
\item object $u$ introduces a new equivalent class, i.e., $\forall x\in S', u\notin [x]_C^{S'}$;

In such a situation, if the new object is actually in the border of $S$, the new objects are compared with all other equivalent classes to check if the newly introduced object pairs are all discernible using $Red'$.
\item object $u$ moves an equivalent class from the positive region to the border, i.e., $u\in [x]^{S'}_C \land [x]^{S'}_C\in POS^{S'}_C(d) \land d(u)\neq d(x)$.

For the last situation, the equivalent class that moved to the border need not be distinguished from other equivalent classes in the boundary region, but should be distinguished from other equivalent classes in the lower approximation of its original decision value.
\end{enumerate}
If any object pairs are found not to be discernible using the current candidate reduct, new random attributes that can discern indiscernible object pairs are merged into the current candidate reduct until all object pairs can be discerned using the new candidate reduct.

\subsubsection{Reduct finding algorithm}
The reduct discernibility quality binomial test and the construction of the random POS-table are integrated together to further speed up the reduct finding process. The number of objects needed during the construction of the random POS-table and the binomial test process are both set to $I$. In addition, the equivalent class pairs between the $I$ additional objects are also checked to strengthen the reliability of the final reduct during the checking process. Furthermore, the movement of an equivalent class from the positive region to the border may remove some attributes that are no longer used. Accordingly, during the construct of the random POS-table with tolerance level $I_R$, the attributes that are no longer used are removed from the reduct to ensure that the final approximate reduct has as few attributes as possible.

Taking into consideration of all the requirements of the algorithm, the algorithm finds the approximate reduct in an incremental way. The initial candidate random POS-table and the candidate reduct are set to empty. The initial $M_R$ is set to $0$. In the following process, each random object is drawn from the universe without replacement. For each newly drawn objects $u$, the method checks whether the current candidate random POS-table is a random POS-table with tolerance level $I_R$ and whether there are current candidate reduct indiscernible object pairs that should be distinguished. If $u$ makes an equivalent class in the current random POS-table move to the border, $M_R$ is increased by $1/I$. If $M_R$ is larger than the given threshold $I_R$, $u$ introduces new indiscernible object pairs that should be distinguished, or some redundant attributes are removed from the candidate reduct, then the candidate random POS-table is merged with the new randomly drawn objects. Meanwhile, the new candidate reduct is also generated for the new current random POS-table. If some redundant attributes are removed when an equivalent class is moved from the positive region to the border, the candidate reduct will drop these attributes. If new attributes are needed to discern object pairs that should be distinguished, these new attributes are merged into the candidate reduct. This process loops until the candidate reduct matching all these constraints is obtained or $S'=S$. The detailed algorithm is given in Algorithm (\ref{alg:appr_reduct}).

\begin{algorithm}
    \caption{Finding the approximate reduct using a nested random POS-table set of the original decision table.}
    \label{alg:appr_reduct}
    \begin{algorithmic}[1]
      \Require \\
      $S=\{U,C\cup D\}$: target decision table; \\
      $I$: minimal number of additional objects that should not introduce indiscernible object pairs;\\
      $I_R$: threshold of POS-table tolerance degree;
      \Ensure \\
      $RED$: candidate approximate reduct of $S$;\\

      \State {$U'=\varnothing$, $C'=\varnothing$, $NCTimes=0$, $Moved=0$, $S'=\{U', C\cup D\}$};
      \While {$NCTimes < I$ or $U'\neq U$}
        \State $x=RandomDrawAnObject(U)$;
        \State $U'=U'\cup \{x\}$;
        \State $NCTimes++$;
        \If{$x$ is in the positive region or on the border of $S'$}
          \If{$[x]_C$ moves from the positive region to the border}
            \State $Moved++$;
            \State $c$=NewAttributeNeeded();
            \State $c'$=NeedRemoveAttr();
            \State $C'=C'\cup c$;
            \State $C'=C' - c'$;
            \If{$c!=\varnothing$ or $c'!=\varnothing$ or $Moved\geq I*I_R$}
              \State $NCTimes=0$;
              \State $Moved=0$;
            \EndIf
          \EndIf
        \Else
          \State $c$=NewAttributeNeeded();
          \State $C'=C'\cup c$;
          \If{$c!=\varnothing$}
            \State $NCTimes=0$;
            \State $Moved=0$;
          \EndIf
        \EndIf
      \EndWhile
      \State $RED=C'$;\\
      \Return $RED$;
    \end{algorithmic}
  \end{algorithm}

In the algorithm, the function $NewAttributeNeeded()$ returns the additional attributes (in addition to the current candidate reduct) that are needed to distinguish all object pairs when new equivalent classes $[x]_C$ are introduced or an equivalent class moves from the positive region to the border. The function $NeedRemoveAttr()$ returns the attribute set that should be removed from the current candidate reduct when an equivalent class moves from the positive region to the border. These two functions ensure that the candidate reduct contains the reduct of the current random POS-table and check whether the additional objects introduce indiscernible elements in $EPD^S$.

\section{Experiments and discussion}
\label{sec:exper-disc}
In this section, we discribe how different data sets were used to validate the effectiveness of the proposed method. The proposed feature selection algorithm was implemented using C++ on a computer with one Intel i7 3770 processor and 8 Gb memory. The operating system was Ubuntu Linux 12.04, and the compiler was GNU Compiler Collection(GCC) 4.6. All data sets were stored in a sqlite3\footnote{http://www.sqlite.org/} database and each random object was extracted from that database.

The feature selection process proceeded as follows for each data set. First, the Minimal Description Length Principle (MDLP)~\citep{Fayyad1992} method was used to discretize the decision table with continuous attribute values. Second, to determine whether the proposed method is robust, the feature selection task was performed on each data set $1,000$ times for different combinations of $I$ and $I_R$. Furthermore, the condition attributes were reshuffled for each run. In our experiment, the parameter $I$ was set to 50, 100, 150, 200, 250 and 300 to analyze whether the discernibility quality of the final reduct was larger than expected. In addition, $I_R$ was set to $0$, $0.01$, $0.02$, $0.03$, $0.04$ and $0.05$ for all inconsistent decision tables, and set to zero for the consistent decision tables, as there were no boundary regions in those. Finally, the search time of the proposed method was compared with the positive region reduction based efficient rough feature selection algorithm (E-FSA) proposed by \citet{Liang2012}, which uses multiple small sub-tables of the original decision table to select relevant features. The E-FSA method was proven to be much more efficient than other reduct finding algorithms in~\citet{Liang2012}. It should be noted that we highly optimized the E-FSA algorithm in our computer using the C++ programming language. Accordingly, the search time of the E-FSA method is much shorter than that reported in~\citet{Liang2012}.

The first experiment was performed on six typical data sets: ``mushroom'', ``nursery'', ``poker hand training'', ``connect-4'', ``adult'' and ``shuttle'' from the  UC Irvine Machine Learning Reposity (UCI)~\citep{UCI:2007}. The adult data set and the shuttle data set were both inconsistent，while the other data sets were all consistent. The characteristics of the six data sets are summarized in Table \ref{tab:dataset}. These six data sets cover two representative types of decision tables: consistent and inconsistent decision tables. The first approximate reducts of $1,000$ runs using $I=150$ and $I_R=0.02$ for the two inconsistent data sets and $I=300$ and $I_R=0$ for the four consistent data sets are shown in Table~\ref{tab:results}.

\begin{table}[!htbp]
  \centering
  \begin{tabular}{lrrr}
    \hline
    Data set & Is consistent & $|U|$ & $|C|$\\
    \hline
    mushroom & Yes & 8124 & 22 \\
    nursery & Yes & 12960 & 8 \\
    poker hand training & Yes & 25010 & 10 \\
    connect-4 & Yes & 67557 & 42 \\
    adult & No & 48842 & 14 \\
    shuttle & No & 58000 & 9 \\
    \hline
  \end{tabular}
  \caption{Characteristics of six data sets from UCI data set repository.}
  \label{tab:dataset}
\end{table}

\begin{table}[!htbp]
  \centering
  \begin{tabular}{lrrrr}
    \hline
    Data set & Reduct & Discernibility & Time used (s) & Time used by\\
    & & quality& & E-FSA (s) \\
    \hline
    mushroom & 1,2,3,9,12,14,22 & 0.999885 & 0.11 & 1.13 \\
    nursery & 1,2,3,5,6,7,8 & 0.999973 & 0.10 & 0.663 \\
    \multirow{2}{*}{connect-4} & 2,4,7,8,10,14,19, & \multirow{2}{*}{0.999740} & \multirow{2}{*}{1.09} & \multirow{2}{*}{26.38} \\
    & 20,25,31,33,37& &  &\\
    poker hand training & 2,3,6,8,10 & 0.999995 & 0.05 & 2.34 \\
    adult & 3,4,6,7,11,13,14 & 0.999764 & 0.12 & 3.13 \\
    shuttle & 1,2,4,5,6,7,9 & 0.999628 & 0.03 & 26.48 \\
    \hline
  \end{tabular}
  \caption{Reducts found using the proposed algorithm and E-FSA for the six UCI data sets.}
  \label{tab:results}
\end{table}

The second experiment was performed on five commonly used multi-label data sets: ``SLASHDOT'', ``bibtex'', ``jrs'', ``delicious'', and ``tmc2007-500.'' These data sets were obtained from the Mulan\footnote{http://sourceforge.net/projects/mulan} and Meka\footnote{http://sourceforge.net/projects/meka} repositories. The jrs data set was obtained from the Joint Rough Sets Symposium 2012 Data Mining Contest \footnote{http://tunedit.org/challenge/JRS12Contest?m=task}. These multi-label data sets had hundreds or even thousands of features. For example, each object of the jrs data set had more than 20,000 features. Each data set in the second experiment also had many decision attributes. For simplicity and without loss of generality, all the decision attributes of these data sets were merged into one decision attribute for each multi-label data set in our experiment. All five data sets were inconsistent. These five data sets are summarized in Table \ref{tab:dataset_ml}. The first approximate reducts of $1,000$ runs using $I=150$ and $I_R=0.02$ for the five data sets are shown in Table~\ref{tab:results_ml}. As the number of attributes in the reduct of the second experiment was too large to be presented in a table, only the number of attributes in the fina reduct are presented.

\begin{table}[!htbp]
  \centering
  \begin{tabular}{lrrr}
    \hline
    Data set & Is consistent & $|U|$ & $|C|$\\
    \hline
    SLASHDOT & No & 3780 & 1079 \\
    bibtex & No & 7394 & 1836 \\
    jrs & No & 10000 & 25640 \\
    delicious & No & 16102 & 500 \\
    tmc2007-500 & No & 28540 & 500 \\
    \hline
  \end{tabular}
  \caption{Characteristics of the five multi-label data sets.}
  \label{tab:dataset_ml}
\end{table}

\begin{table}[!htbp]
  \centering
  \begin{tabular}{lrrrr}
    \hline
    Data set & number of attributes & Discernibility & Time used (s) & Time used \\
    & in the reduct & quality & & by E-FSA (s)\\
    \hline
    SLASHDOT & 563 & 1 & 13.7531 & 875 \\
    bibtex & 157 & 0.99998 & 5.95275 & 6043 \\
    jrs & 300 & 0.999801 & 87.4489 & 270363 \\
    delicious & 404 & 1 & 228.224 & 66824 \\
    tmc2007-500 & 86 & 0.999984 & 1.85542 & 5000 \\
    \hline
  \end{tabular}
  \caption{Reducts found using the proposed method and E-FSA for the five multi-label data sets.}
  \label{tab:results_ml}
\end{table}

In the following, we first analyzed the relation between the discernibility quality of the final reduct and parameter $I$. A sensitivity analysis of $I_R$ showed that a relatively large $I_R$ can greatly speed up the search process at the cost of only slightly decreasing of the discernibility quality when $I$ is large. Next, the algorithm was validated using four additional large-scale data sets. Finally, an analysis of the time needed for the proposed algorithm was carried out in an experiment using different synthesizations of the ``covtype'' data set.

\subsection{Relation between the discernibility quality and $I$}

Figure~\ref{fig:I_acc_each} shows the relation between the discernibility quality of the approximate final reduct and parameter $I$ for each data set. The $x$-axis of Figure~\ref{fig:I_acc_each} is the value of parameter $I$, and the $y$-axis is the discernibility quality of the final reduct. The broken line in the figure is the expected discernibility quality of the approximate reduct when $\alpha=0.05$. Table~\ref{tab:exp_dq_red} shows the expected discernibility quality for different $I$ used in the experiment. It can be seen that the discernibility quality was larger than expected except for several final reducts for mushroom and shuttle data sets. However, there were only five approximate reducts for which the discernibility qualities were smaller than expected out of all $6,000$ final reducts in the mushroom. Furthermore, in the data set shuttle, there was only one reduct for which the discernibility quality was smaller than expected. Meanwhile, larger $I$ led to larger average discernibility quality and a smaller interval over which the discernibility qualities of the final reducts were distributed. Taking Figures~\ref{fig:acc:adult} as an example, the discernibility quality is scattered over the interval $(0.958,1)$ when $I=50$. When $I$ increases, the corresponding average discernibility quality increases, but its' range decreases.

\begin{table}[htbp]
  \centering
  \begin{tabular}{r|r}
    \hline
    I & Expected discernibility quality \\
    \hline
    50 & 0.9168 \\
    100 & 0.9568 \\
    150 & 0.9709 \\
    200 & 0.9780 \\
    250 & 0.9823 \\
    300 & 0.9852 \\
    \hline
  \end{tabular}
  \caption{Expected discernibility quality of the final reduct found for different $I$.}
  \label{tab:exp_dq_red}
\end{table}

\begin{figure}[htbp]
  \centering
  \subfigure[adult]{ \label{fig:acc:adult}
    \includegraphics[width=0.4\textwidth]{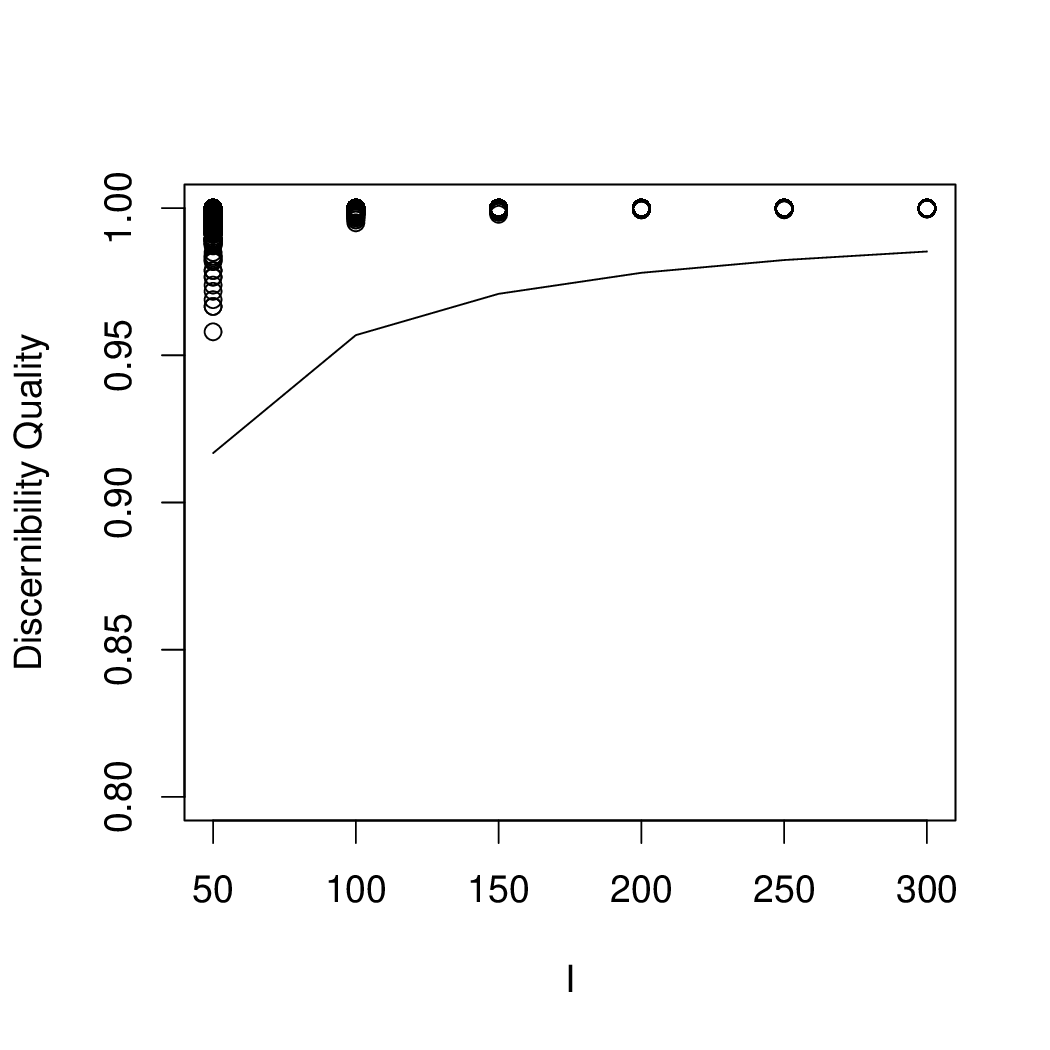}
  }
  \subfigure[shuttle]{ \label{fig:acc:shuttle}
    \includegraphics[width=0.4\textwidth]{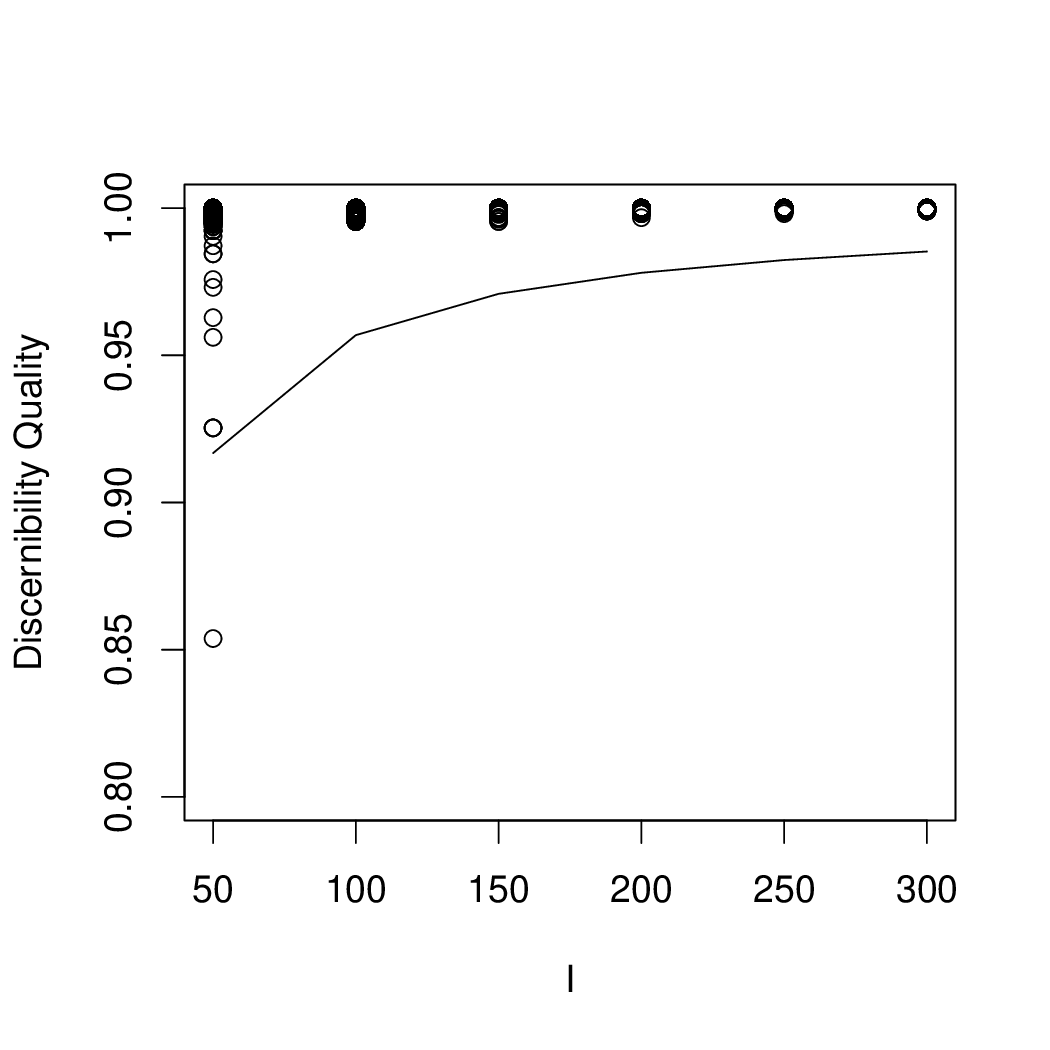}
  }\\
  \subfigure[mushroom]{ \label{fig:acc:mushroom}
    \includegraphics[width=0.4\textwidth]{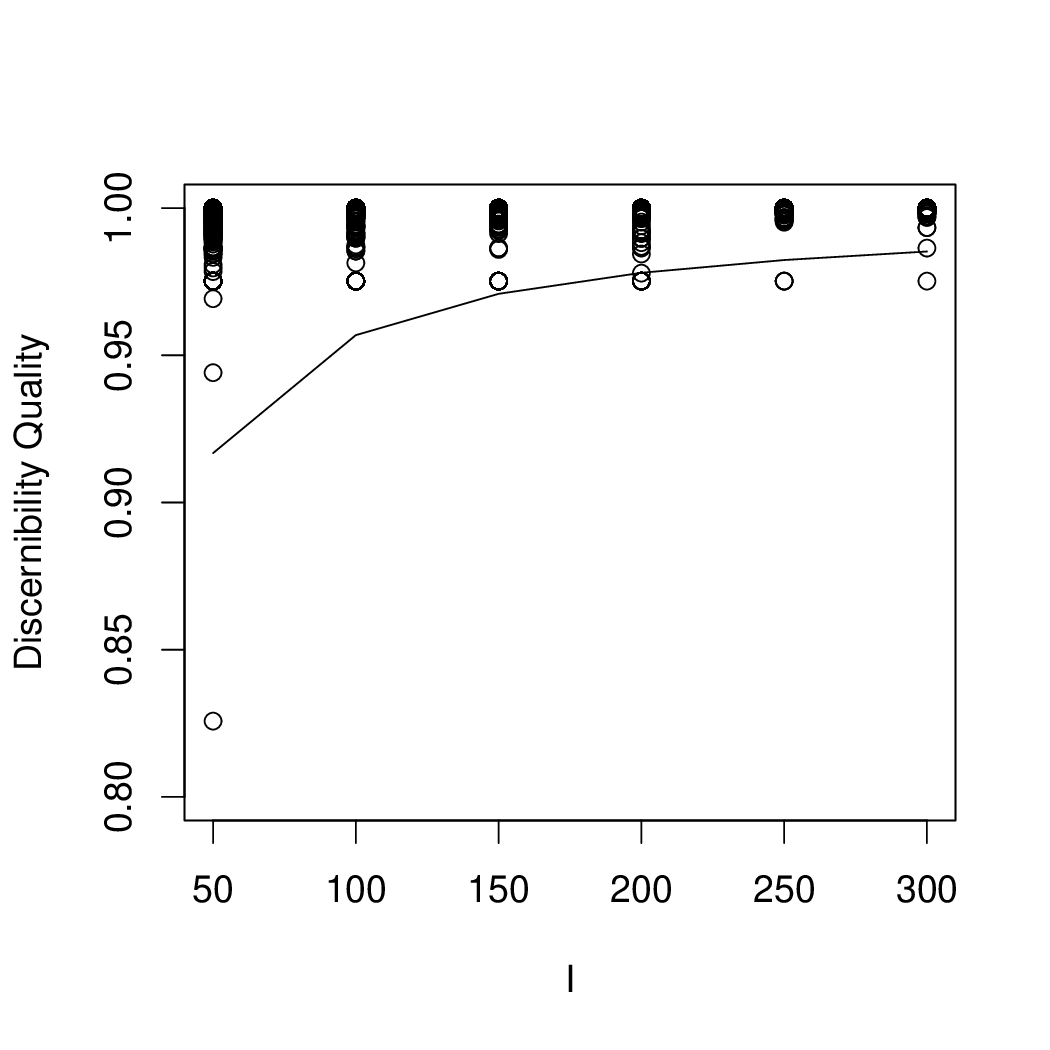}
  }
  \subfigure[connect]{ \label{fig:acc:connect}
    \includegraphics[width=0.4\textwidth]{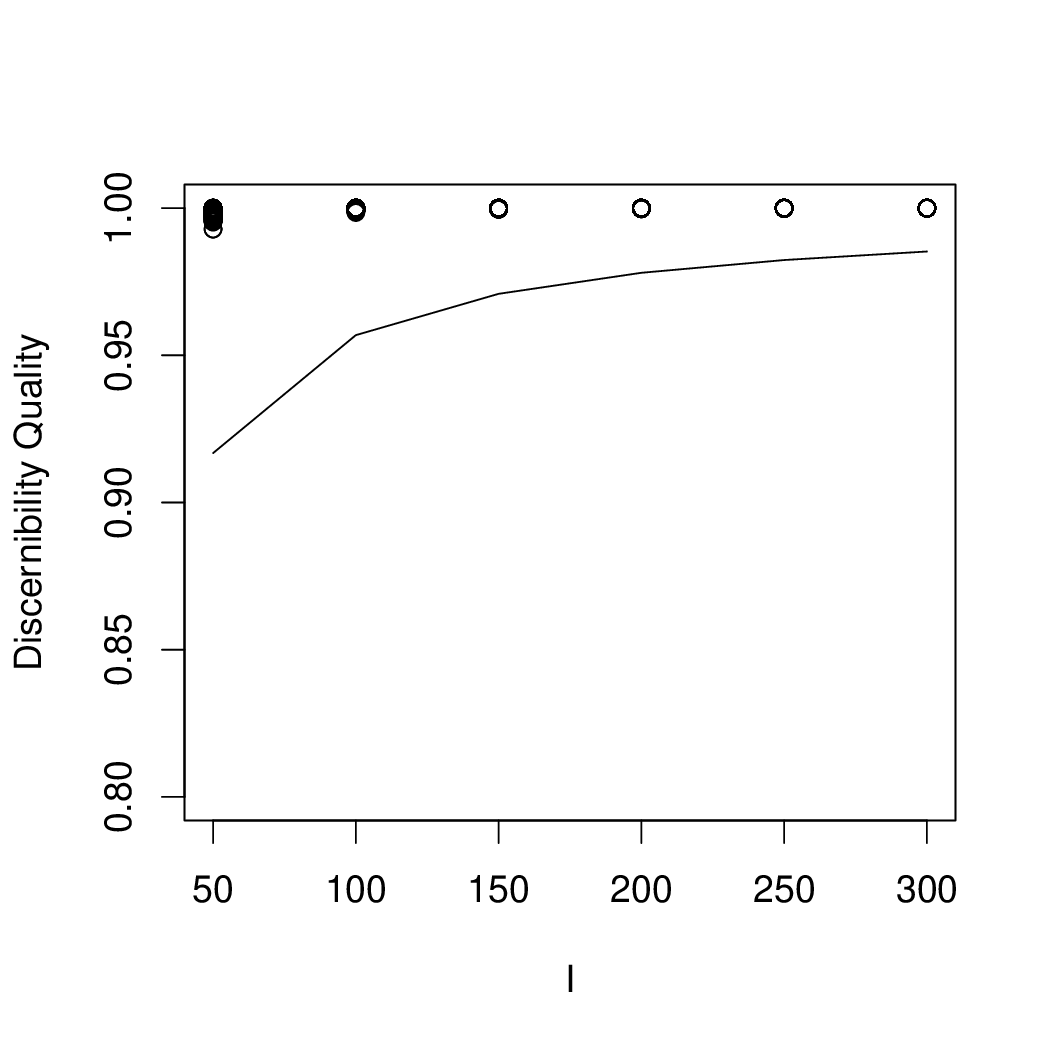}
  }\\
  \subfigure[nursery]{ \label{fig:acc:nursery}
    \includegraphics[width=0.4\textwidth]{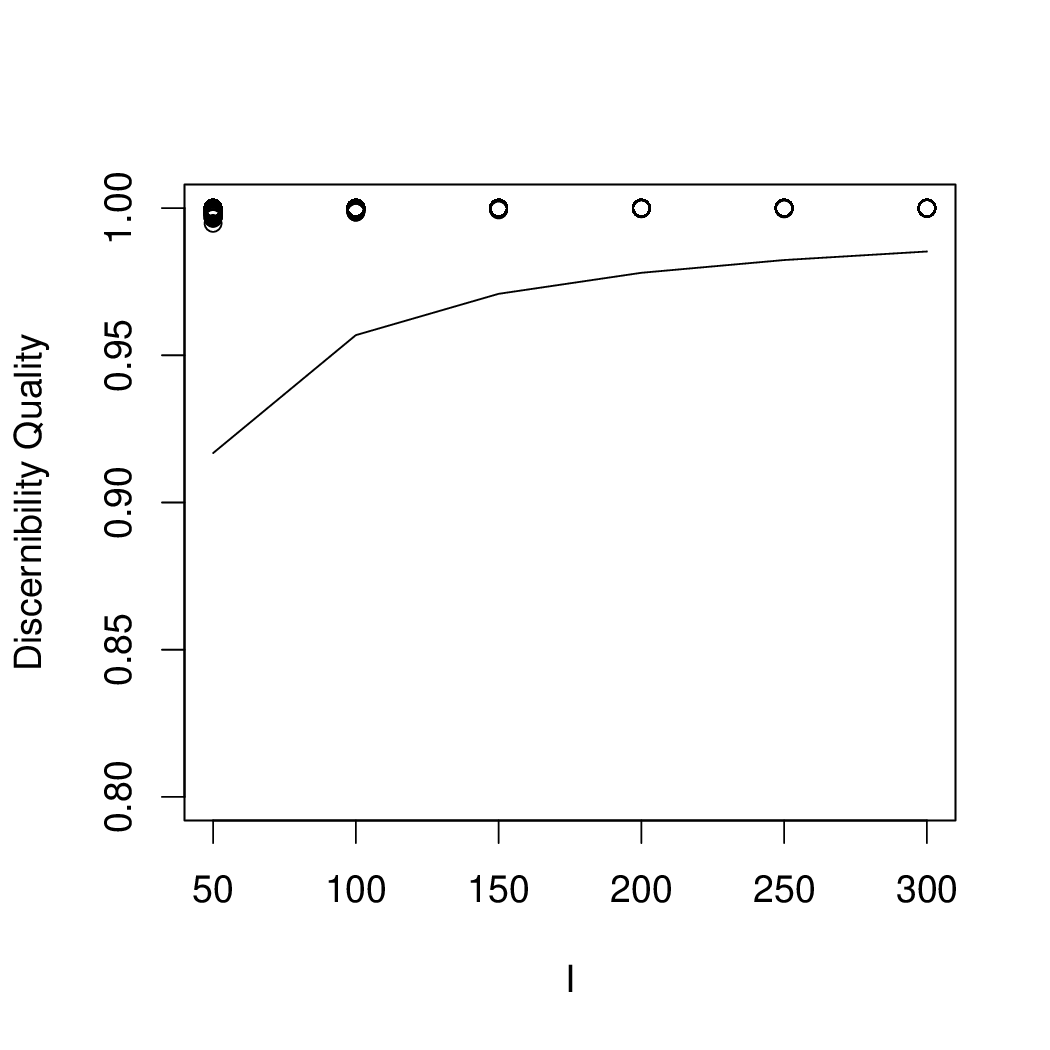}
  }
  \subfigure[poker hand training true]{\label{fig:acc:poker_train}
    \includegraphics[width=0.4\textwidth]{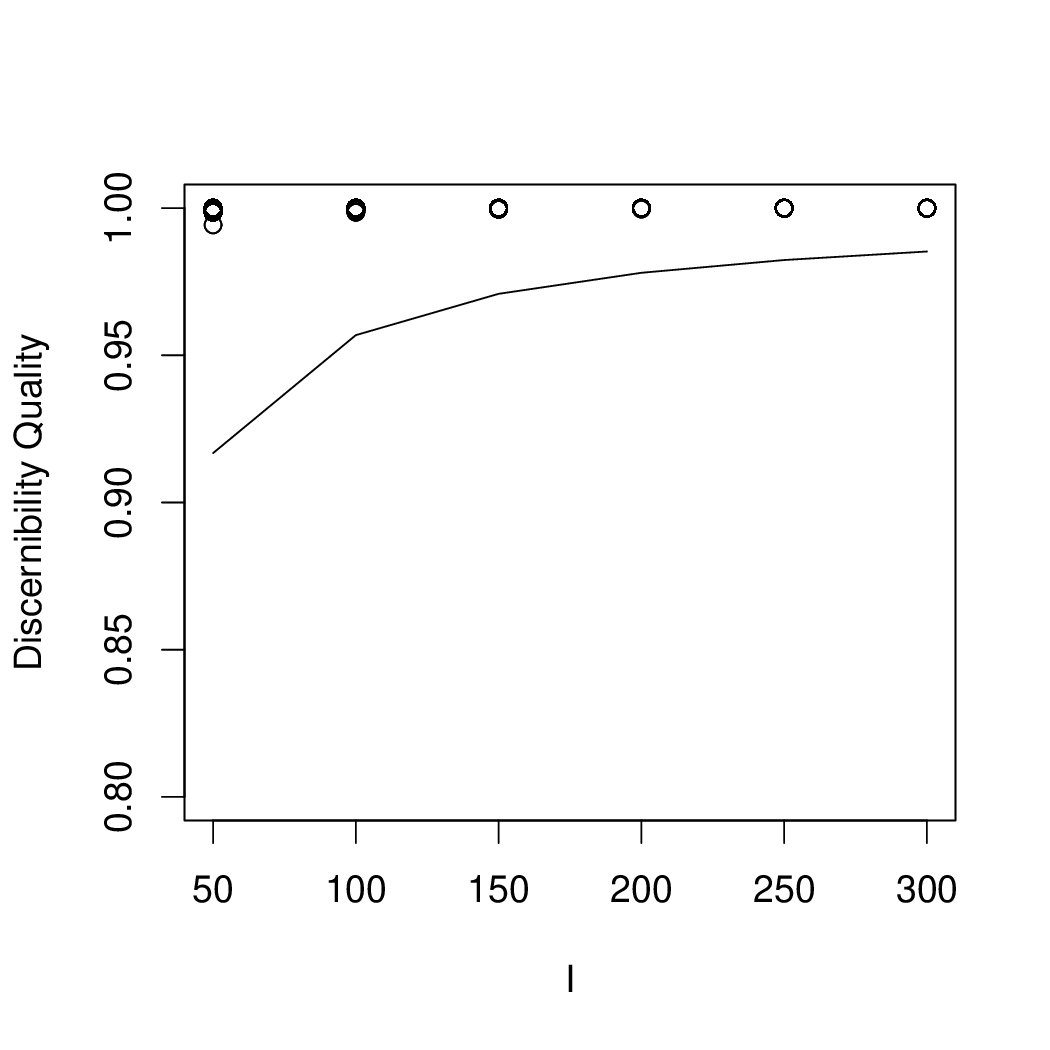}
  }
  \caption{Relation between the discernibility quality of the approximate final reduct and parameter $I$ for each data set.}
  \label{fig:I_acc_each}
\end{figure}

The same trend also held for multi-label data sets. Figure~\ref{fig:I_acc_each_ml} shows the relation between the discernibility quality of the approximate final reduct and parameter $I$ for each multi-label data set. It can be seen that all the discernibility qualities were also larger than expected. The results show that the proposed algorithm can effectively find approximate reducts with discernibility quality larger than expected, no matter how large the number of attributes in the data set. 

\begin{figure}[htbp]
  \centering
  \subfigure[bibtex]{ \label{fig:acc:bibtex}
    \includegraphics[width=0.4\textwidth]{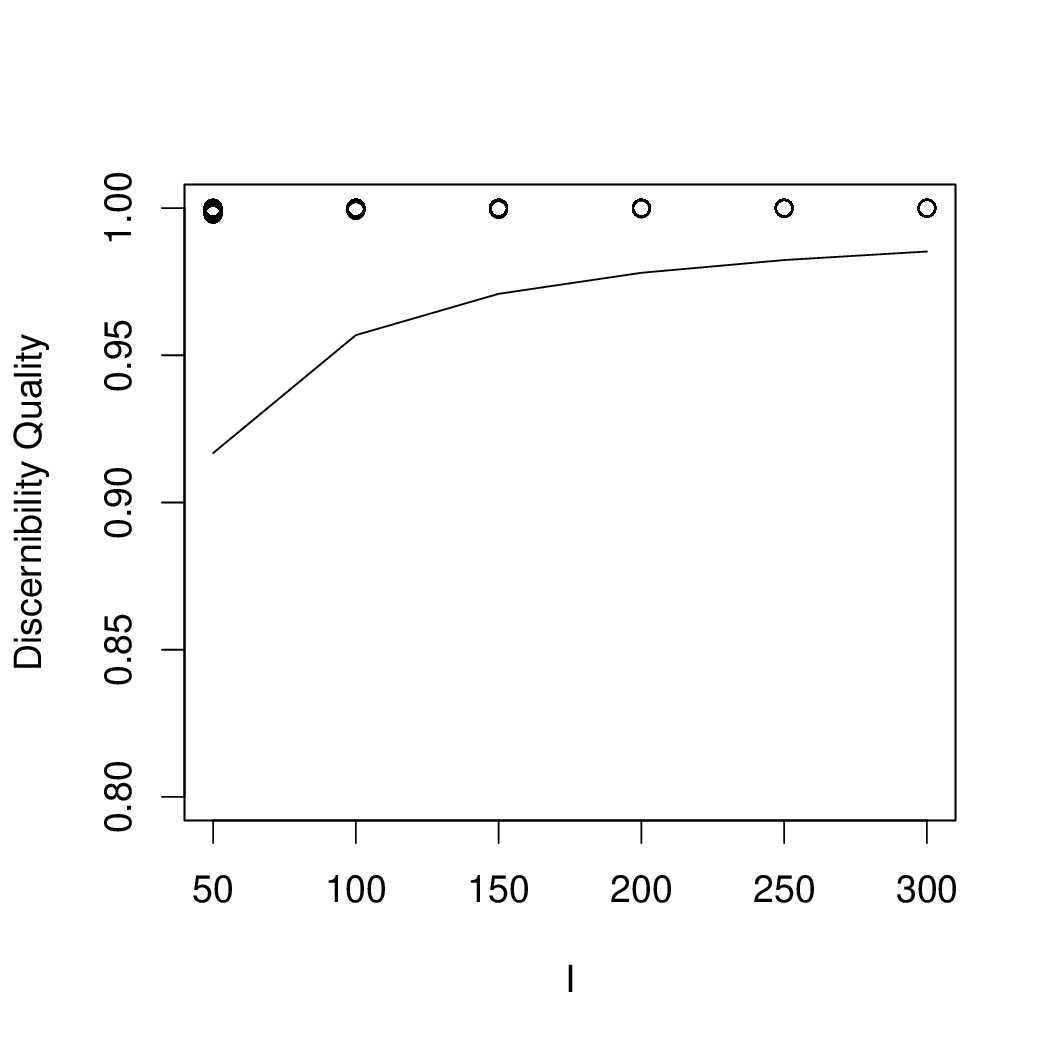}
  }
  \subfigure[tmc]{ \label{fig:acc:tmc}
    \includegraphics[width=0.4\textwidth]{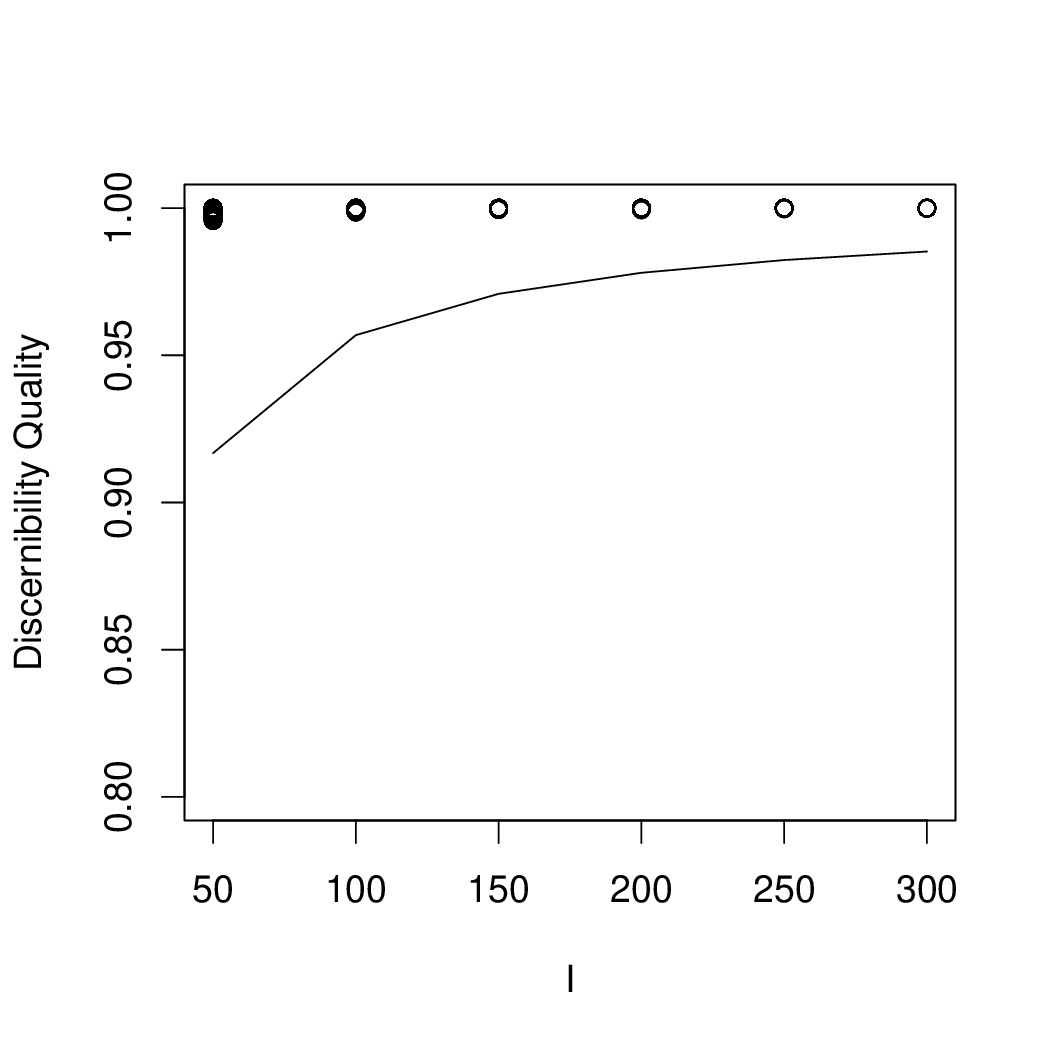}
  }\\
  \subfigure[delicious]{ \label{fig:acc:delicious}
    \includegraphics[width=0.4\textwidth]{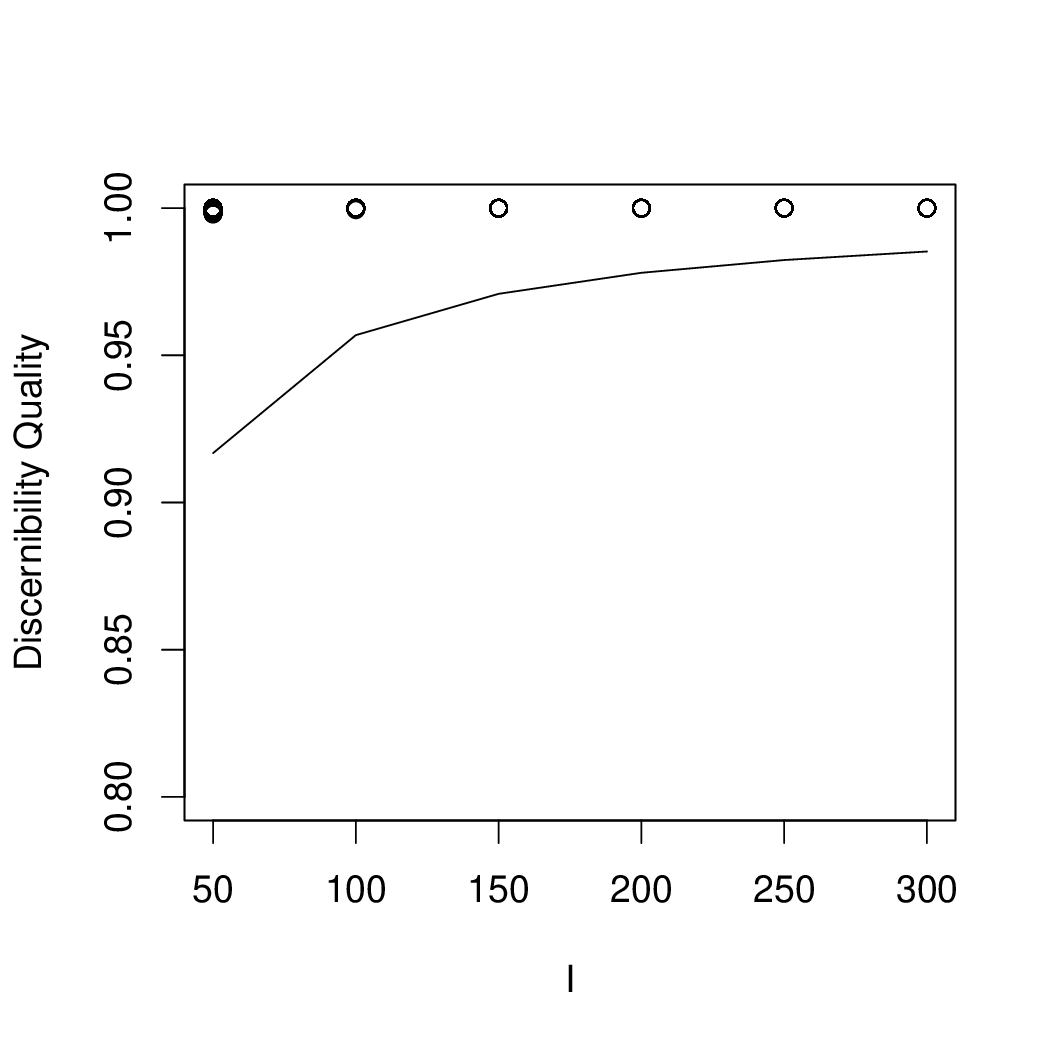}
  }
  \subfigure[SLASHDOT]{ \label{fig:acc:slashdot}
    \includegraphics[width=0.4\textwidth]{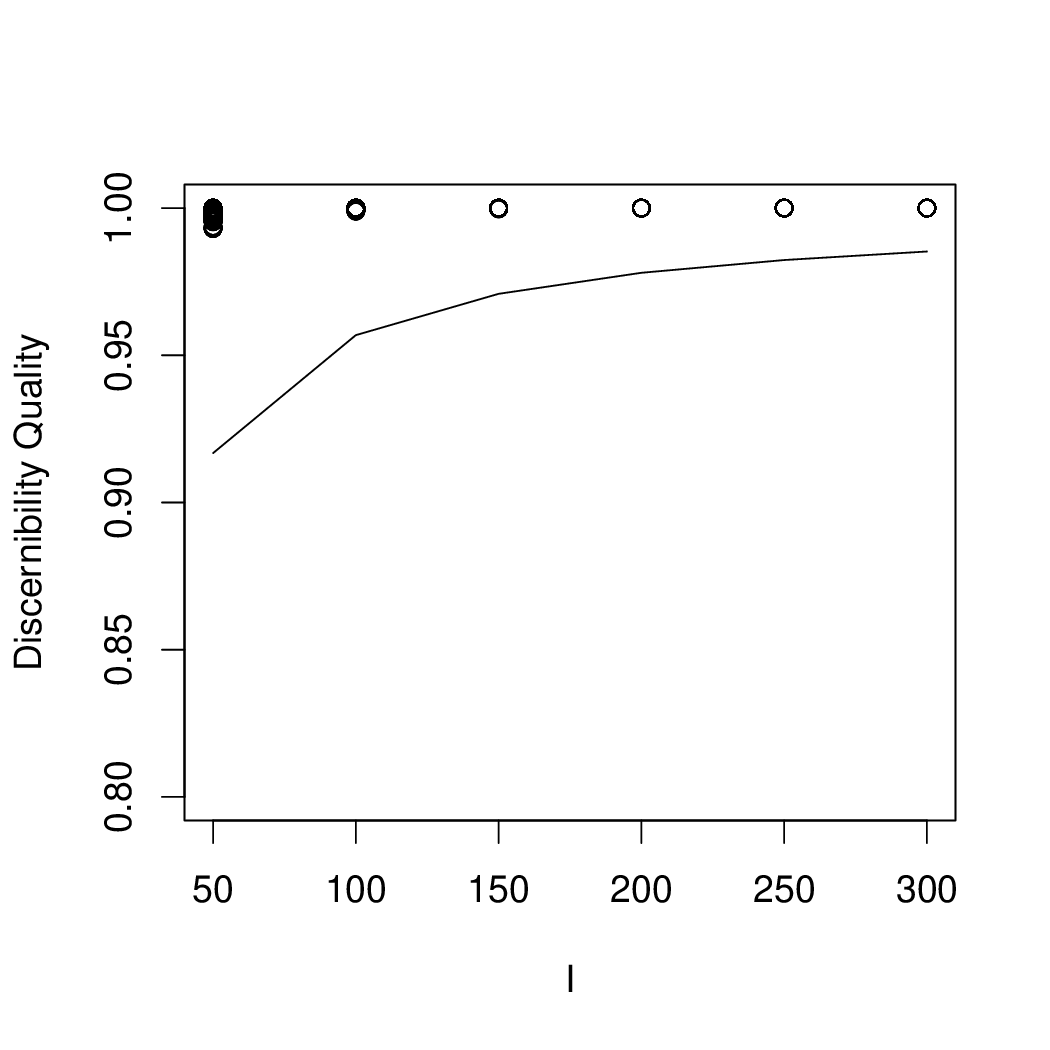}
  }\\
  \subfigure[jrs]{ \label{fig:acc:jrs}
    \includegraphics[width=0.4\textwidth]{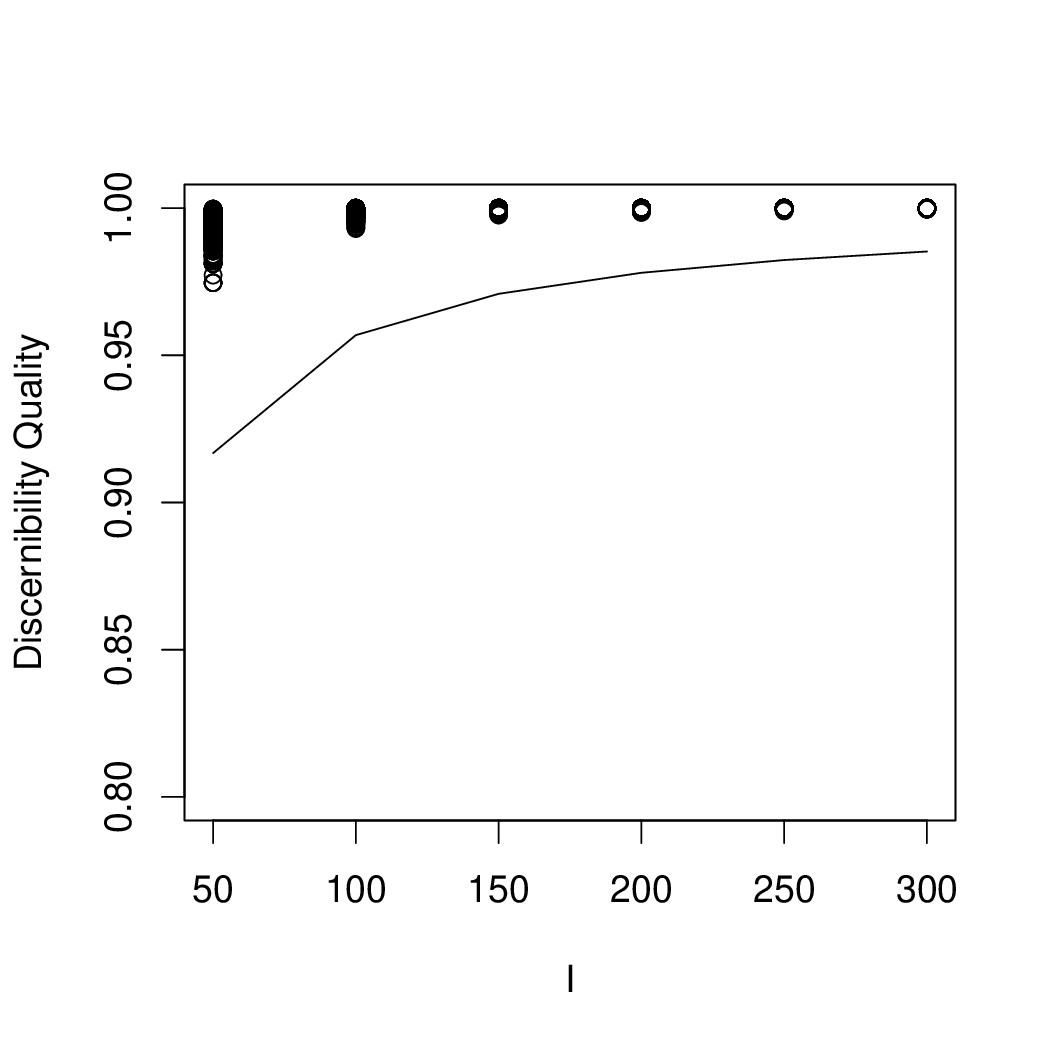}
  }
  \hspace{0.4\textwidth}
  \caption{Relation between the discernibility quality of the approximate final reduct and parameter $I$ for each data set.}
  \label{fig:I_acc_each_ml}
\end{figure}

In addition to its effectiveness, the proposed method can find an approximate reduct using much less time than other traditional methods. We compared the computing time needed for the proposed method and the E-FSA method, which has been shown to be much more efficient than other traditional methods~\citep{Liang2012}, to demonstrate the efficiency of the new method. Taking the six UCI data sets as an example, the proposed method spent less than one tenth the time needed by the E-FSA method.

Meanwhile, the proposed method is much more efficient than the E-FSA method at dealing with data sets that have large numbers of attributes. For example, the time needed by E-FSA to find the approximate reduct for the jrs data set was about three days ($270,363$s) while the proposed method only took $87$s to find an approximate reduct with high discernibility quality ($0.99801$). From table~\ref{tab:results_ml}, it can be seen that the search time of the proposed method was less than one hundredth of the search time of the E-FSA method for most of the multi-label data sets. Meanwhile, the discernibility quality of the final reduct was very high. The discernibility qualities of some approximate reducts even equaled one for the delicious and SLASHDOT data set, i.e., the approximate final reduct was a super set of the true reduct. In summary, the proposed method can find an approximate reduct of the decision table very quickly at the cost of an imperceptible decrease in discernibility quality of the final reduct.

\subsection{Sensitivity analysis of $I_R$}

In the proposed algorithm, parameter $I_R$ influences the selection of the approximate random POS-table. The smaller $I_R$ is, the harder it is to obtain an approximate POS-table. In our experiment, two data sets from UCI, adult and shuttle, as well as all five multi-label data sets were used to analyze the influence of $I_R$. Figures~\ref{fig:IR:shuttleacc} and~\ref{fig:IR:adultacc} show the relationship between the mean discernibility quality of $1,000$ approximate reducts and the two parameters $I$ and $I_R$ for the shuttle and adult data sets, respectively. From these two figures, it can be seen that $I$ mainly influences the discernibility quality. Although a small $I_R$ makes it more difficult to select an approximate POS-table, it only slightly influenced the discernibility quality. This means that the discernibility quality was robust with respect to $I_R$ in our experiment.

\begin{figure}[htbp]
  \subfigure[discernibility quality of shuttle]{
    \label{fig:IR:shuttleacc}
    \includegraphics[trim=0 28 0 28,height=0.158\textheight]{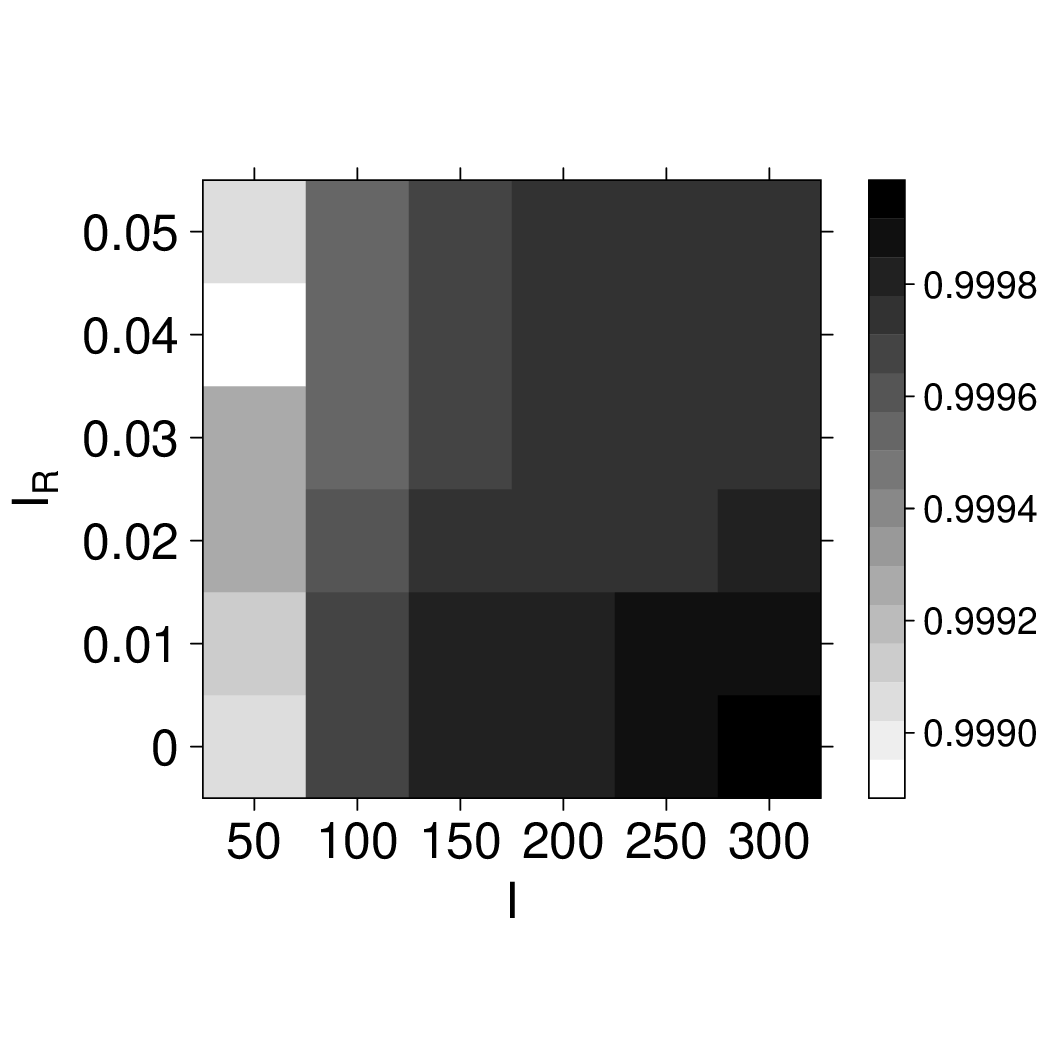}}
  \subfigure[search time for shuttle]{
    \label{fig:IR:shuttletime}
    \includegraphics[trim=0 4 0 4,height=0.16\textheight]{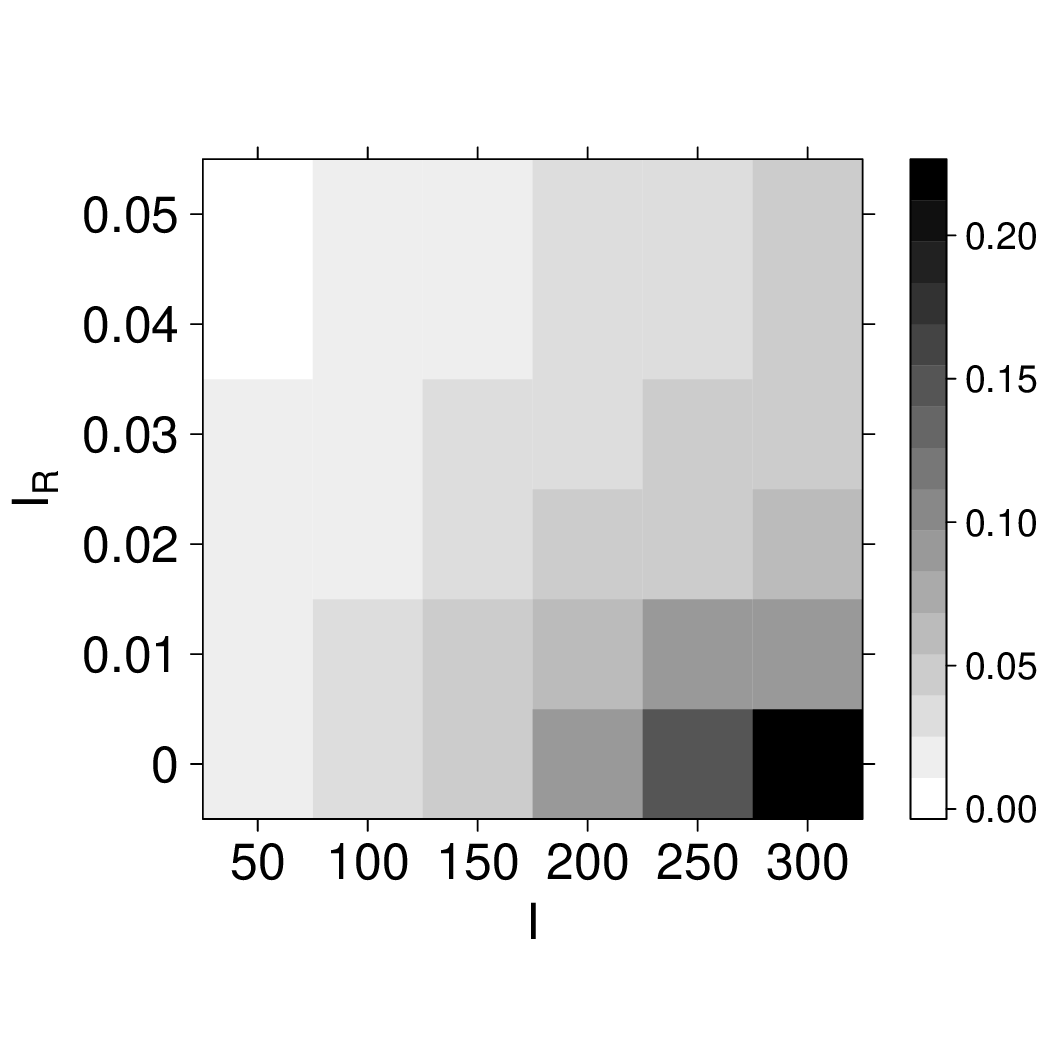}}
  \subfigure[reduct length of shuttle]{
    \label{fig:IR:shuttleredlen}
    \includegraphics[height=0.16\textheight]{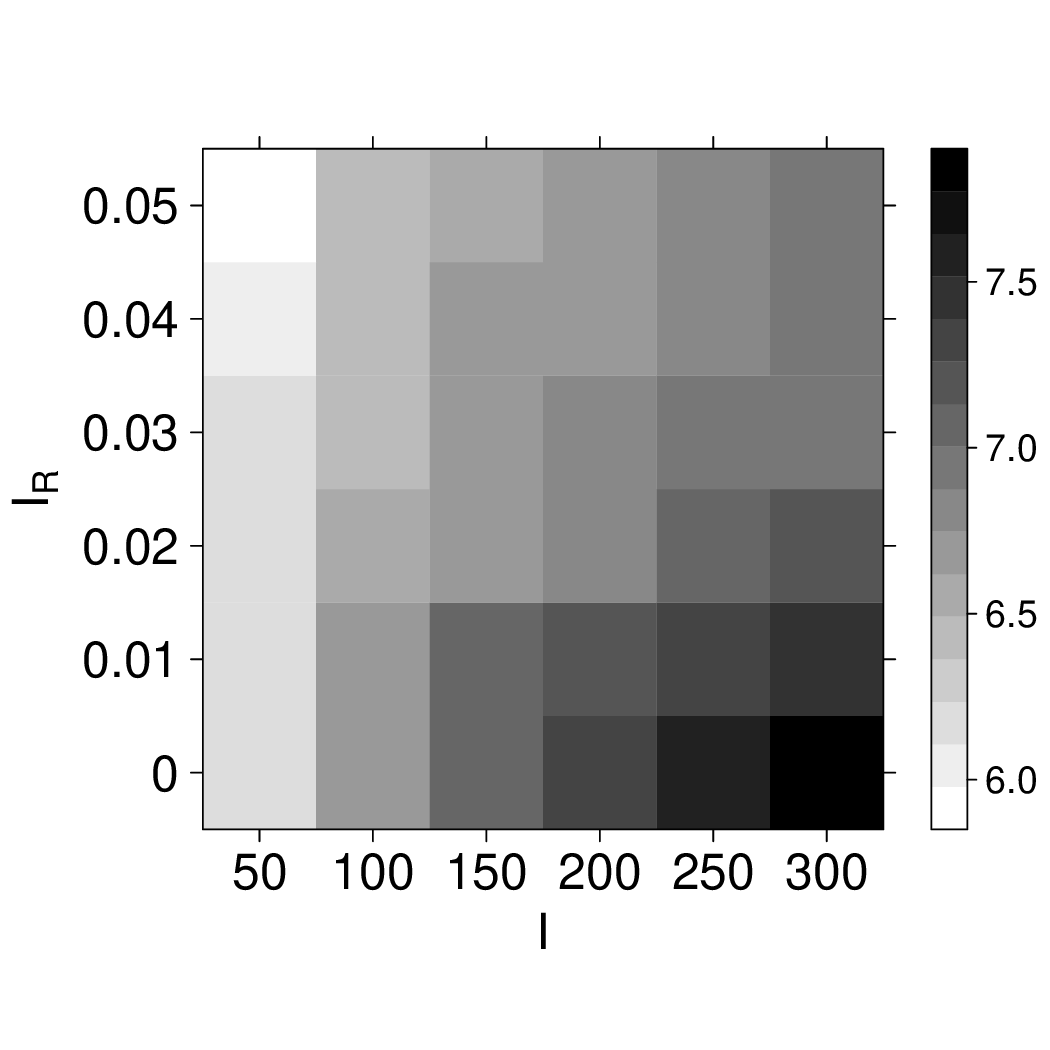}}\\
  \subfigure[discernibility quality of adult]{
    \label{fig:IR:adultacc}
    \includegraphics[trim=0 28 0 28,height=0.158\textheight]{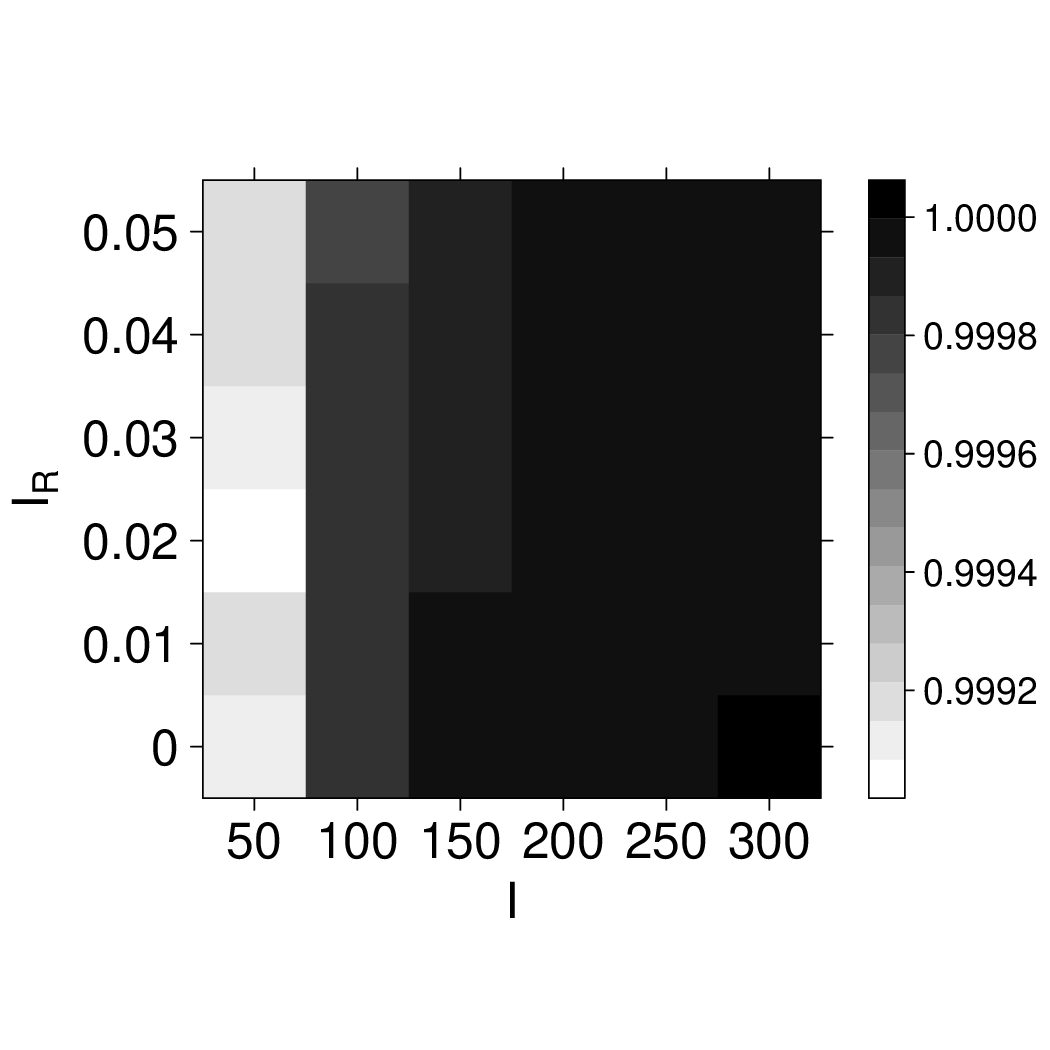}}
  \subfigure[search time for adult]{
    \label{fig:IR:adulttime}
    \includegraphics[trim=0 4 0 4,height=0.16\textheight]{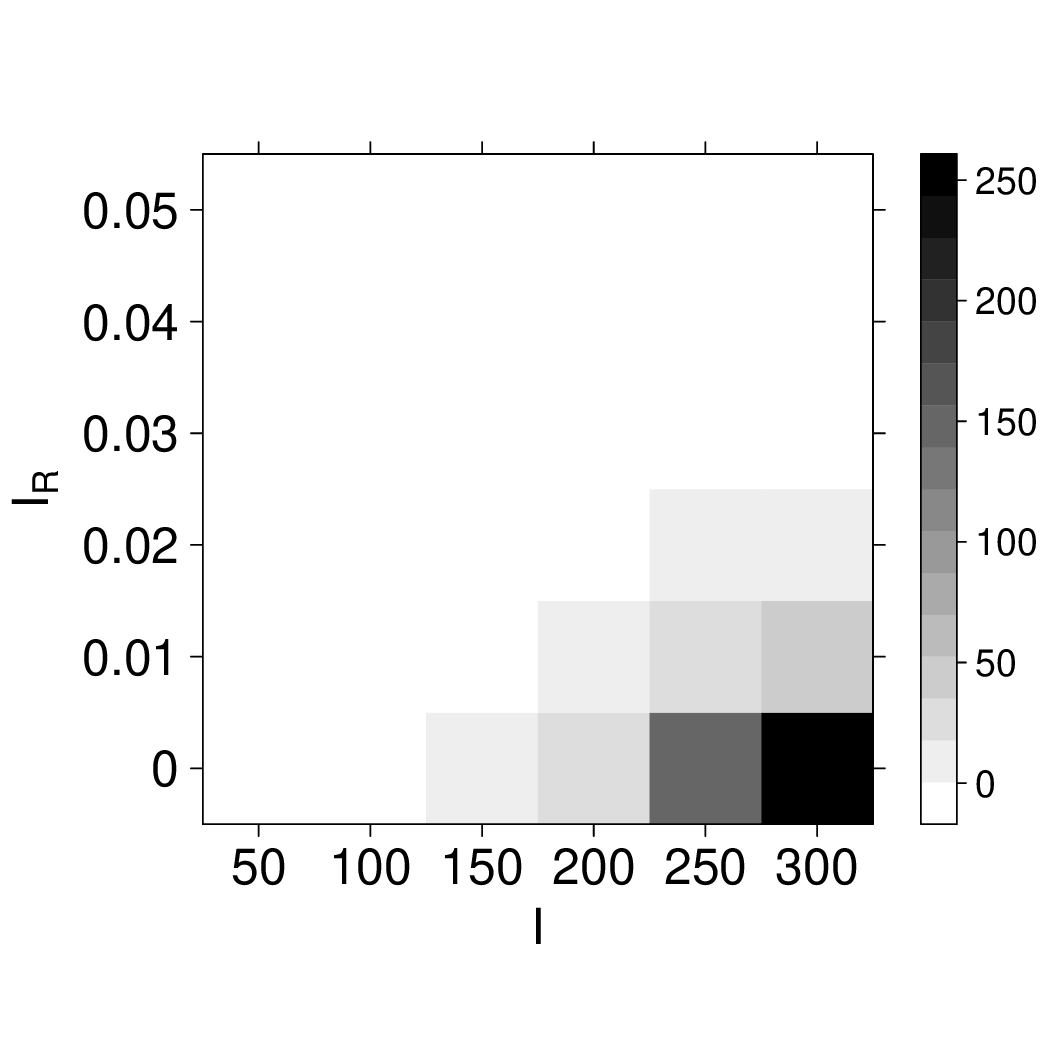}}
  \subfigure[reduct length of adult]{
    \label{fig:IR:adultredlen}
    \includegraphics[height=0.16\textheight]{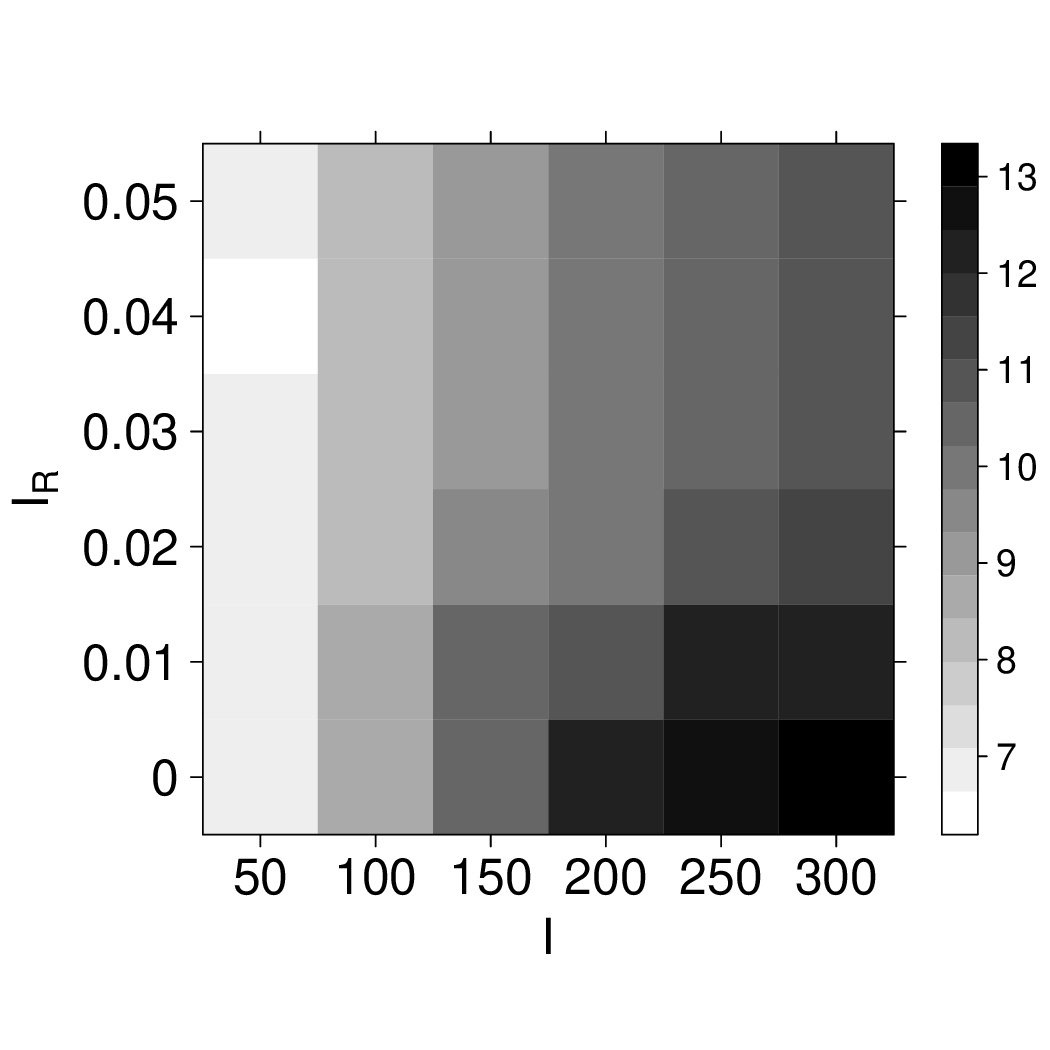}}
  \caption{Average discernibility quality, average search time and average length of the approximate final reducts using $I$ from 50 to 300 and $I_R$ from 0 to 0.05 for the shuttle and adult data sets.}
  \label{fig:IR}
\end{figure}

However, $I_R$ greatly influenced the speed of the algorithm greatly when $I$ was large. Figures~\ref{fig:IR:shuttletime} and~\ref{fig:IR:adulttime} show the average time taken to find an approximate reduct for the shuttle and adult data sets, respectively. It is obvious that when $I_R$ was smaller, more time needed to find a reduct. Meanwhile, when $I$ was larger, more time is needed to find a reduct. Take adult data set as an example. When $I=300$ and $I_R=0$, the average search time to find a reduct was 243.5s (about 4m). If we increased $I_R$ to 0.01, the average search time to find an approximate reduct was sharply decreased to 36.56s. If $I_R$ was further increased to $0.03$, the average time required was 0.46s. However, the mean discernibility quality only decreased from 0.9999985 to 0.9999773 when $I_R$ was increased from 0.1 to 0.5. From this experiment, it can be seen that the use of the approximate POS-table not only speed up the search process, but also has a very limited negative influences on the discernibility quality of the final approximate reduct. Consequently, it is reasonable to increase $I_R$ to acquire a sufficiently good approximate reduct in reasonable time, when $I$ is large enough in terms of Equation~\eqref{eq:pzeta}, as the time required to find an approximate reduct is impractically long if $I_R$ is too small.

Finally, the average length of the reduct increased when $I$ increased or $I_R$ decreased. Figures~\ref{fig:IR:shuttleredlen} and~\ref{fig:IR:adultredlen} show the average number of attributes of the final reducts using different parameters for the shuttle and adult data set, respectively. It can be seen that stricter parameters lead to more attributes in the final reducts. However, although the average number of attributes corresponding to the strictest condition ($I=300$ and $I_R=0$) is almost doubled that of the loosest parameter ($I=50$, $I_R=0.05$), the average discernibility quality increased only a little. This indicates that even the loosest parameters used in the experiment can select attributes that can discern most object pairs in $EPD^S$.

Similarly, these patterns can also be found in the multi-label data sets. The average discernibility quality, search time and length of the approximate final reducts are shown in Figures \ref{fig:adq_ml} to \ref{fig:aredlen_ml}, respectively. However, the differences in the average discernibility quality, time used and length of the approximate final reducts using different $I_R$ but the same $I$ was not obvious for data set SLASHDOT. The reason for this is that the proportion of objects in the boundary region was very small, and these objects were not likely to be drawn into the POS-table for small $I$. Meanwhile, almost all objects in the data set were used to find the approximate reduct when $I$ was large, as the total number of objects in the data set SLASHDOT is very small.

\begin{figure}[htbp]
  \small
  \centering
  \subfigure[SLASHDOT]{\includegraphics[height=0.158\textheight]{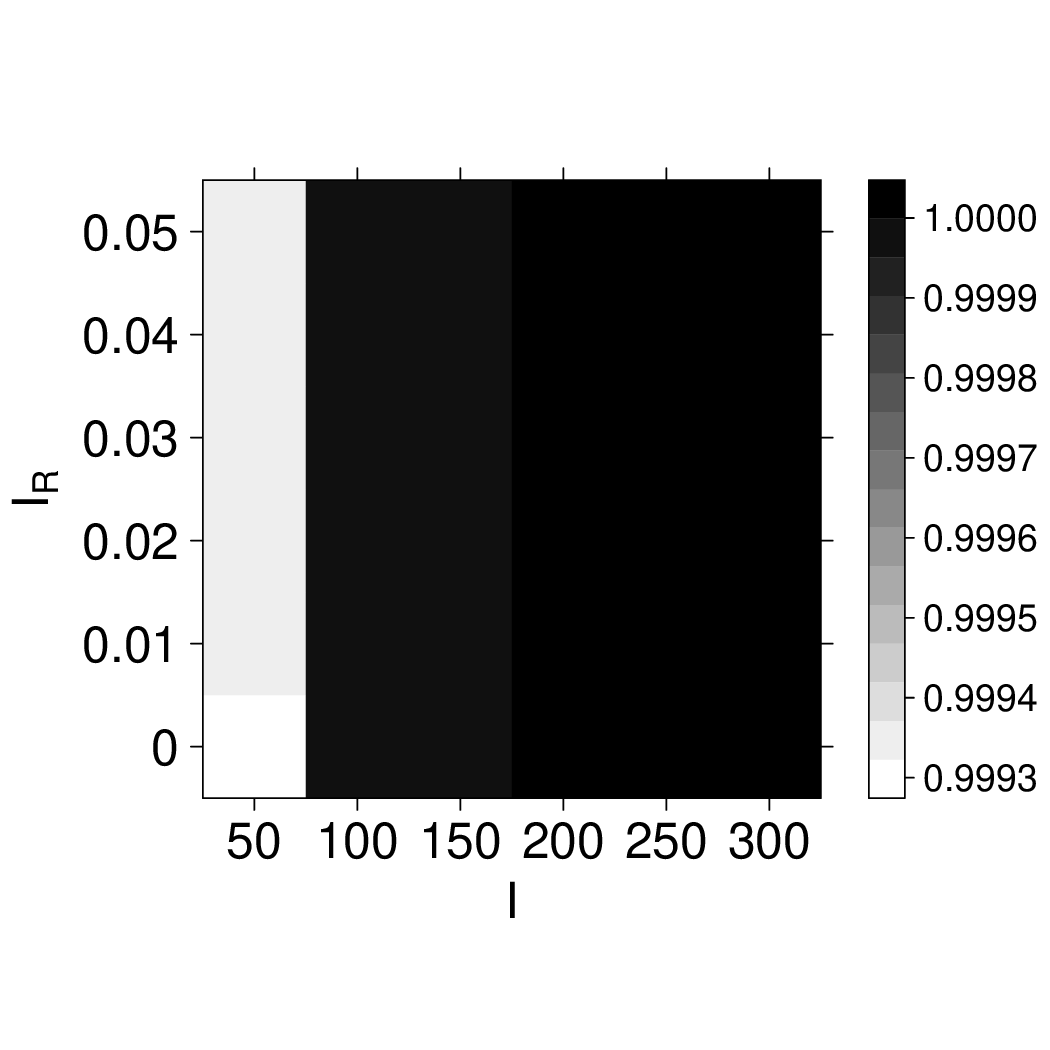}}
  \subfigure[bibtex]{\includegraphics[height=0.158\textheight]{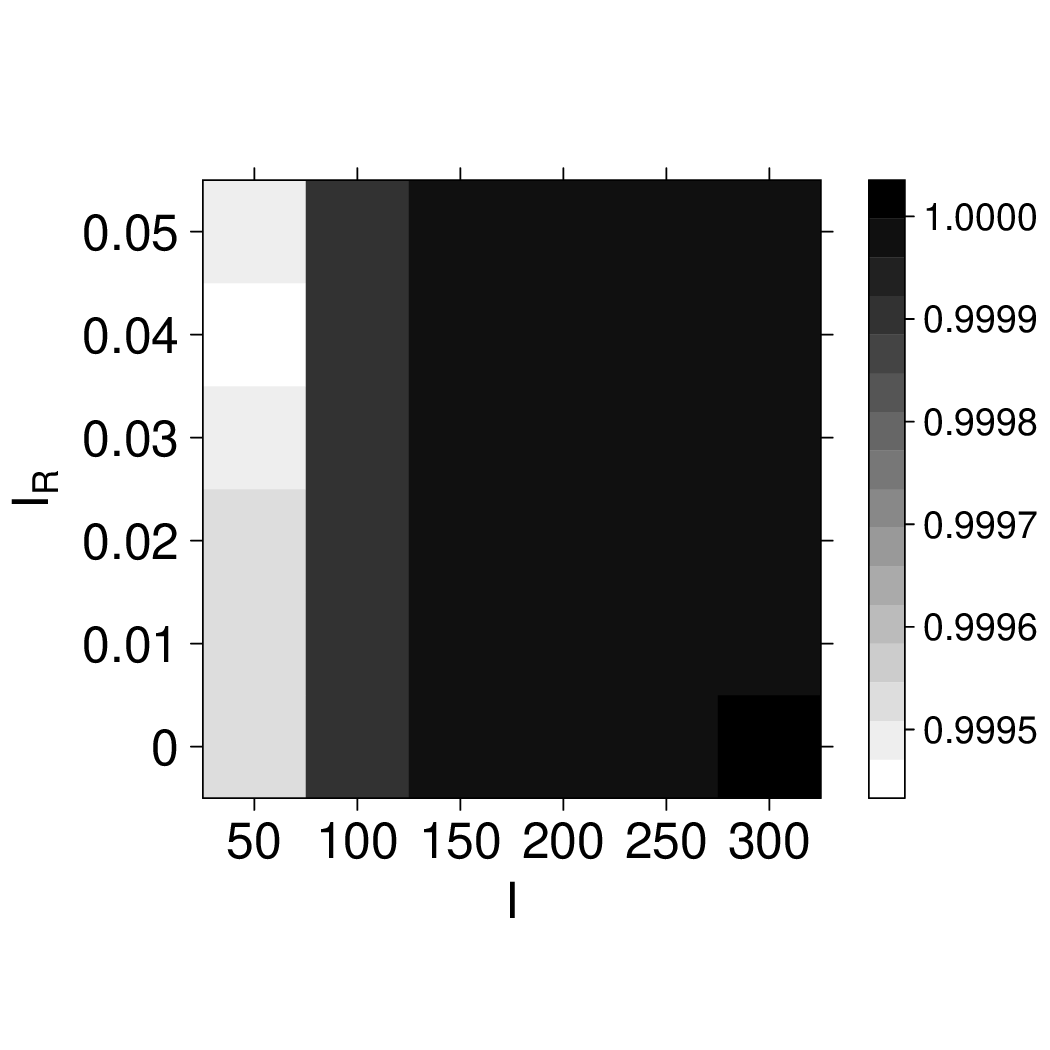}}
  \subfigure[jrs]{\includegraphics[height=0.158\textheight]{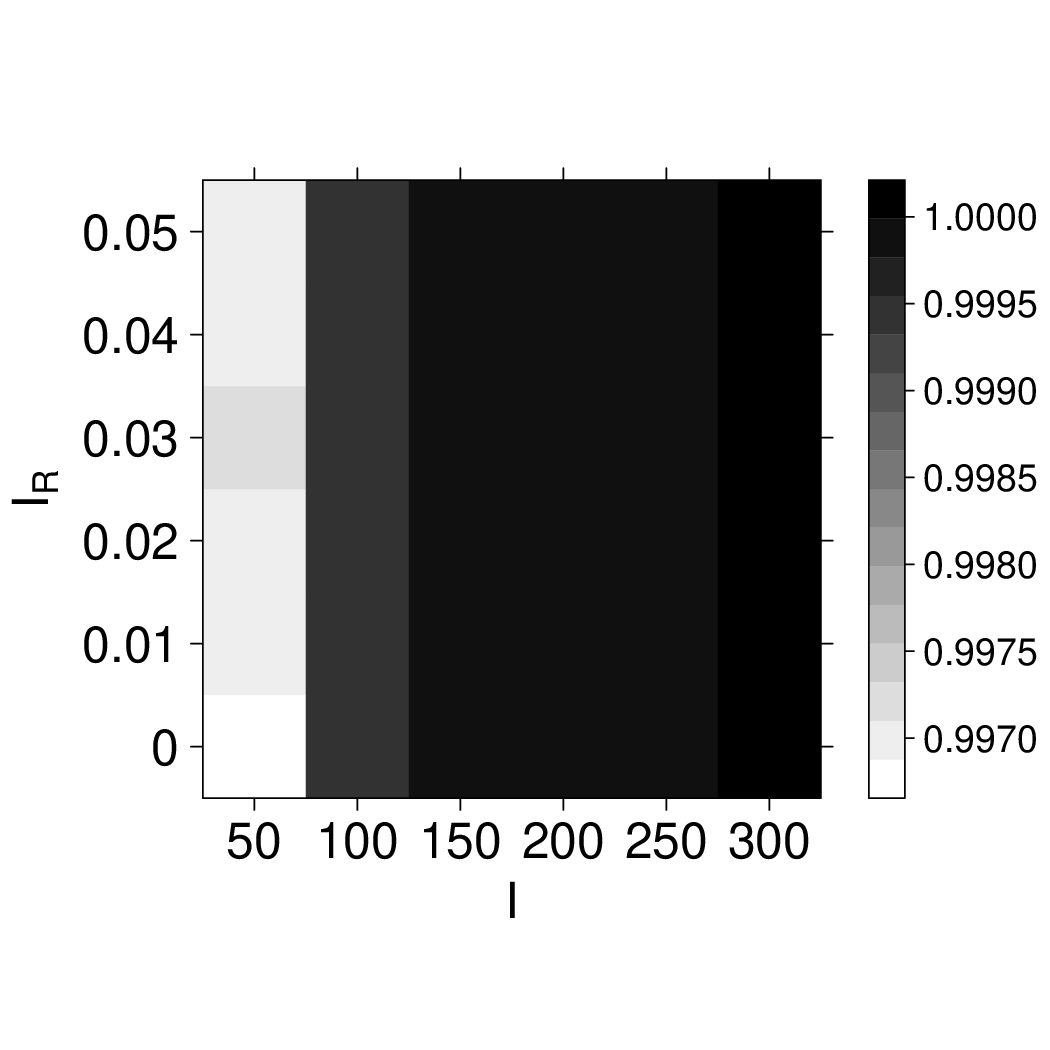}}
  \subfigure[delicious]{\includegraphics[height=0.158\textheight]{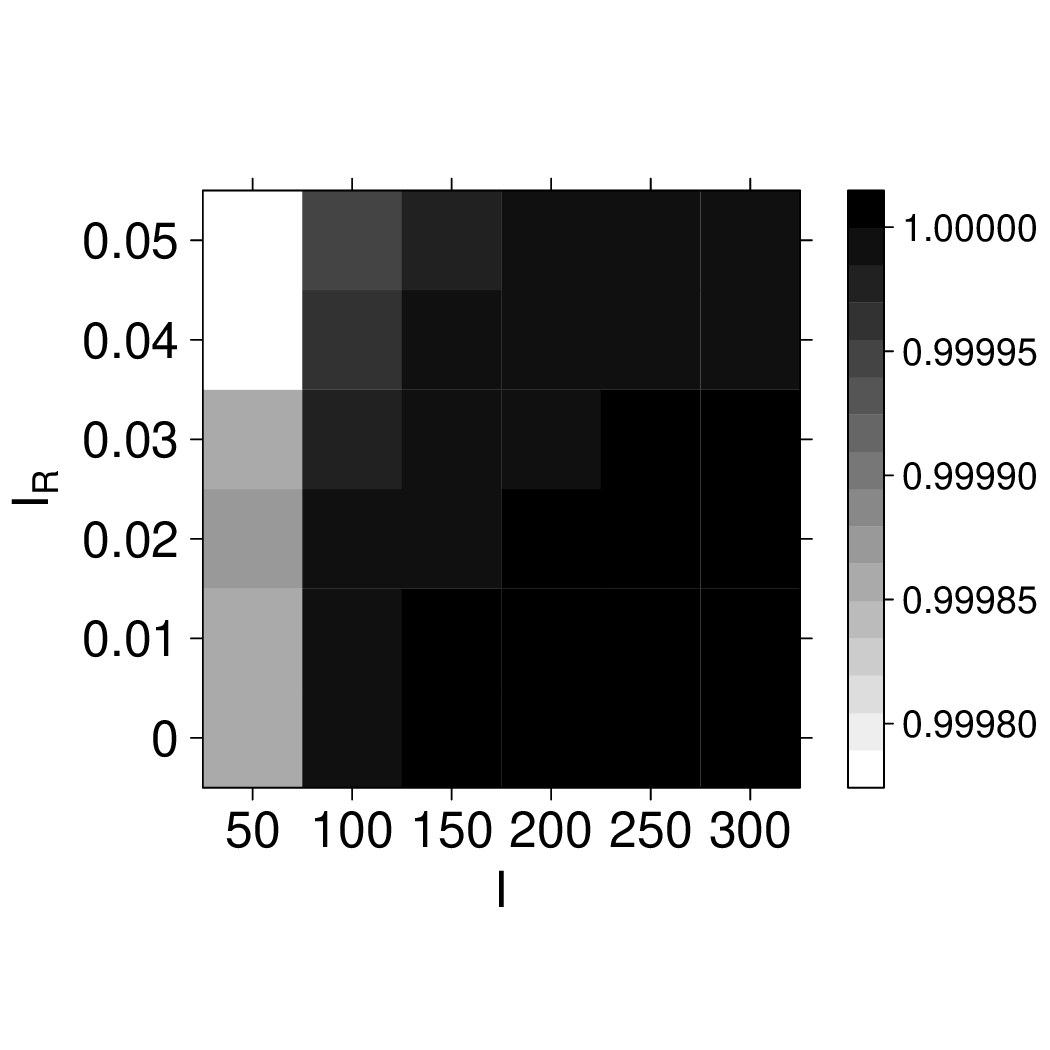}}
  \subfigure[tmc]{\includegraphics[height=0.158\textheight]{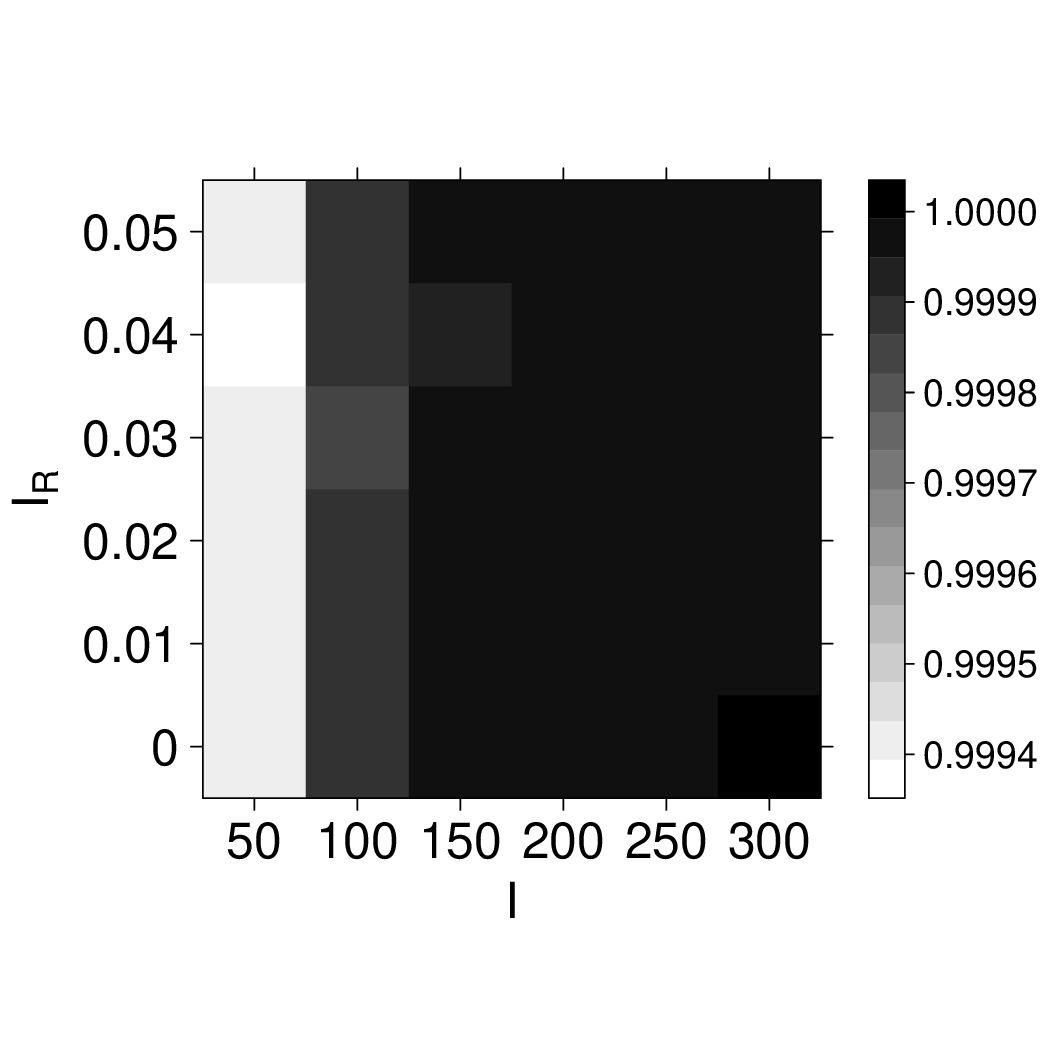}}
  \caption{Average discernibility quality of the approximate final reducts using $I$ from 50 to 300 and $I_R$ from 0 to 0.05 of the multi-label data sets.}
  \label{fig:adq_ml}
\end{figure}

\begin{figure}[htbp]
  \small
  \centering
  \subfigure[SLASHDOT]{\includegraphics[height=0.16\textheight]{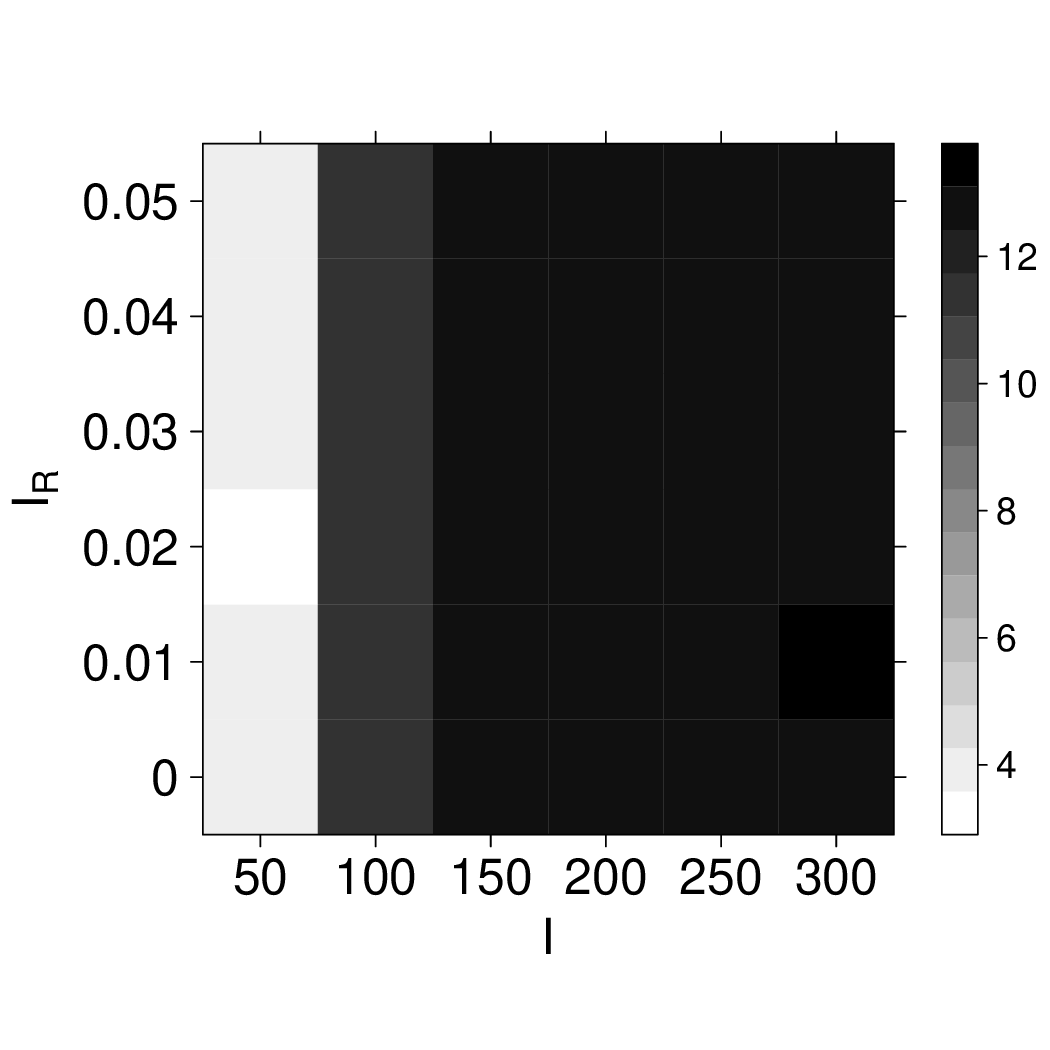}}
  \subfigure[bibtex]{\includegraphics[height=0.16\textheight]{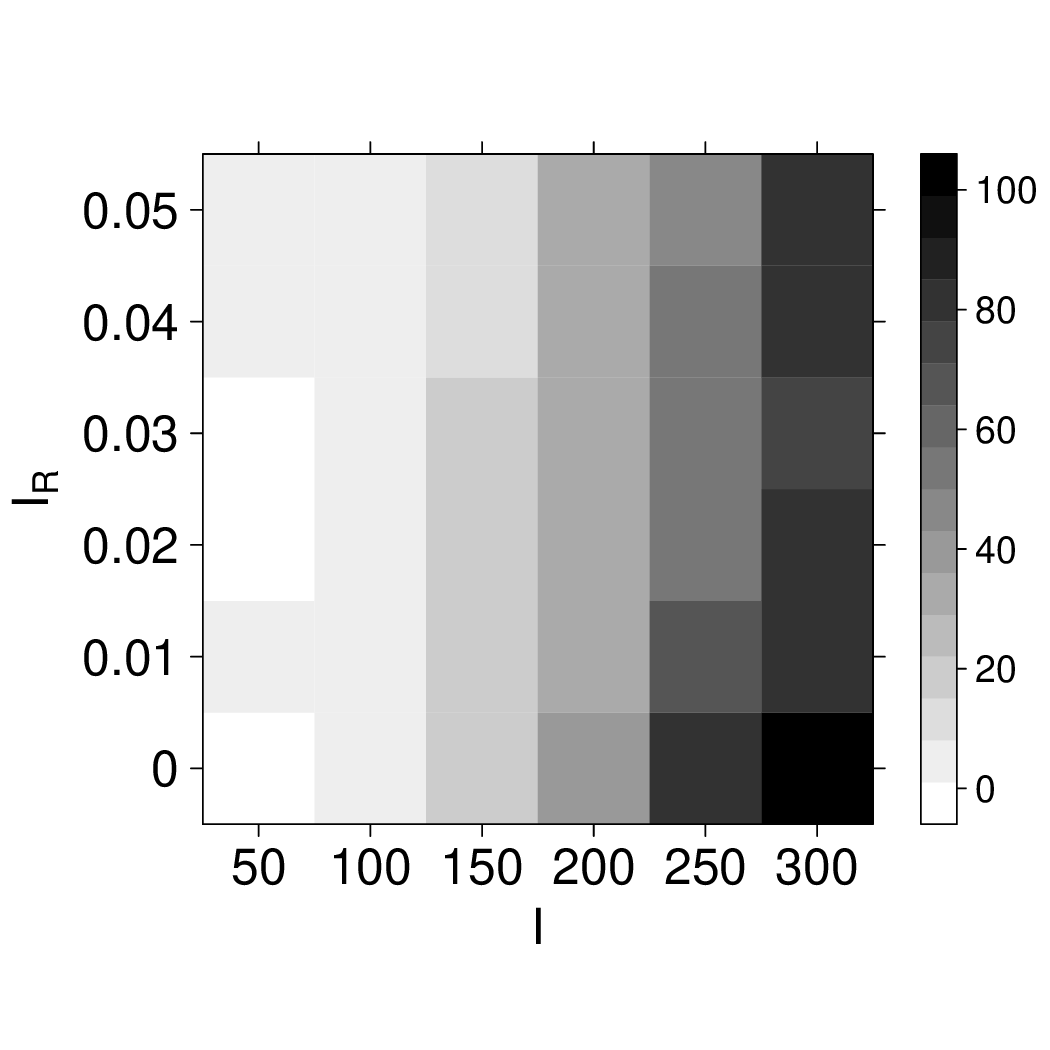}}
  \subfigure[jrs]{\includegraphics[height=0.16\textheight]{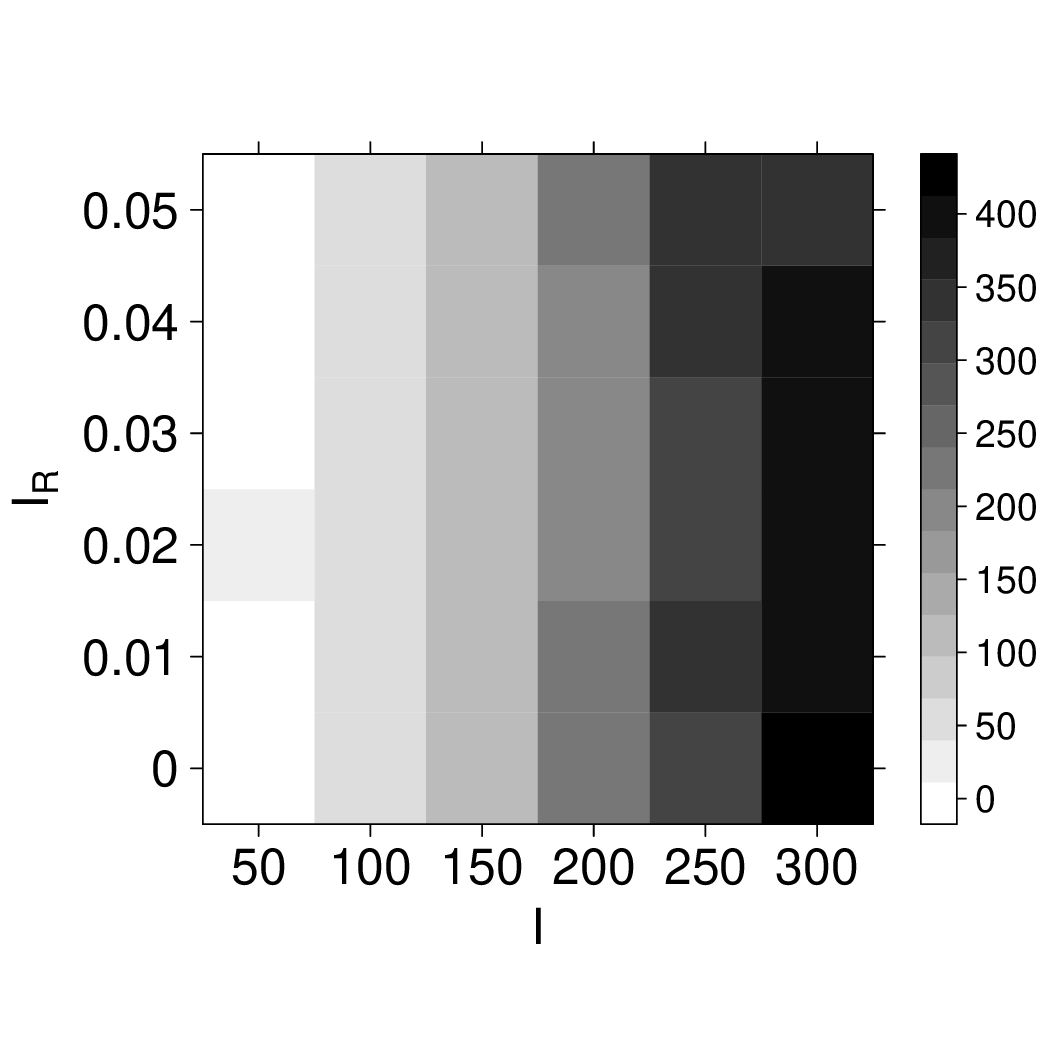}}
  \subfigure[delicious]{\includegraphics[height=0.16\textheight]{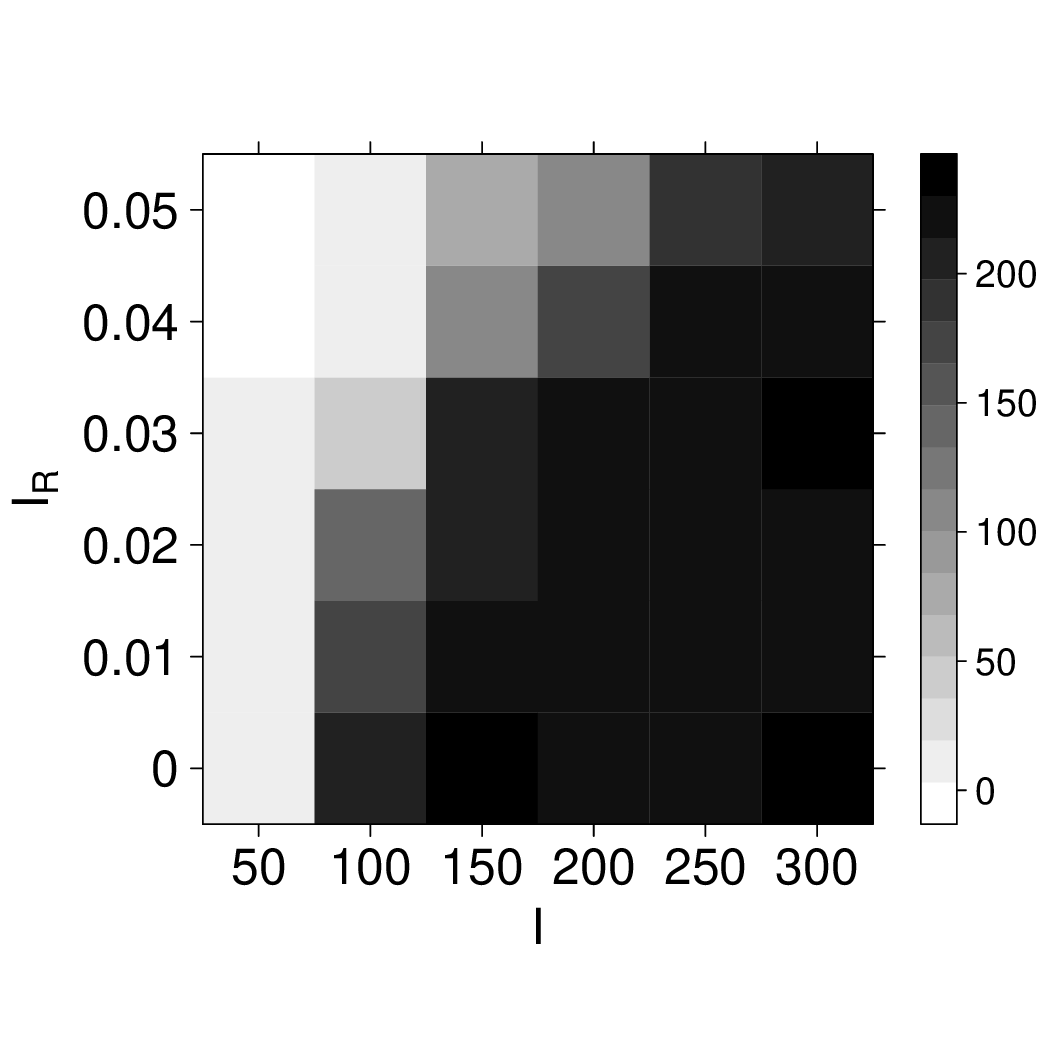}}
  \subfigure[tmc]{\includegraphics[height=0.16\textheight]{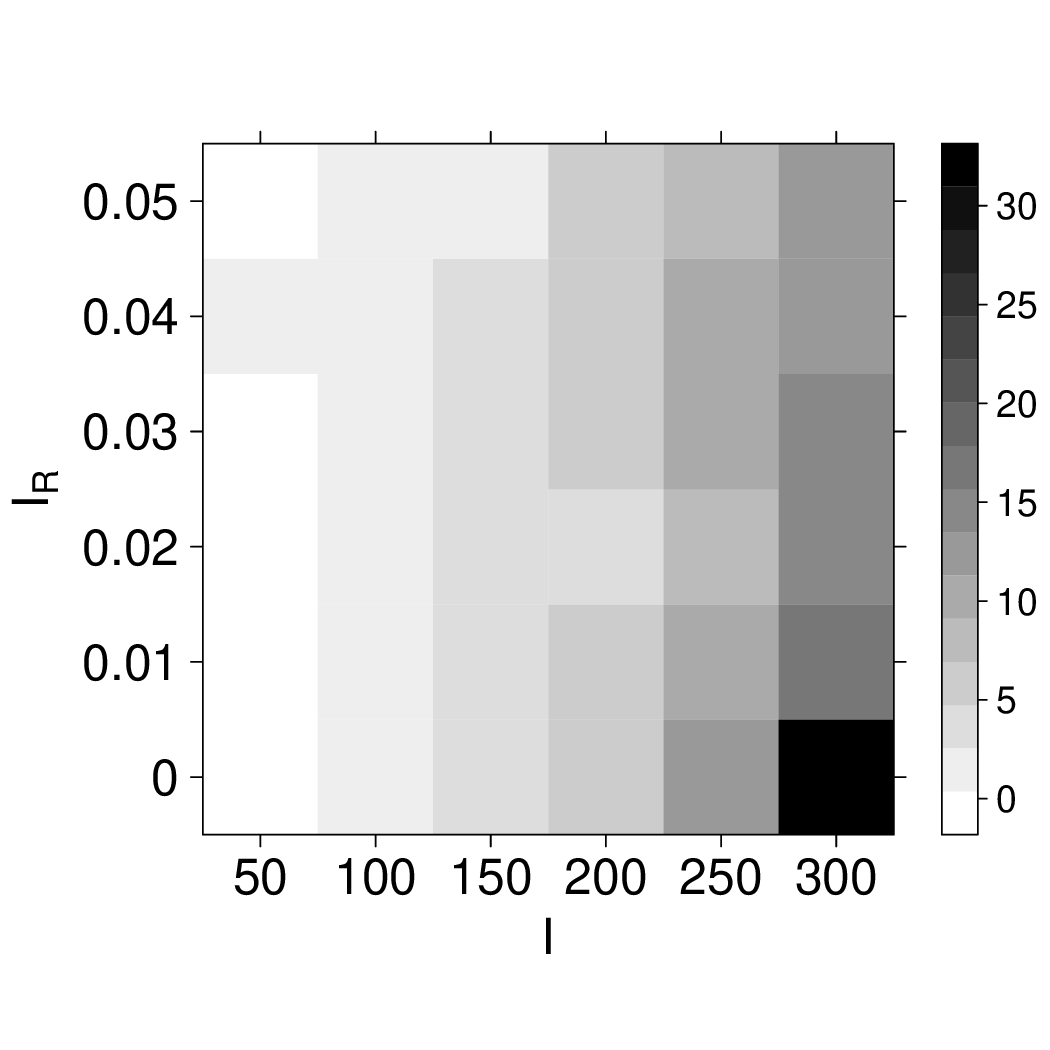}}
  \caption{Average time taken to find the approximate reducts using $I$ from 50 to 300 and $I_R$ from 0 to 0.05 for the multi-label data sets.}
  \label{fig:atime_ml}
\end{figure}

\begin{figure}[htbp]
  \small
  \centering
  \subfigure[SLASHDOT]{\includegraphics[height=0.16\textheight]{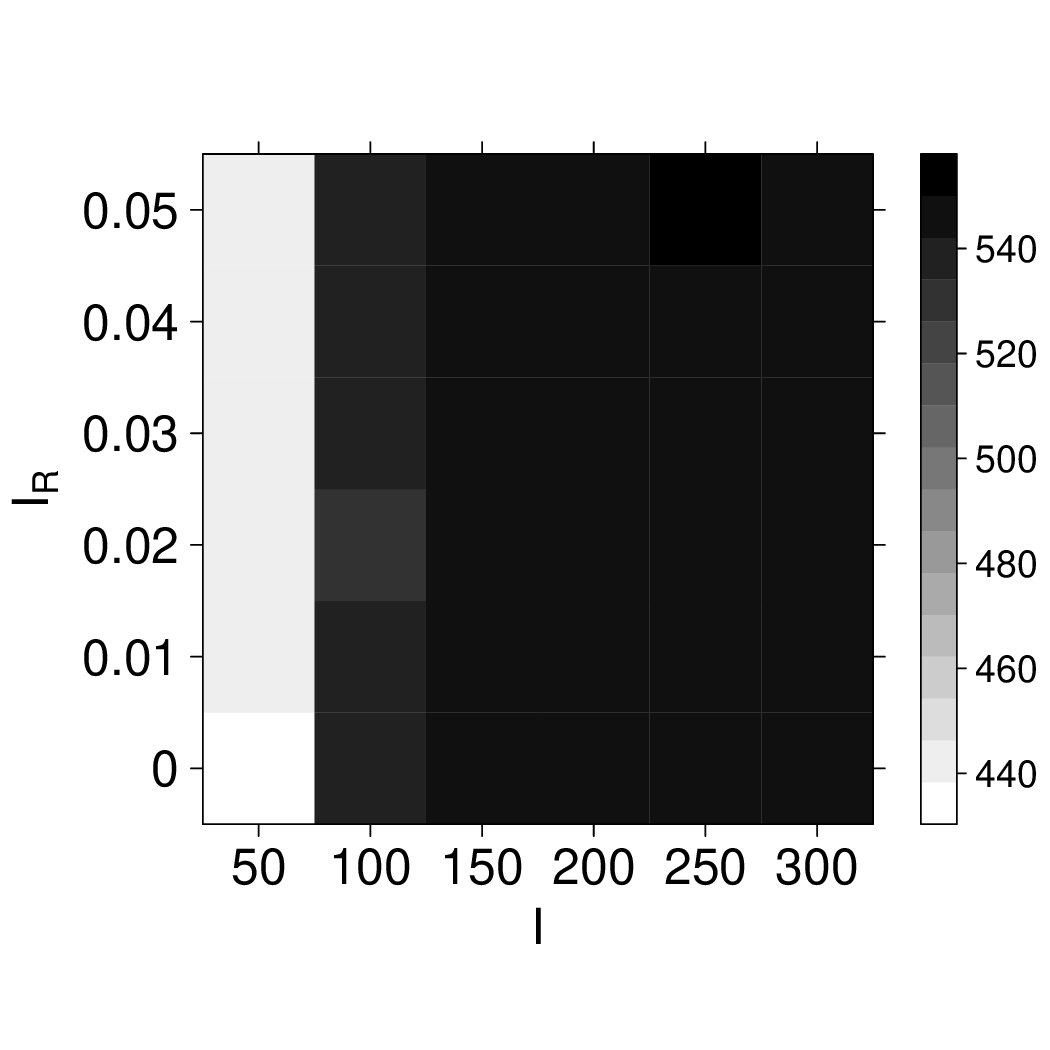}}
  \subfigure[bibtex]{\includegraphics[height=0.16\textheight]{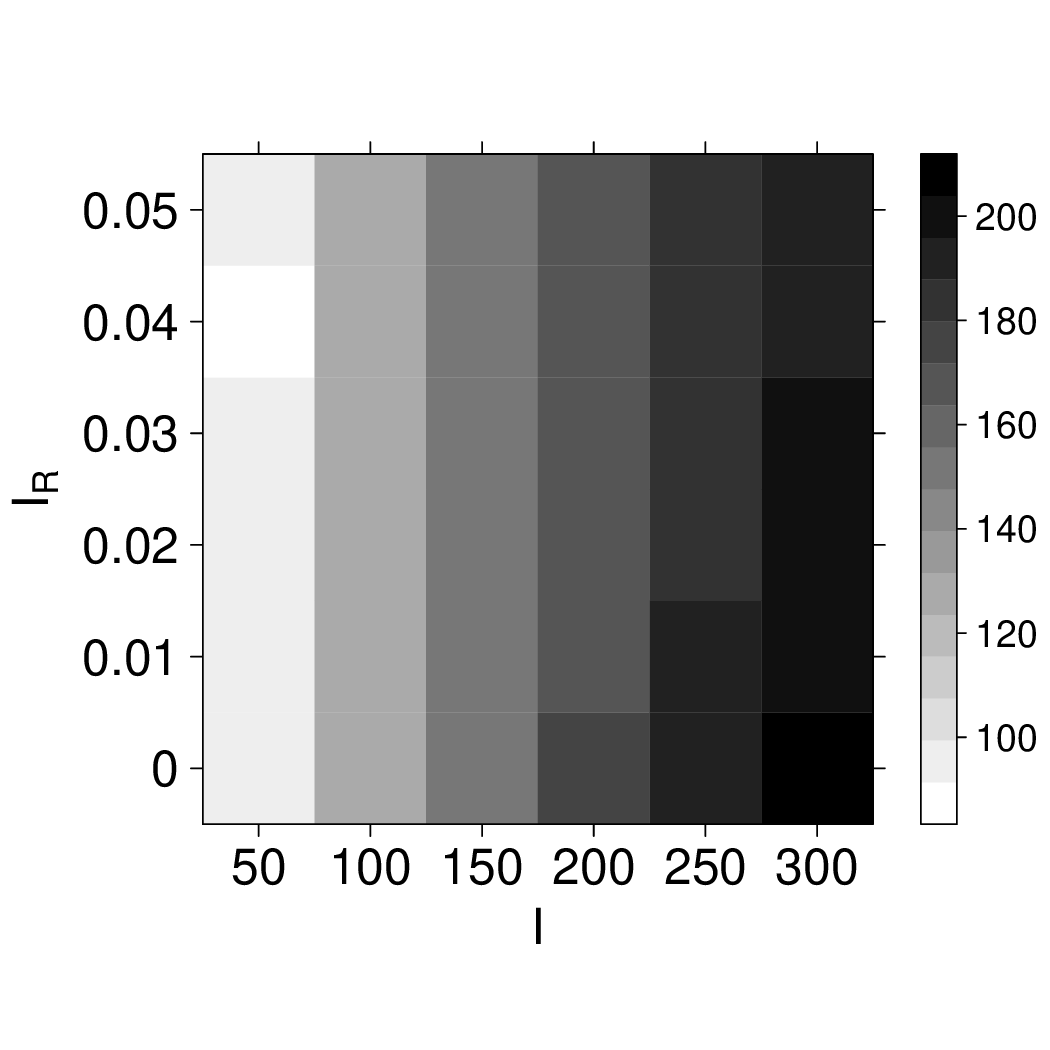}}
  \subfigure[jrs]{\includegraphics[height=0.16\textheight]{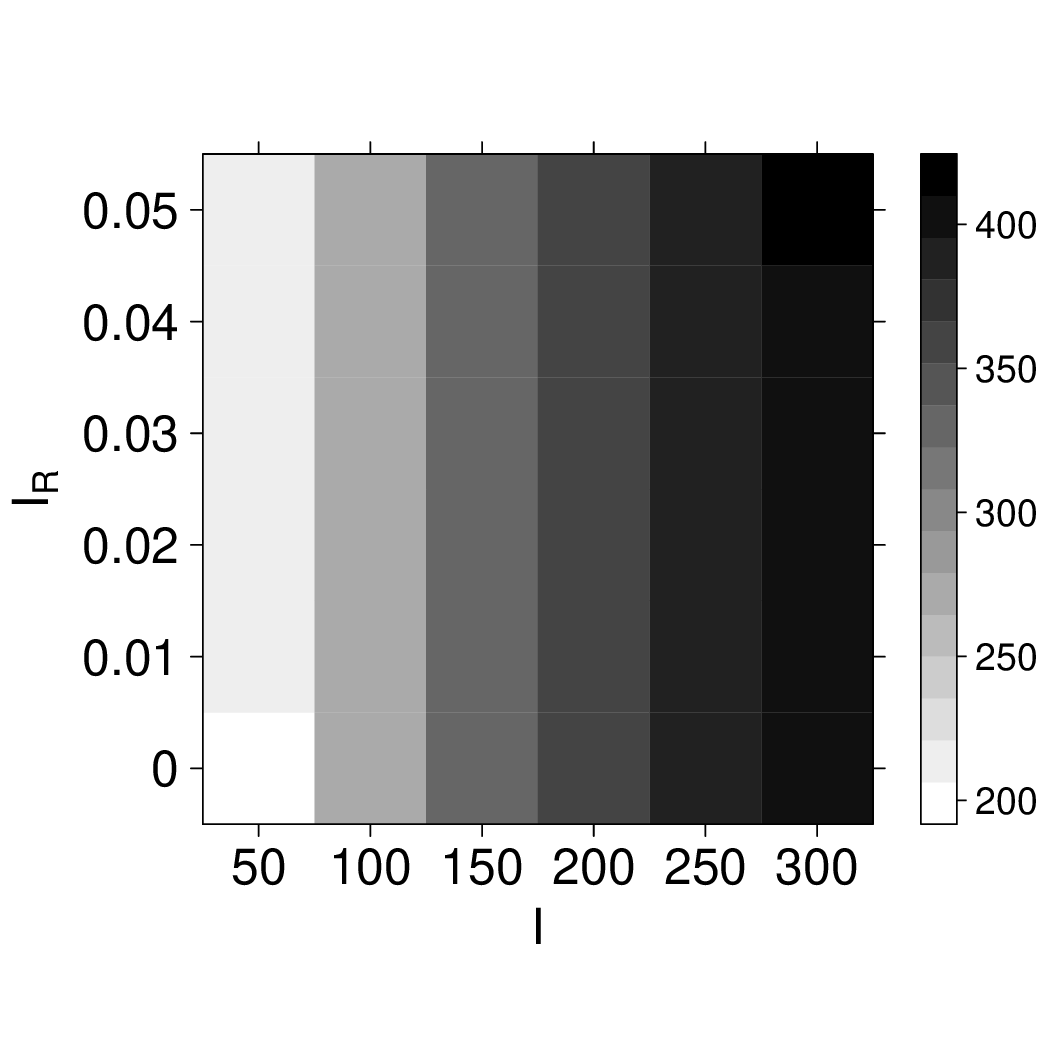}}
  \subfigure[delicious]{\includegraphics[height=0.16\textheight]{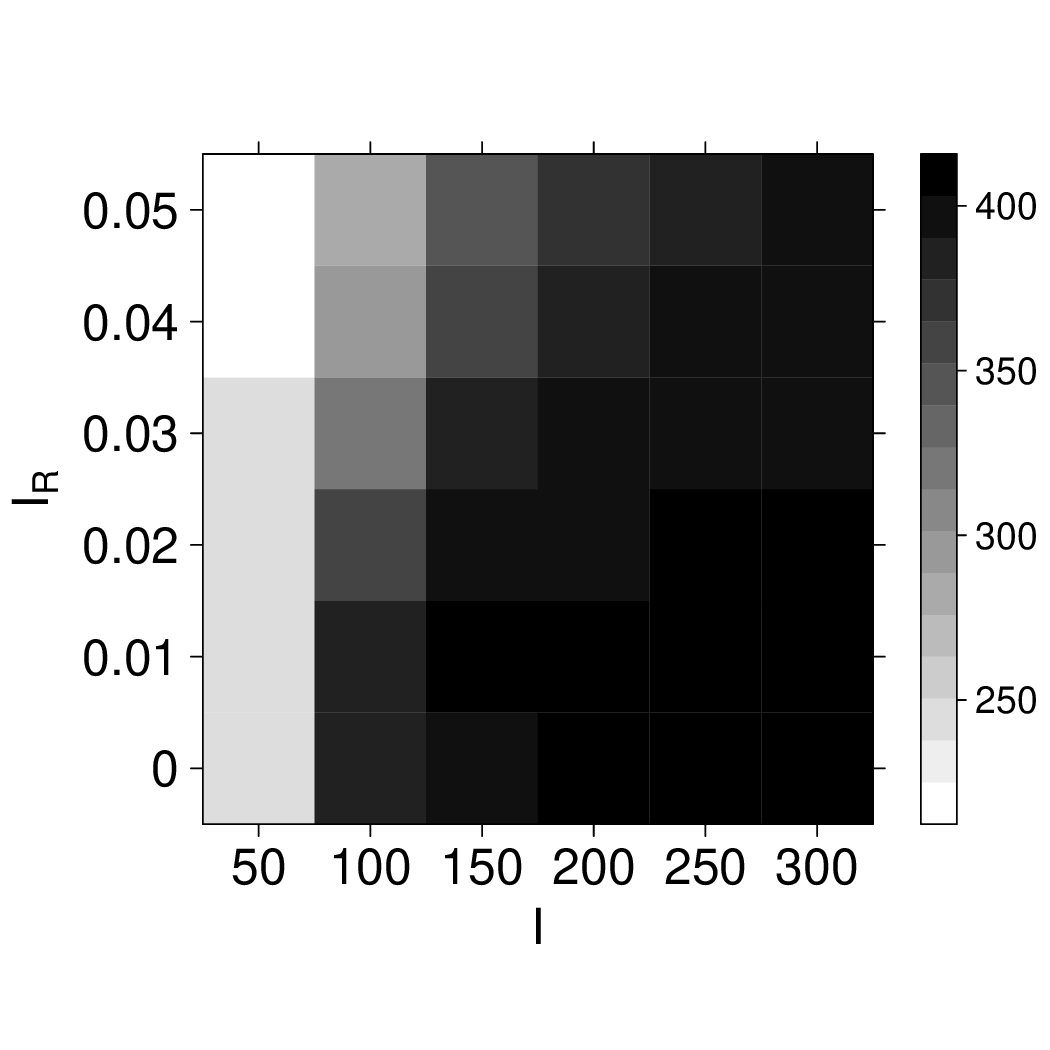}}
  \subfigure[tmc]{\includegraphics[height=0.16\textheight]{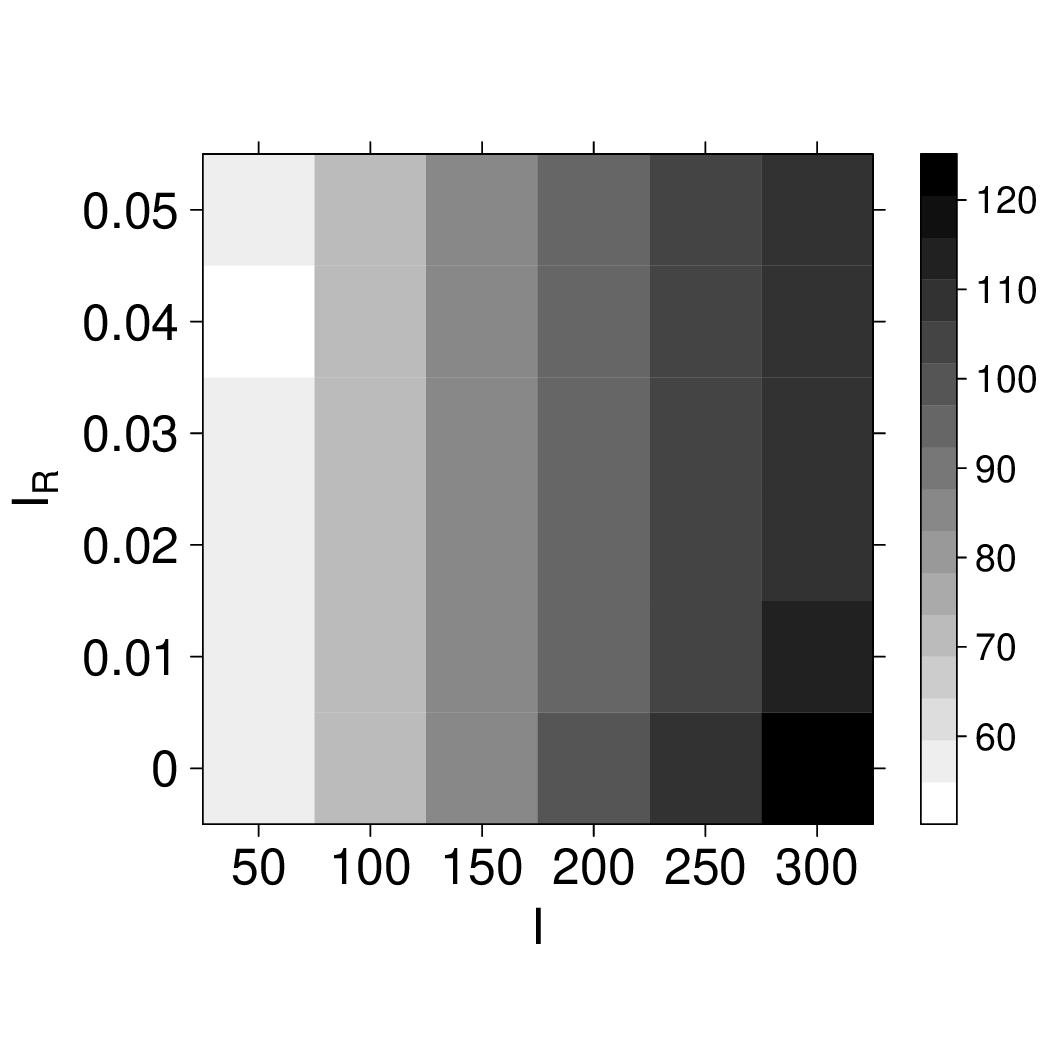}}
  \caption{Average length of the final approximate reducts using $I$ from 50 to 300 and $I_R$ from 0 to 0.05 for the multi-label data sets.}
  \label{fig:aredlen_ml}
\end{figure}

\subsection{Experiments on large-scale data sets}

To further validate the effectiveness of the new method at handling large-scale data sets, two large-scale data sets from UCI and two multi-label data sets, ``imdbf'' and ``bookmarks'' were used. The imdbf and bookmarks data sets were obtained from Meka and Mulan repositories, respectively. Although the numbers of objects in the two multi-label data sets were relatively small, these two data sets have more than one thousand attributes. Accordingly, we used them as examples of large-scale data sets. The continuous attributes in the covtype data set was discretized using MDLP. The basic information of these four data sets are summarized in Table \ref{tab:large_dataset}. For all data sets, $I$ was set to 50, 100, 150, 200, 250, and 300 to find an approximate reduct. Meanwhile, because the covtype and multi-label data sets were inconsistent, $I_R$ was set to 0, 0.01, 0.02, 0.03, 0.04, and 0.05 respectively. Finally, we note that as the target data set size was very large, the calculation of the discernibility quality was time consuming. Accordingly, $10,000$ random objects were selected from the data set to calculate the approximate discernibility quality of the final reduct. In addition, each selected object was classified into the positive region or boundary region if it was in the positive region or boundary region of the original decision table. The first approximate reducts of $1,000$ runs using $I=150$ and $I_R=0.02$ for the three inconsistent data sets and $I=300$ for the poker hand testing data set are shown in Table~\ref{tab:largeresults}.

\begin{table}[!htbp]
  \centering
  \begin{tabular}{crrr}
    \hline
    Data set & Is consistent & $|U|$ & $|C|$\\
    \hline
    bookmarks & No & 87854 & 2150 \\
    imdbf & No & 120919 & 1001 \\
    covtype & No & 581012 & 54 \\
    poker hand testing & Yes & 1000000 & 10 \\
    \hline
  \end{tabular}
  \caption{Characteristics of the four large-scale data sets.}
  \label{tab:large_dataset}
\end{table}

\begin{table}[!htbp]
  \centering
  \begin{tabular}{lrrrr}
    \hline
    Data set & Reduct & Approximate & Time used (s) & E-FSA (s)\\
             &        & discernibility & &\\
             &        & quality & &\\
    \hline
    bookmarks & 372 attributes & 0.999939 & 116 & $>$ 10 days \\
    imdbf & 142 attributes & 0.999912 & 16 & 232277 \\
    \multirow{6}{*}{covtype} & 1,4,5,6,7,8,9,10,11, & \multirow{6}{*}{1} & \multirow{6}{*}{15.44} & \multirow{6}{*}{308} \\
    & 13,15,16,17,18,19, & & & \\
    & 20,23,24,25,26,27, & & & \\
    & 30,31,32,33,34,35, & & & \\
    & 36,37,38,43,44,45, & & &\\
    & 46,47,48,49,52,53,54 & & &\\
    poker hand testing & 2,3,4,6,7,8,9 & 0.99999 & 0.8929 & 40 \\
    \hline
  \end{tabular}
  \caption{Final reduct using the proposed algorithm and E-FSA for the four large-scale data sets.}
  \label{tab:largeresults}
\end{table}

From Table \ref{tab:largeresults}, it can be seen that the proposed method can find approximate reducts with high discernibility quality for large-scale data sets within an acceptable time compared with the E-FSA method. Figure~\ref{fig:I_acc_each_large} shows the relation between the approximate discernibility quality of all final reducts and $I$ for the four large-scale data sets. It can be seen that all the reducts' approximate discernibility qualities were larger than expected. Meanwhile, Figure~\ref{fig:large_cov} shows the average discernibility qualities, search time and lengths of the reducts found using different $I$ and $I_R$ for the three inconsistent large-scale data sets. Patterns in the results of smaller-scale data sets were also found in the large-scale data sets. The experiments on the four large-scale data sets indicate that the proposed method is suitable for feature selection tasks of large-scale data sets.

\begin{figure}[htbp]
  \centering
  \subfigure[poker hand testing]{
    \includegraphics[width=0.4\textwidth]{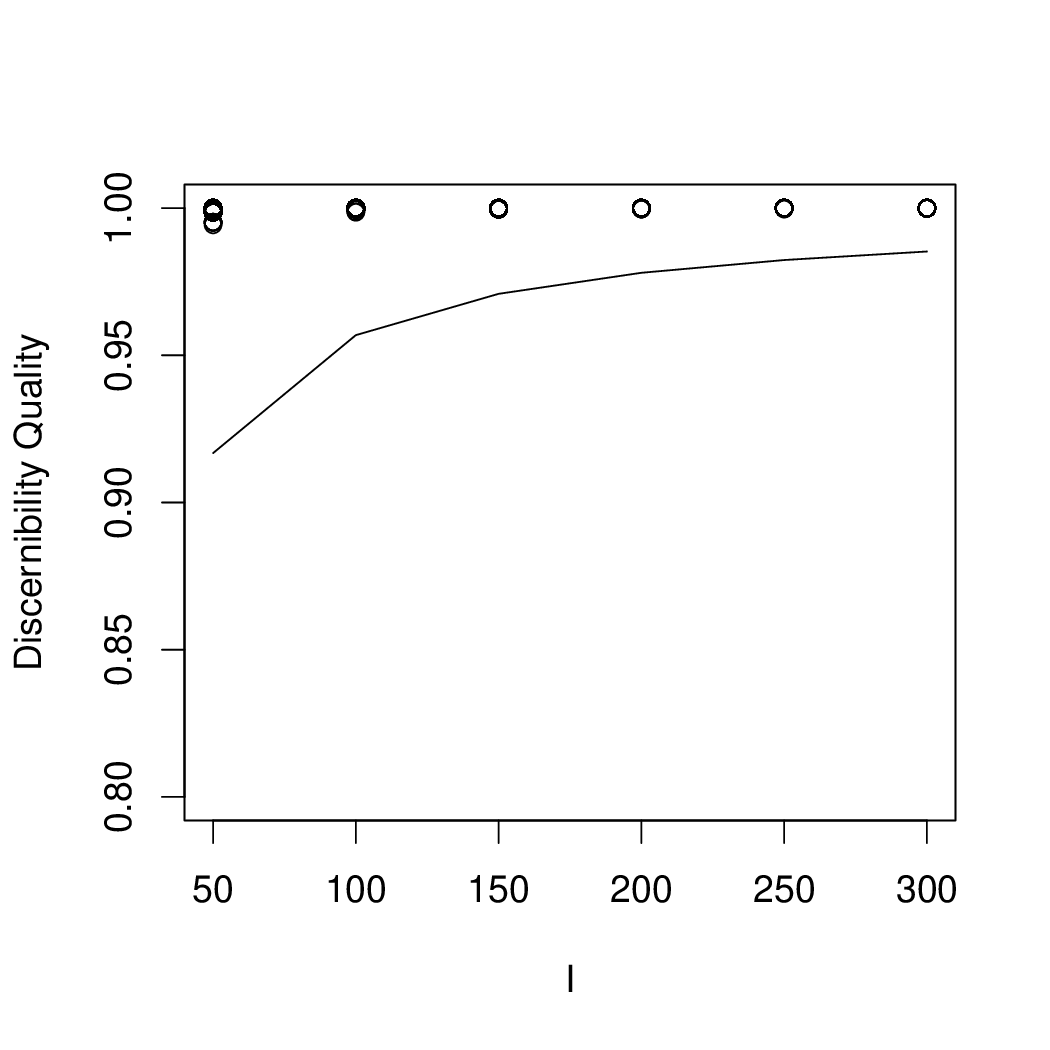}
  }
  \subfigure[covtype]{
    \includegraphics[width=0.4\textwidth]{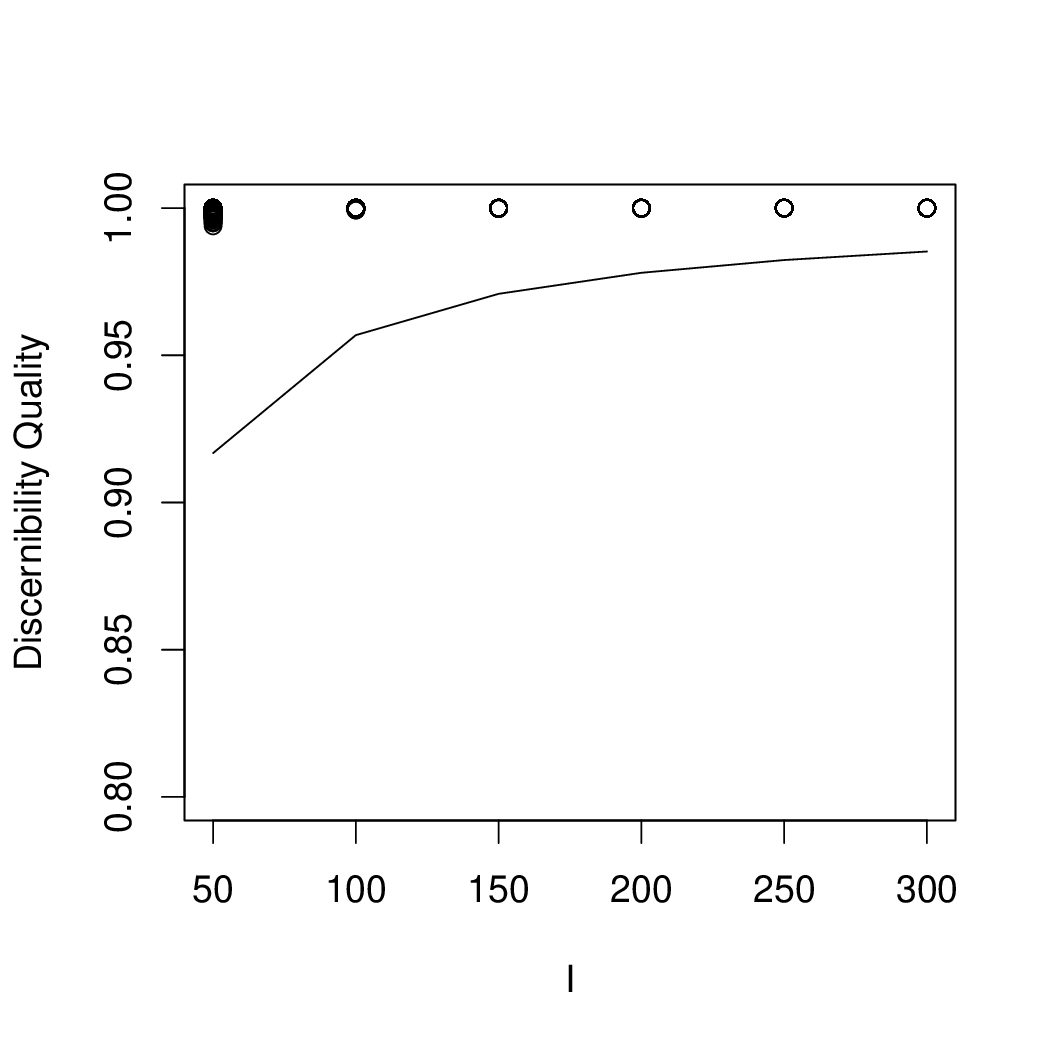}
  }
  \subfigure[bookmarks]{
    \includegraphics[width=0.4\textwidth]{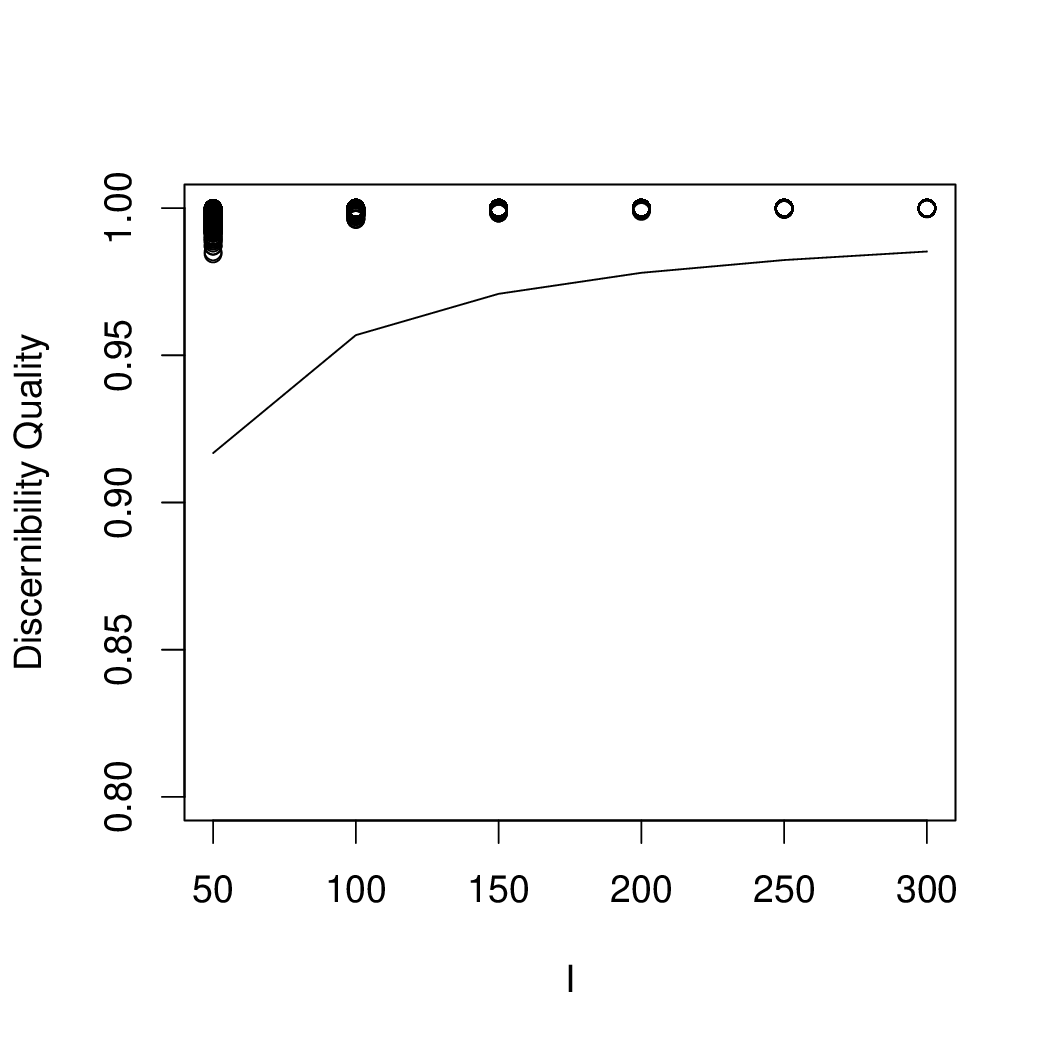}
  }
  \subfigure[imdbf]{
    \includegraphics[width=0.4\textwidth]{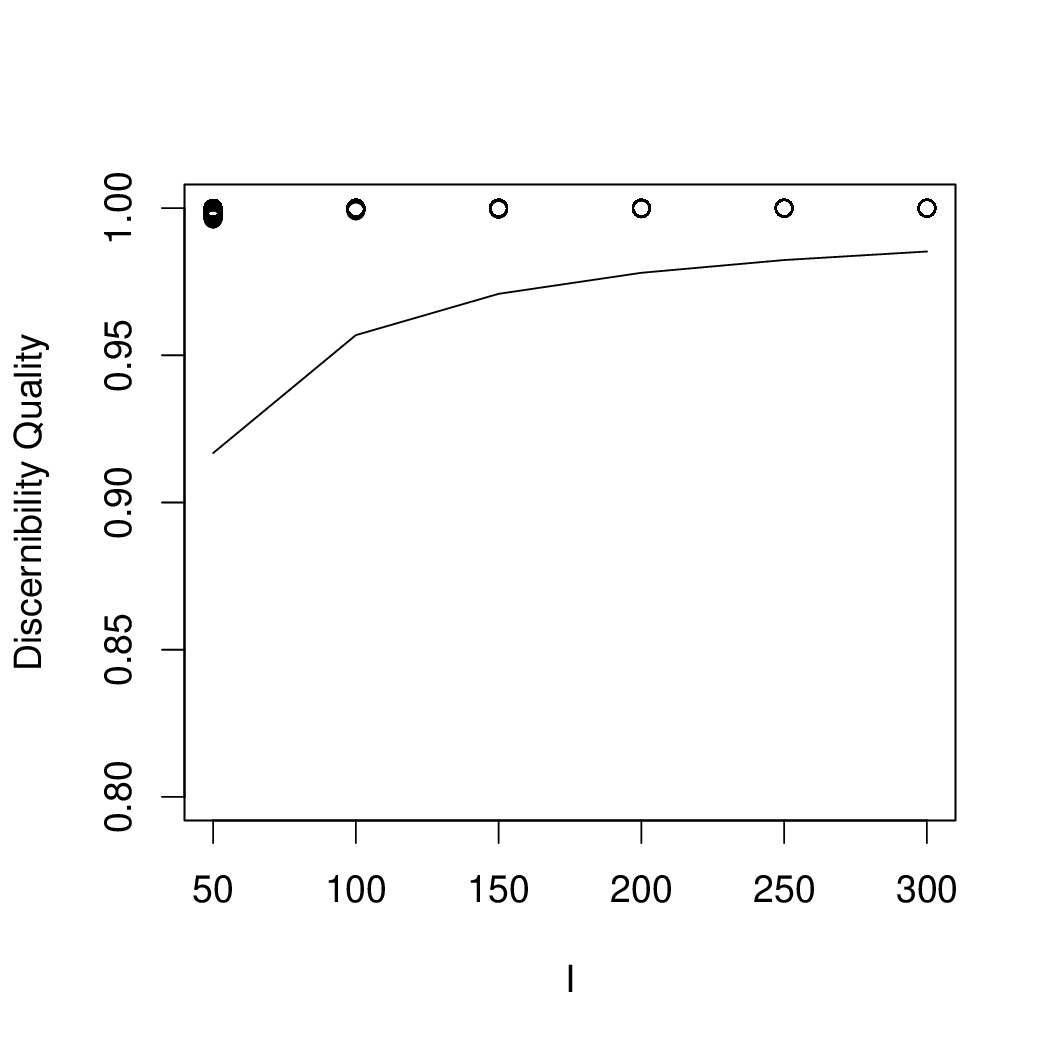}
  }
  \caption{Relation between the discernibility quality of the final approximate reduct and parameter $I$ for the four large-scale data sets.}
  \label{fig:I_acc_each_large}
\end{figure}

\begin{figure}[htbp]
  \subfigure[Discernibility quality of covtype]{
    \includegraphics[trim=0 28 0 28,height=0.158\textheight]{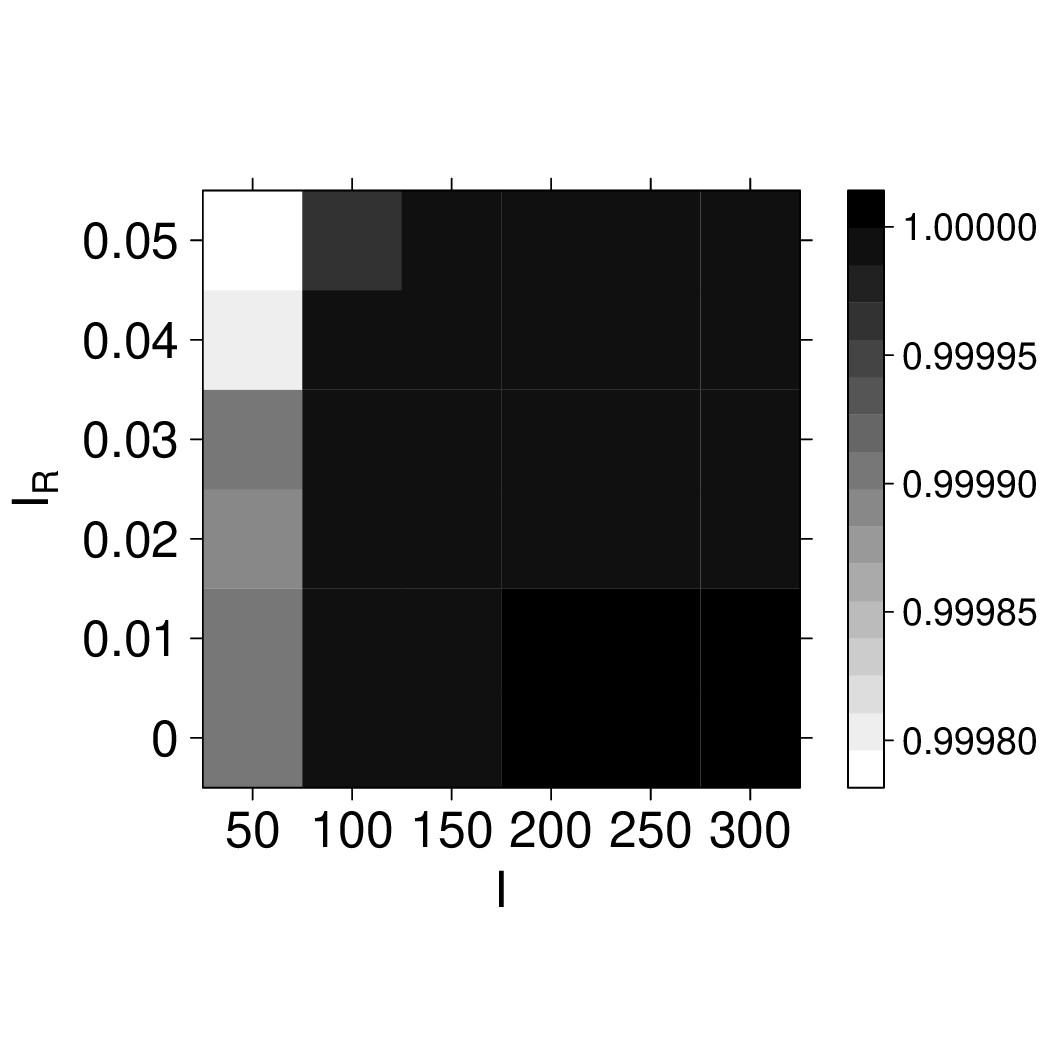}}
  \subfigure[search time for covtype]{
    \includegraphics[trim=0 4 0 4,height=0.16\textheight]{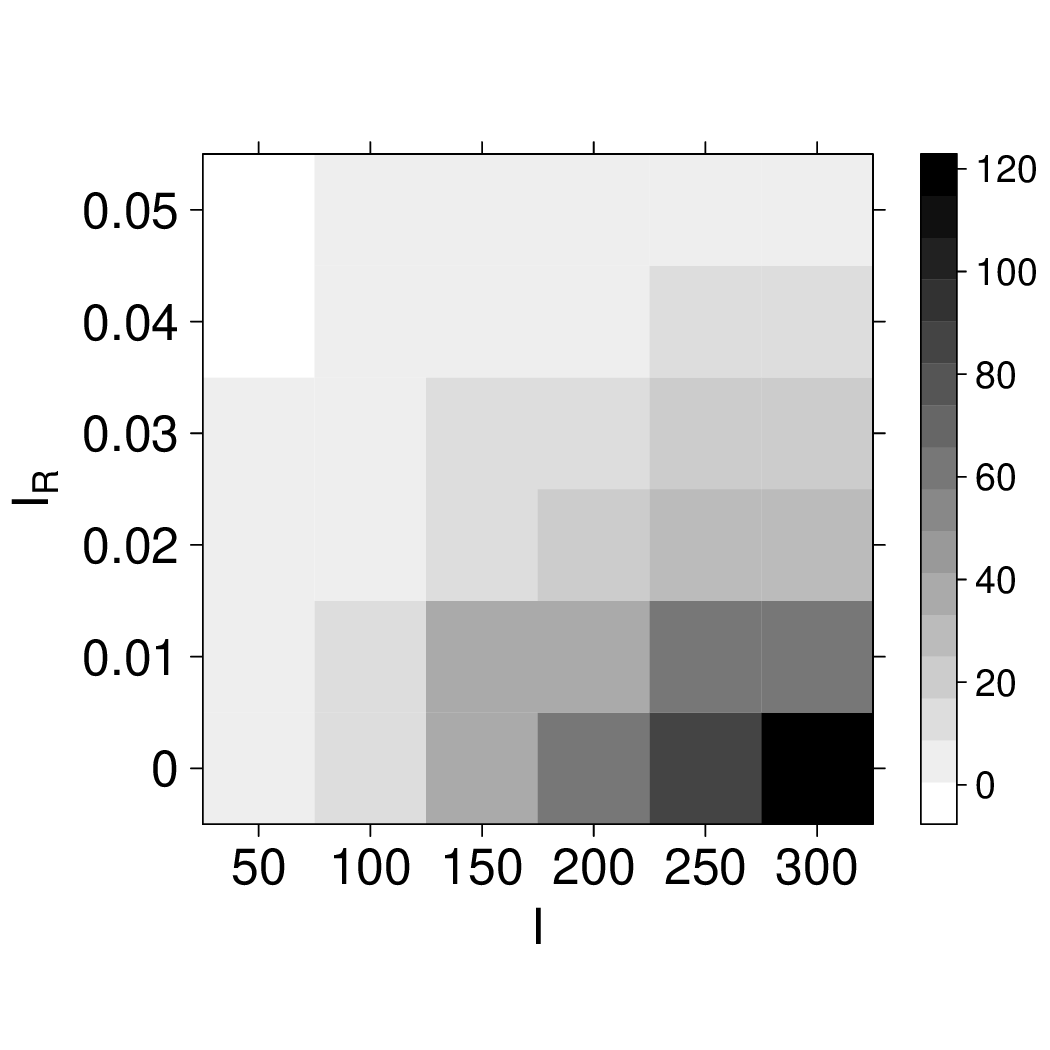}}
  \subfigure[Reduct length of covtype]{
    \includegraphics[height=0.16\textheight]{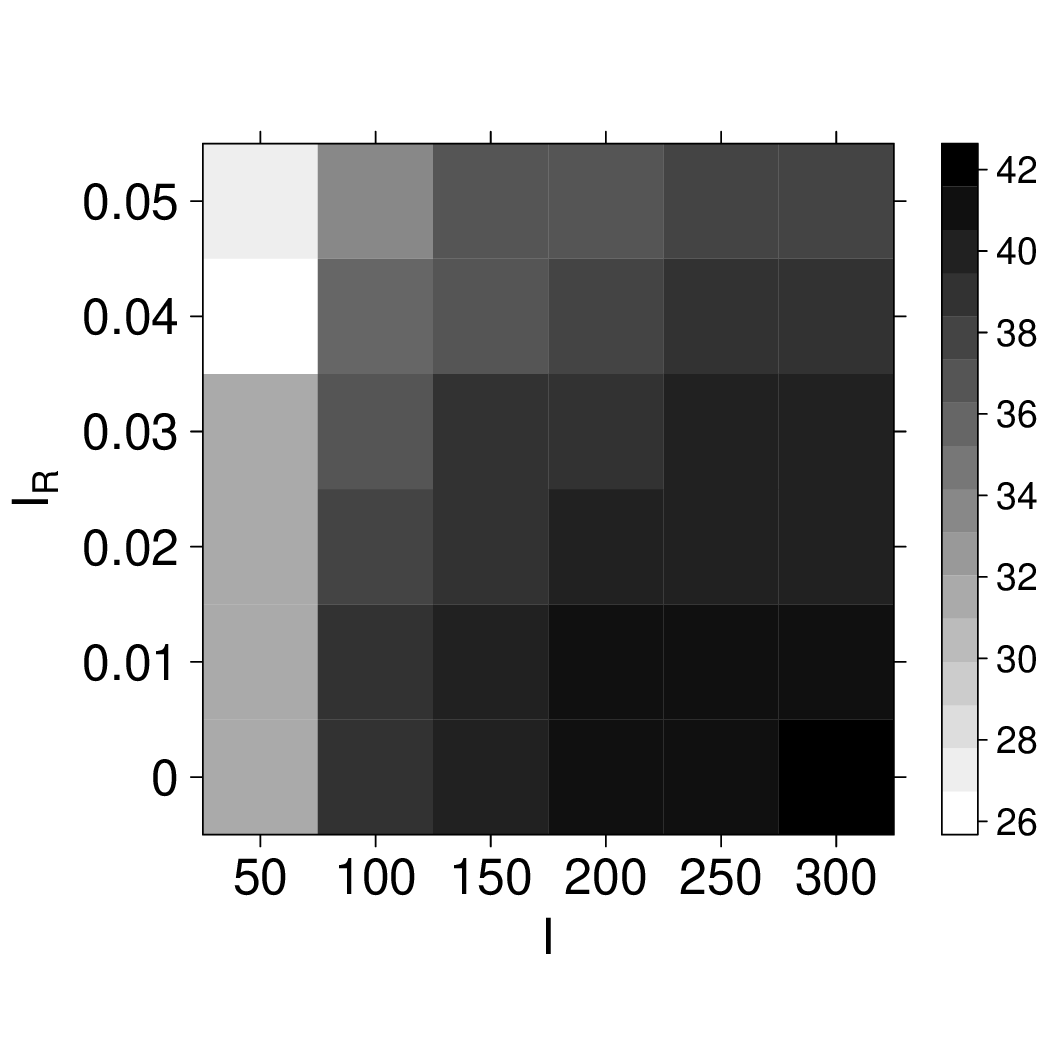}}\\
  \subfigure[Discernibility quality of bookmarks]{
    \includegraphics[trim=0 28 0 28,height=0.158\textheight]{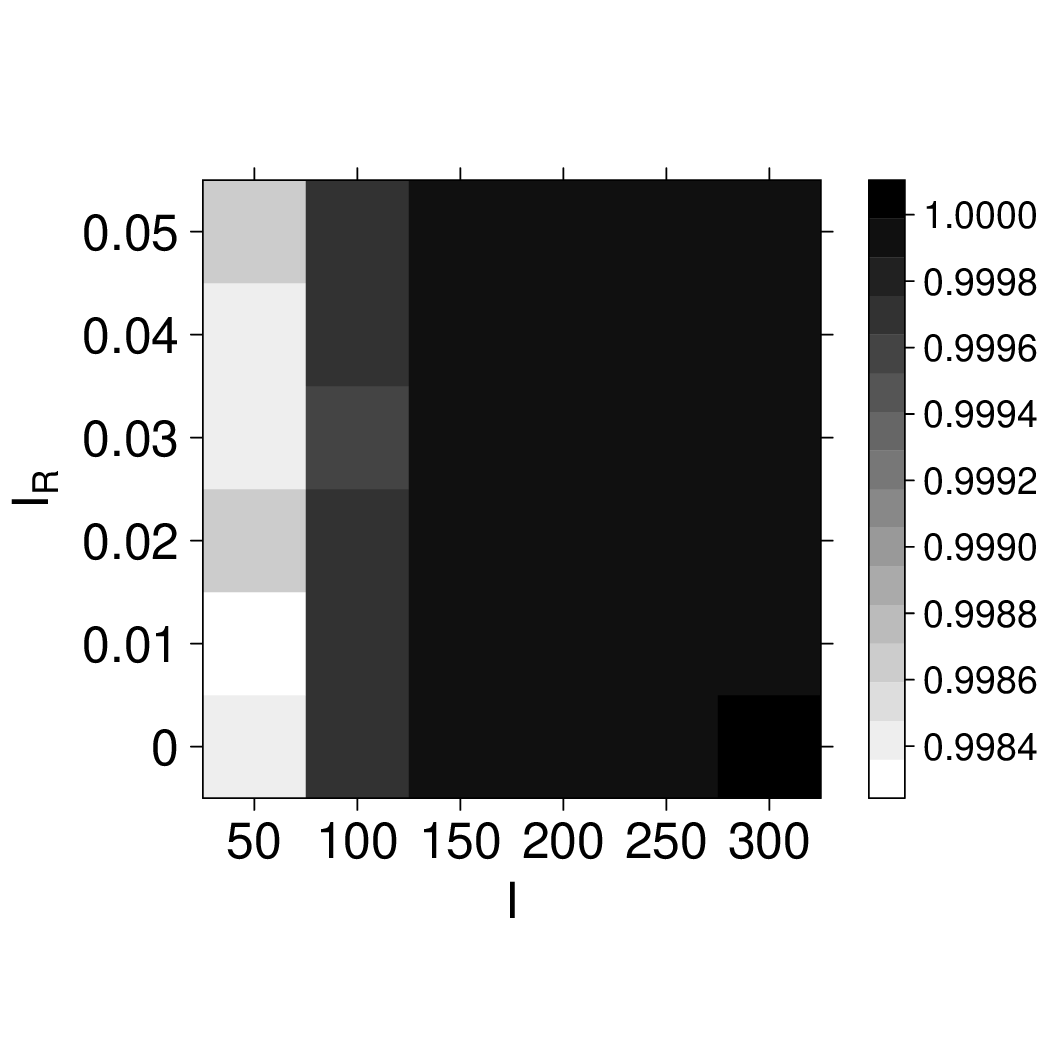}}
  \subfigure[search time for bookmarks]{
    \includegraphics[trim=0 4 0 4,height=0.16\textheight]{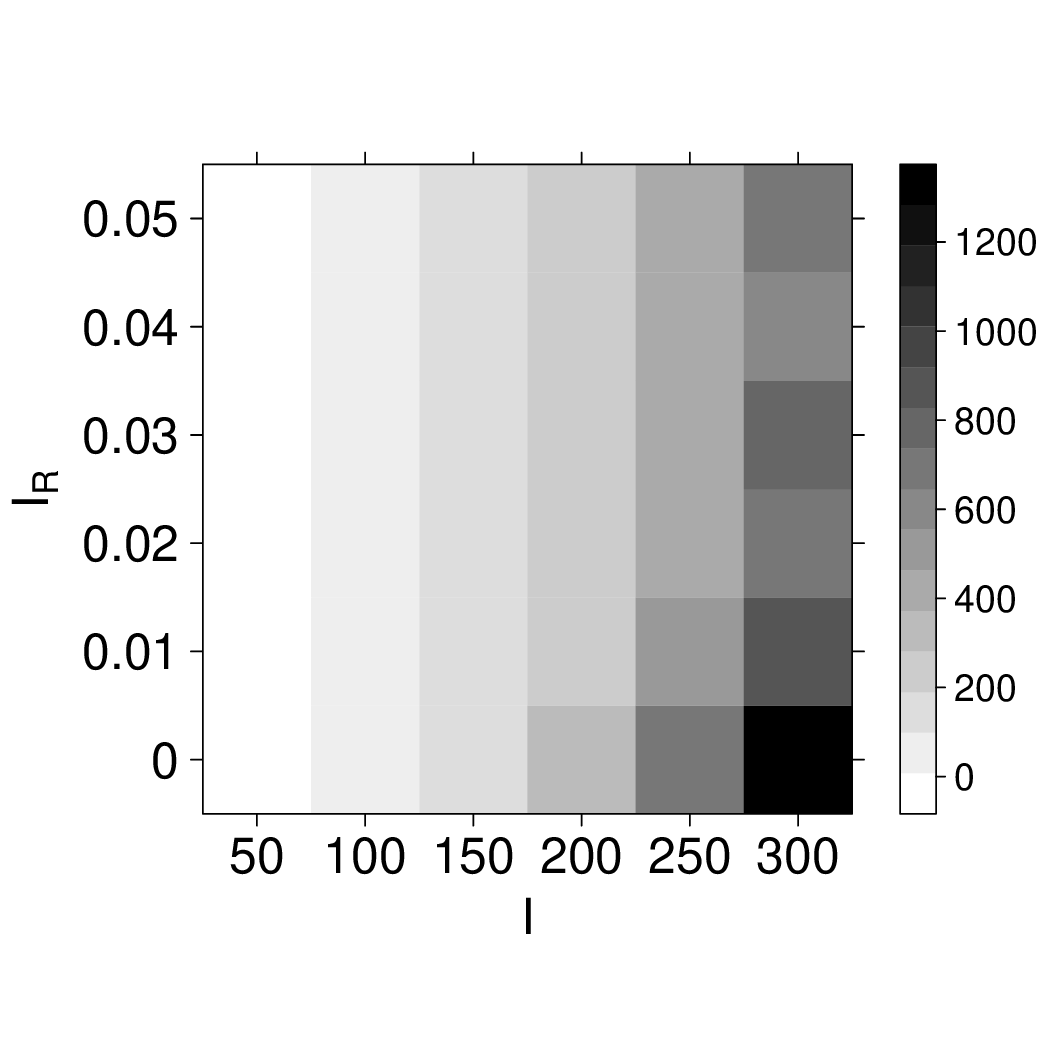}}
  \subfigure[Reduct length of bookmarks]{
    \includegraphics[height=0.16\textheight]{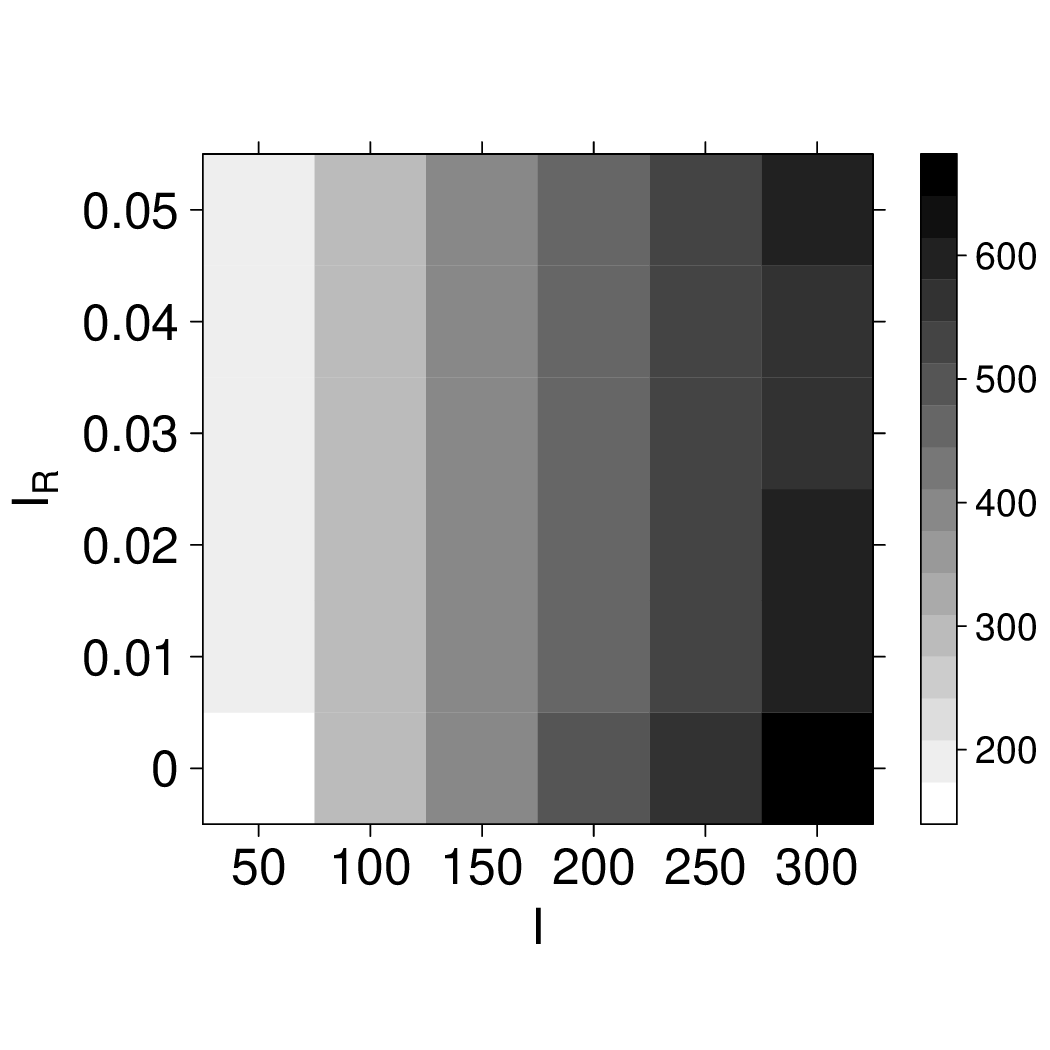}}\\
  \subfigure[Discernibility quality of imdbf]{
    \includegraphics[trim=0 28 0 28,height=0.158\textheight]{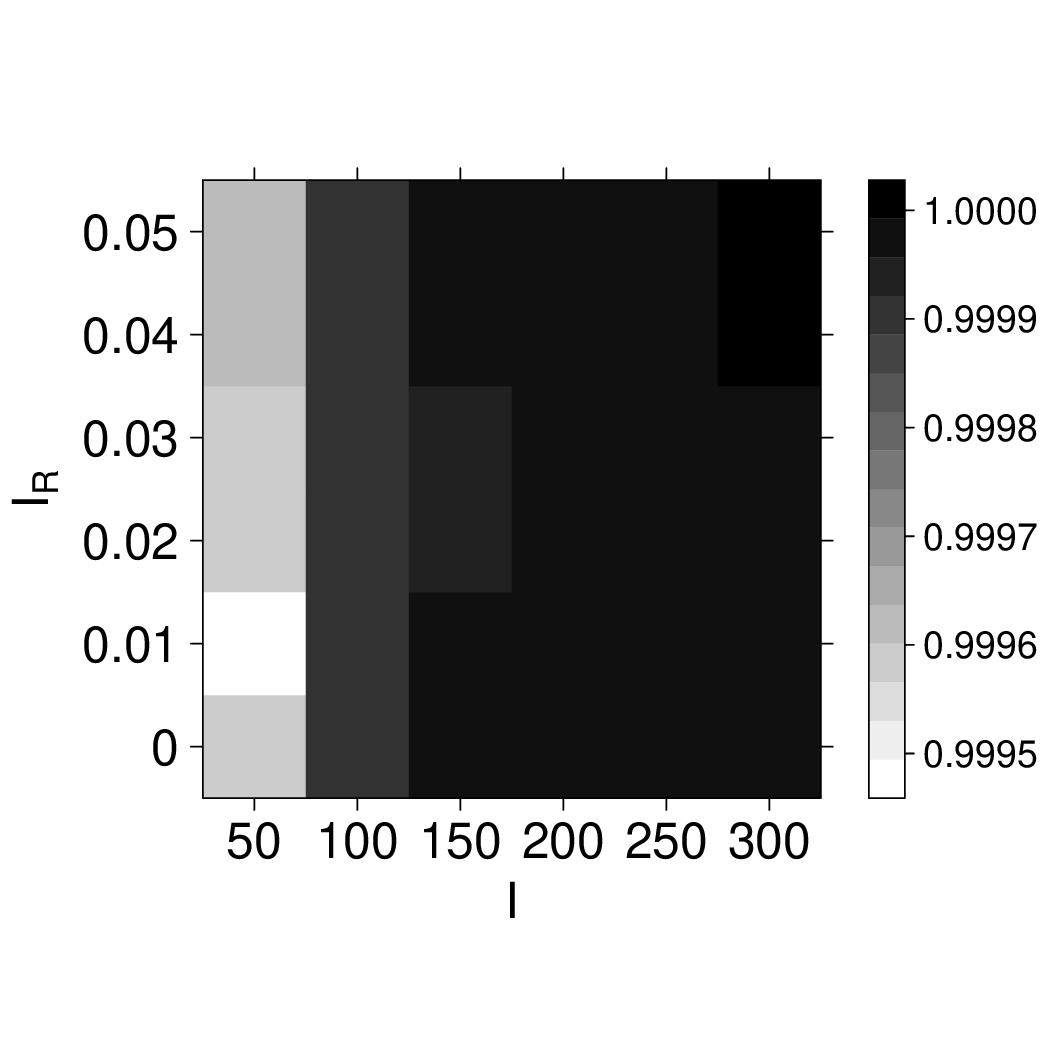}}
  \subfigure[search time for imdbf]{
    \includegraphics[trim=0 4 0 4,height=0.16\textheight]{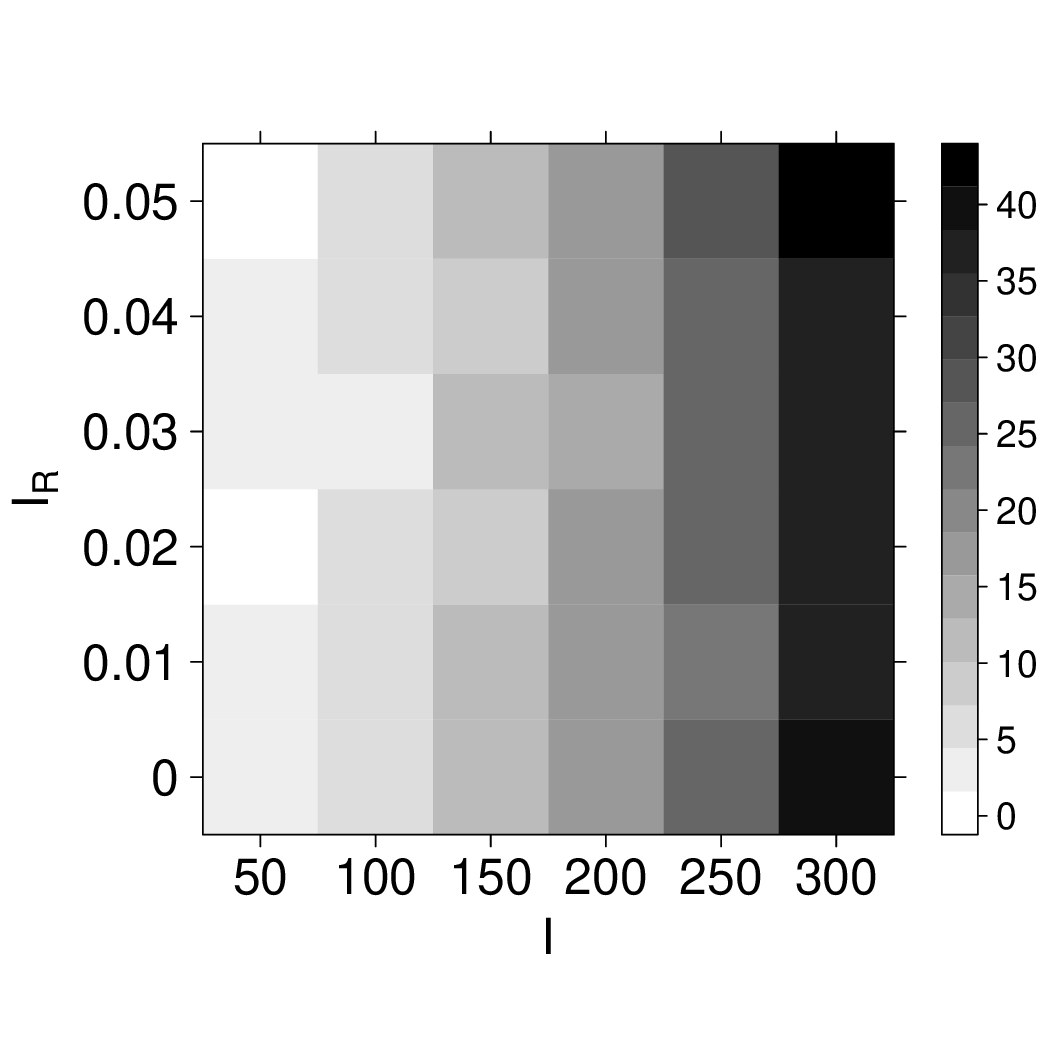}}
  \subfigure[Reduct length of imdbf]{
    \includegraphics[height=0.16\textheight]{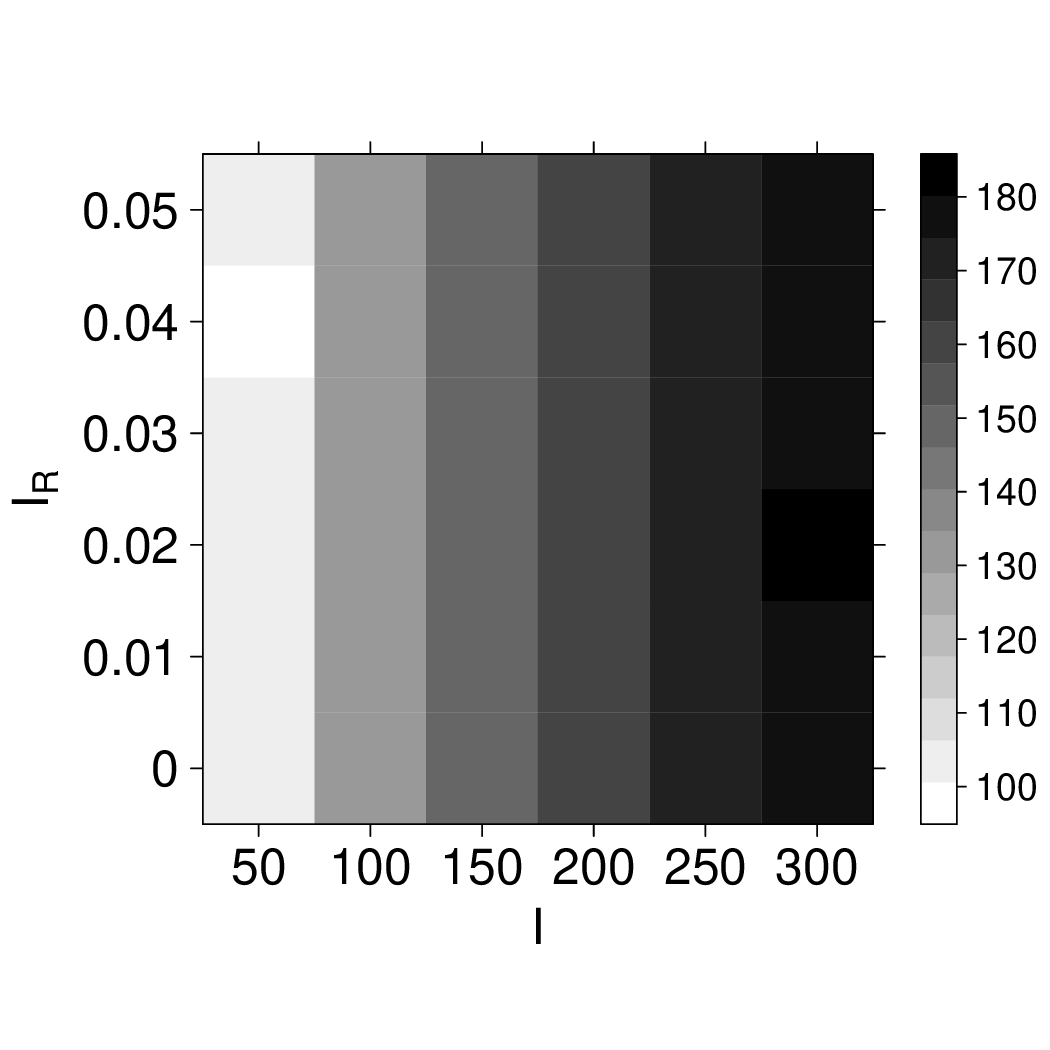}}
  \caption{Average approximate discernibility quality, search time, and number of condition attributes of the $1,000$ final approximate reducts of the three inconsistent large-scale data sets, where $I$ is varied from 50 to 300 and $I_R$ is varied from 0 to 0.05.}
  \label{fig:large_cov}
\end{figure}

According to \citet{Liang2012}, the E-FSA method takes much less time to find the reduct in large-scale data sets than other methods. Our experiments indicate that our method is much faster than theirs, and the discernibility quality of the final approximate reduct is very close to one. For example, the approximate discernibility qualities of the final reducts using the proposed method for the four data sets were all larger than 0.9999. The E-FSA spent $232,277$s (approximately 64h), 308s and 40s to find a reduct for the imdbf, covtype, and poker hand testing data sets, respectively. The new method only spent 16s, 15.44s, and less than one second to find an approximate reduct with a discernibility quality that was close to one for these three data sets. Meanwhile, for the data set bookmarks, the E-FSA method ran more than 10 days and still could not find the reduct. Our method only took 116s to obtain an approximate reduct for which discernibility quality was 0.000021 smaller than one. From these results, we conclude that our method is effective and efficient at finding approximate reducts with high discernibility quality.

\subsection{Analysis of the running time of the proposed method}


For consistent decision tables, the time needed for data sets of different sizes differed little. Table \ref{tab:time_consistent} shows the average running time and number of random objects used for all five consistent data sets for $I=300$. From Table~\ref{tab:time_consistent}, it can be seen that the size of the universe slightly influenced the average search time. However, the number of attributes and average number of random objects were different for the five data sets. It is easy to determing that data sets with more attributes took more time to find the approximate reduct. This is because that additional time is needed to search for for the attributes that can discern all object pairs in $EPD^S$ for data sets with large number of attributes. One may argue that the poker hand training and the poker hand testing data sets have the same number of attributes and almost the same average number of random objects. However, to find a reduct in the poker hand training data set, the proposed method  spent about $2/3$ the time it spent finding a reduct in the poker hand testing data set. This is because the algorithm spent more time drawing random objects from the poker hand testing data set than it did from the poker hand training data set. If indexing technology had not been used in the database, the search time for the poker hand testing data sets would be much larger than for the poker hand training data set.

Another important factor affecting the search time was the average number of random objects used to find reducts. For all consistent data sets, the average number of random objects used to find reducts was almost the same, except the connect\_4 data set. The connect\_4 data set used many more random objects than other data sets. Moreover, the number of attributes of the connect\_4 data set were much larger than other data sets. Accordingly, the time taken to find reduct for connect\_4 was much longer than other consistent UCI data sets.

\begin{table}[!htbp]
  \centering
  \begin{tabular}{crrrr}
    \hline
    data set name & $|U|$ & $|C|$ & average time & average number of \\
    & & & & random objects \\
    \hline
    mushroom & 8124 & 22 & 0.1545 & 504.15 \\
    nursery & 12960 & 8 & 0.0815 & 552.7 \\
    poker hand training & 25010 & 10 & 0.1001 & 565.09 \\
    connect\_4 & 67557 & 42 & 1.158 & 1135.54 \\
    poker hand testing & 1000000 & 10 & 0.1406 & 550.439 \\
    \hline
  \end{tabular}
  \caption{Average search time and number of random objects used for the five consistent data sets when $I=300$.}
  \label{tab:time_consistent}
\end{table}

For the inconsistent decision tables, the covtype data set was used as an example to analyze which factors influenced the search time. The proposed reduct finding method was performed 100 times on five new data sets that were structure-reformed version of the covtype data set using $I=300$ and $I_R=0$. Structure here refers to the numbers of equivalent classes in the positive region (NOP) and border (NOB) as well as the ratio of objects in the positive region (ROP). The results are shown in Table~\ref{tab:time_inconsistent}.

\begin{table}[!htbp]
  \centering
  \begin{tabular}{crrrrr}
    \hline
    Strategy & NOP & NOB & ROP & average time (s) & average number of \\
    & & & & & random objects \\
    \hline
    original covtype & 3871 & 2002 & 0.187 & 116 & 25840.71 \\
    ten-times & 3871 & 2002 & 0.187 & 126.33 & 26001.96 \\
    half border & 3871 & 1001 & 0.187 & 42 & 14948.77 \\
    half pos & 1935 & 2002 & 0.187 & 107.77 & 26150.71 \\
    minimal pos & 3871 & 2002 & 0.007 & 71.39 & 26650.75 \\
    minimal border & 3871 & 2002 & 0.992 & 1.183 & 1401.48 \\
    \hline
  \end{tabular}
  \caption{Average search time and number of random objects used for different structure reformed covtype data sets when $I=300$.}
  \label{tab:time_inconsistent}
\end{table}

The first reformed data set is called ``ten-times,'' and is constructed by enlarging the original covtype data set 10 times in proportion. This data set had exactly the same structure with that of the original data, i.e., NOP, NOB, and ROP were not changed. From the second row of Table~\ref{tab:time_inconsistent}, it can be seen that the average number of objects used for ``ten-times'' only increased 0.6\% compared with the original data set. This small difference may be caused by randomness. However, as the time needed to retrieve one object from $5,800,000$ records is definitely larger than that needed to retrieve an objects from $580,000$ records. The ``ten-times'' data set needed slightly more time (about 10s) to find approximate reducts.

The second reformed data set is called ``half border,'' and is constructed by dropping half the equivalent classes on the boundary region and keeping the total number of equivalent classes and the ROP unchanged. The removed objects in the boundary region were replaced by other random objects from the remaining equivalent classes on the boundary region. From the third line of Table~\ref{tab:time_inconsistent}, it can be seen that the average search time and number of random objects used greatly decreased. This means that a smaller number of equivalent classes in the boundary region leads to less time spent finding approximate reducts.

The third reformed data set is referred to as ``half pos,'' and is constructed by randomly removing half of the equivalent classes in the positive region and replacing the removed objects using random objects from the remaining objects in the positive region. From the fourth line of Table~\ref{tab:time_inconsistent}, it can be seen that the search time was almost the same as that of the original covtype data set. This indicates that the number of equivalent classes in the positive region only slightly influenced the time spent finding approximate reducts.

The fourth reformed data is called ``minimal pos,'' and is constructed by preserving only one instance for each equivalent class of the positive region. The objects removed from the positive region were replaced by random objects from the boundary region. From the fifth line of Table~\ref{tab:time_inconsistent}, it can be seen that the search time was greatly reduced while the number of random objects was almost unchanged. In this data set, the POS-table only had a small number of objects in the positive region, as the positive region is small. Accordingly, the number of object pairs that should be distinguished was smaller and checking whether new attributes were needed took less time than for the original data set.

The fifth reformed data set is called ``minimal border,'' and is constructed by preserving only one instance for each equivalent class of each decision value of the boundary region. The objects removed from the positive region were replaced by random objects from the positive region. From the sixth line of Table~\ref{tab:time_inconsistent}, it can be seen that the average search time and number of objects used decreased to about 1s and 1,041, respectively. As the number of objects in the border was very small, it was unlikely that the POS-table contained objects in the boundary region of the original decision table. This greatly reduced the time needed to find a random POS-table. Meanwhile, Theoreom~\ref{theo:10} ensures that most object pairs in $EPD^S_C$ can be distinguished using the final approximate reduct as most objects were in the positive region.

All the experiments on the five reformed data sets indicate that it is not the total number of objects but the structure of the data set, i.e. the NOP, NOB, and ROP, that greatly influences the time needed by the proposed method.

\section{Conclusion}
This paper combines the power of rough set theory and statistics to address the challenge of selecting important feature subsets in massive data sets using a personal computer. The experimental results show that only a few seconds, at most a few minutes, are needed to select a reliable feature subset that can discern the majority of object pairs that should be distinguished in a massive data set. Our solution provides a reliable and fast way to select features for massive data sets. Compared with traditional methods, our method has two main advantages.

\begin{enumerate}[(1)]
\item The time needed for the proposed method is very limited and is independent of the number of objects in the data set. According to our experiments, the proposed method only needs several seconds or several minutes to obtain a sufficiently good reduct. Our results show that the time needed for the proposed method only relates to the number of attributes and the structure of the target data set. The scale of the data set only influences the time needed to retrieve random objects. As only a small fraction of the data set is used to find the approximate reduct, the memory needed is also very small. Accordingly, the proposed method can be applied to the analysis of massive data.

\item The lower boundary of the discernibility quality of the feature set to be found can be pre-defined in terms of requirements. In terms of Equation (\ref{eq:detI}), parameter $I$ can be set in terms of the expected discernibility quality. Meanwhile, Theorem \ref{theo:10} and the binomial test for the candidate approximate reduct ensure that the final reduct has significantly larger discernibility quality than expected.



\end{enumerate}

This work could be extended in various directions. We plan to migrate the basic idea to data sets that have fuzzy decisions, fuzzy condition attributes, or missing values. This can be accomplished via different extensions of rough set theory, such as rough fuzzy sets, fuzzy rough sets and tolerance relation-based rough set. Meanwhile, the experimental results show that the discernibility quality of the final reduct is much larger than estimated as the $I$ new random objects introduce more than $I$ $Red'$-discernible object pairs. Accordingly, we also plan to further study the lower boundary of the final reduct discernibility quality to obtain a more accurate estimate.

\section*{Acknowledgments}

This work was supported by the National Natural Science Foundation (Nos. 41101440, 61272095, and 61303107), the State Key Program of National Natural Science of China (No. 61432011), Shanxi Scholarship Council of China (2013-014), Project Supported by National Science and Technology (No. 2012BAH33B01), the Youth Foundation of Shanxi Province (No. 2012021015-1) and China Postdoctoral Science Foundation (No. 2013M530891).

\bibliographystyle{elsarticle-harv}
\bibliography{ref}
\end{CJK*}
\end{document}